\documentclass[letterpaper]{article} 
\usepackage{aaai2027}  
\usepackage[hyphens]{url}  
\usepackage{graphicx} 
\usepackage{natbib}  
\usepackage{caption} 
\usepackage{algorithm}
\usepackage{algorithmic}

\usepackage{newfloat}
\usepackage{listings}
\usepackage{amsmath} 
\usepackage{amssymb}
\usepackage{multirow}
\DeclareCaptionStyle{ruled}{labelfont=normalfont,labelsep=colon,strut=off} 
\floatstyle{ruled}
\newfloat{listing}{tb}{lst}{}
\floatname{listing}{Listing}

\usepackage{amsmath}
\usepackage{amsthm}
\usepackage{algorithm}
\usepackage{algorithmic}
\newtheorem{theorem}{Theorem}[section] 
\newtheorem{lemma}[theorem]{Lemma} 
\newtheorem{proposition}[theorem]{Proposition}
\newtheorem{corollary}[theorem]{Corollary}
\newtheorem{definition}[theorem]{Definition}
\usepackage{newfloat}
\usepackage{listings}
\usepackage{amsmath} 
\usepackage{amssymb}
\usepackage{multirow}

\usepackage{booktabs}

\title{FairDiff: Mitigating the Self-Reinforcing Matthew Effect in \\Diffusion Recommender Models}
\author{
    Song-Li Wu\textsuperscript{\rm 1},
    Xianquan Wang\textsuperscript{\rm 3},
    Zhaocheng Du\corresponding\textsuperscript{\rm 2},
    Weinan Gan\textsuperscript{\rm 2},
    Jingyi Wang\textsuperscript{\rm 1}
}
\affiliations{
    \textsuperscript{\rm 1}Tsinghua University,
    \textsuperscript{\rm 2}Huawei Noah’s Ark Lab,
    \textsuperscript{\rm 3}University of Science and Technology of China
    
}

\begin{document}

\maketitle

\begin{abstract}
While the "Matthew Effect" and filter bubbles are widely recognized outcome-level biases in recommender systems, we reveal that Diffusion Recommender Models (DRMs) uniquely compound this issue through their generative dynamics. Rather than merely inheriting data imbalances, DRMs trigger a self-reinforcing amplification of popularity bias. We identify that this phenomenon is driven by two compounding mechanisms. First, while optimization loss is universally dominated by high-frequency items across recommenders, DRMs suffer from a unique structural prior mismatch during generation. Because the forward terminal distribution of long-tailed data deviates significantly from the standard Gaussian prior, reverse sampling trajectories inherently collapse toward high-density popular items, fundamentally suppressing niche item generation.  To dismantle this self-reinforcing loop, we propose FairDiff, a plug-and-play fairness-aware diffusion framework. To overcome the popularity-dominated loss, we introduce Popularity Condition Guidance (PCG). Rather than altering the training objective, PCG acts as an inference-time distributional reweighting mechanism, mathematically reshaping the score-based gradient field to penalize high-popularity regions and guide trajectories toward niche semantics. Furthermore, we design a Semantic Calibration (SC) Module to bridge the prior mismatch, aligning the forward and reverse distributions via one-step optimal transport. Comprehensive evaluations demonstrate that FairDiff achieves state-of-the-art performance while effectively mitigating the self-reinforcing Matthew Effect, highlighting its value as a general framework for DRMs. 
\end{abstract}


\section{Introduction}

Recommender systems rely on sparse, long-tailed, and noisy user–item interactions, posing substantial challenges to traditional discriminative models~\citep{koren2021advances}. Diffusion Recommender Models (DRMs) address these challenges by learning distribution-level representations through a generative paradigm that progressively injects Gaussian noise into observed interactions (forward process) and reconstructs them via a parameterized Markov chain (reverse process). By modeling preference evolution as a stochastic denoising trajectory, DRMs enable robust capture of complex dependencies, promote stable generation, and naturally support the integration of heterogeneous signals. Consequently, DRMs have emerged as a powerful paradigm across a wide range of recommendation scenarios, including next-item generation~\citep{jiang2024diffkg,qin2023diffusion,tomasi2024diffusion}, preference representation learning~\citep{xuan2024diffusion,zhang2024metadiff,zhao2024denoising}, and temporal dynamics modeling~\citep{ma2024plug,li2024recdiff}.  

While the "Matthew Effect" and filter bubbles are widely recognized as outcome-level biases in general recommender systems, we observe that DRMs do not merely inherit this data-driven popularity bias. Instead, they uniquely compound it through their generative dynamics, leading to what we term a self-reinforcing Matthew Effect. As shown in Table~\ref{tab:performance}, Table~\ref{tab:item_user_performance} and appendix, although DRMs achieve superior overall performance compared to existing recommender models, this improvement is largely driven by substantial gains on popular items. In contrast, for niche items, DRMs consistently underperform. This uneven performance distribution indicates that the apparent overall superiority of DRMs comes at the cost of exacerbating the accuracy disparity between popular and niche items, thereby actively reinforcing the Matthew Effect rather than alleviating it.  

\begin{figure*}[htbp]

\centering

\includegraphics[width=0.8\textwidth]{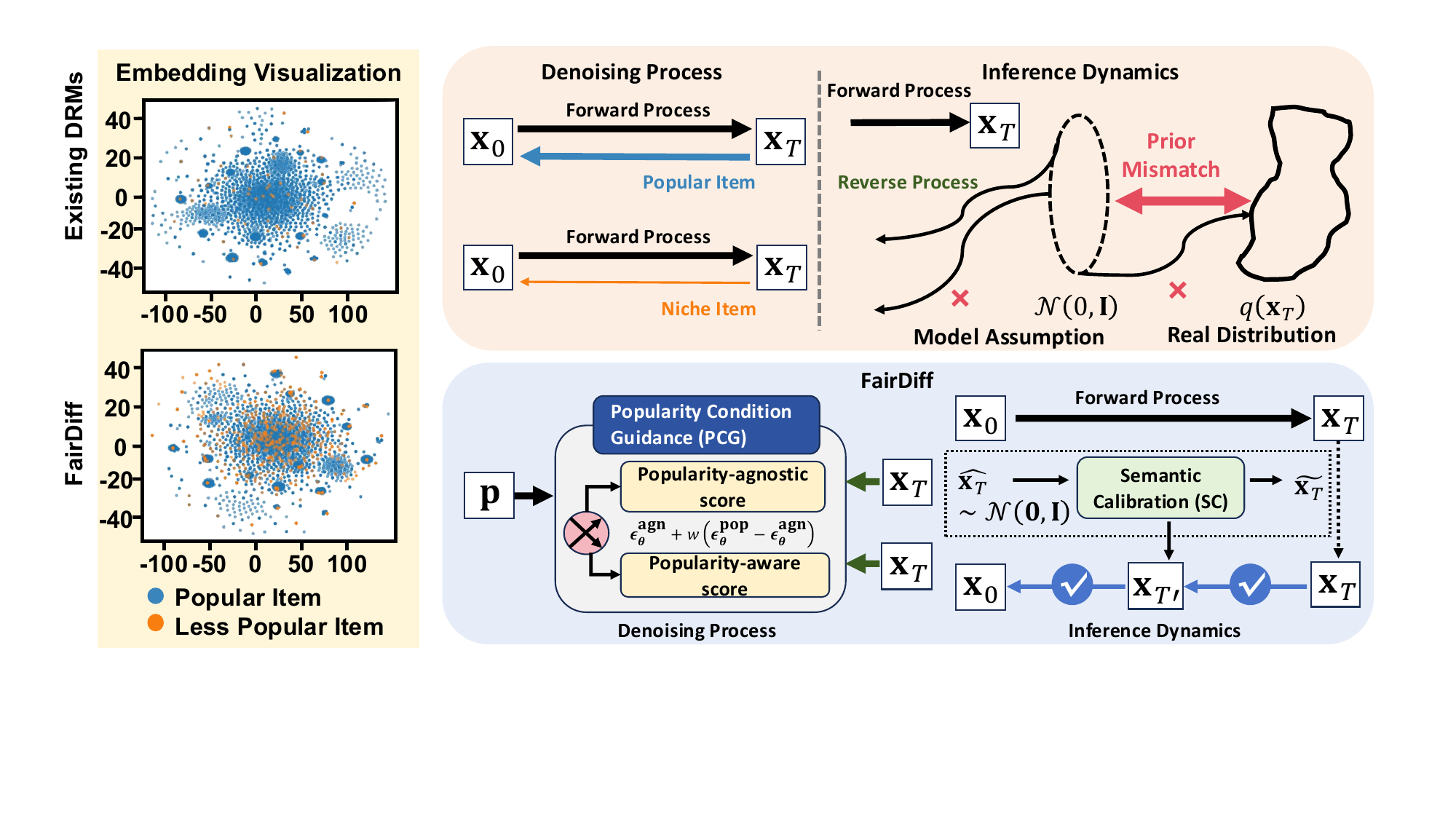}

\caption{Comparison of DRMs and FairDiff.}

\label{tesear}

\end{figure*}

We attribute this self-reinforcing phenomenon in DRMs to two complementary mechanisms that compound across the model's lifecycle, as illustrated in Figure~\ref{tesear}. First, during the diffusion phase, the reconstruction objective is inherently biased under long-tailed distributions. While the optimization loss being dominated by high-frequency items is a universal challenge across most recommendation paradigms, DRMs suffer from a second, unique vulnerability during the generation phase: a structural prior mismatch. Because the forward terminal distribution of long-tailed data deviates significantly from the standard Gaussian prior, reverse sampling trajectories—initiating from standard Gaussian noise—are structurally mismatched. This geometric discrepancy biases the generative path toward high-density modes (popular items) from the very first denoising steps, fundamentally suppressing the diversity required to recommend niche items. Together, these mechanisms form a self-reinforcing loop—encoding bias during diffusion and amplifying it during generation—thereby severely limiting recommendation performance for tail items.  

To address these challenges, we propose FairDiff, a unified, plug-and-play framework consisting of two complementary modules that can be seamlessly integrated into existing diffusion recommender models without altering their architecture or training procedure. First, rather than attempting to alter the universally skewed training objective, we introduce the Popularity Condition Guidance (PCG) module as an inference-time distributional reweighting intervention. PCG mathematically reshapes the score-based gradient field during reverse sampling by leveraging the discrepancy between the model’s outputs under true popularity conditioning and those under a sampled popularity condition. This explicitly penalizes high-popularity regions and redistributes probability mass toward niche items, effectively bypassing the popularity-driven score dominance. Second, to bridge the prior mismatch, we design a Semantic Calibration (SC) module. SC uses a one-step static Optimal Transport (OT) mapping to directly align the reverse initial distribution with the empirical forward terminal distribution. This single-step calibration corrects the structural mismatch, ensuring reverse trajectories are properly initialized to preserve niche semantics.  

Our contributions are summarized as follows:

\begin{itemize}

\item We reveal that Diffusion Recommender Models suffer from a unique, self-reinforcing Matthew Effect, distinctly caused by the composition of optimization bias and a DRM-specific structural prior mismatch.

\item We propose FairDiff, a unified and plug-and-play framework that systematically mitigates popularity bias. Crucially, it bypasses the inherently biased loss landscape via an inference-time distributional reweighting mechanism (PCG) and an optimal transport initialization (SC).

\item We conduct extensive experiments on various representative recommendation tasks, demonstrating that FairDiff achieves state-of-the-art performance and structurally reshapes generation trajectories to favor niche items, without introducing additional computational overhead.

\end{itemize}  

\section{Related Work}

\subsection{Diffusion Recommender Models}
Diffusion models have been widely applied to recommendation tasks, including next-item generation~\citep{tomasi2024diffusion, qin2023diffusion}, user preference modeling~\citep{xuan2024diffusion, zhang2024metadiff, zhao2024denoising}, and temporal dynamics modeling~\citep{ma2024plug}. Recent advancements also integrate auxiliary signals like knowledge graphs~\citep{jiang2024diffkg} and multi-modal features~\citep{jiang2024diffmm} to enhance semantic alignment. However, despite their strong generative capabilities, existing Diffusion Recommender Models (DRMs) universally rely on standard Gaussian initializations during reverse sampling. This standard assumption ignores the complex data manifold of recommendation scenarios, leaving current DRMs inherently exposed to the unique structural biases we address in this work.

\subsection{Matthew Effect in Recommendation}
The Matthew Effect remains a fundamental challenge in recommender systems. Existing studies primarily treat this phenomenon as an \emph{externally observable outcome}, mitigating it through exposure control, regularization~\citep{zhao2025hybrid, gao2023alleviating, zheng2021disentangling}, or post-hoc diversity enhancements~\citep{anderson2020algorithmic, hansen2021shifting, liang2021enhancing}. Empirical analyses also confirm how item popularity skews outcomes in traditional models~\citep{wang2018quantitative} and real-world platforms~\citep{liu2021examining,han2025controlling}. 
However, little attention has been paid to how modern generative paradigms internally \emph{self-reinforce} this bias. Crucially, while high-frequency items dominating the optimization loss is a universal challenge across all recommendation models, DRMs suffer from an additional, uniquely generative vulnerability. The reliance on stochastic reverse generation introduces a structural prior mismatch between the forward terminal distribution and the reverse initialization, systematically amplifying popularity bias over time. This observation motivates us to analyze how mismatched generative sampling intrinsically reinforces the Matthew Effect in DRMs, and to design inference-time interventions that correct these trajectory distortions without modifying the universally biased loss objective.

\section{Preliminary}
\subsection{Diffusion Models}
Diffusion Models (DMs) transform structured data into noise via a forward process and reverse it to generate samples. In recommendations, item embeddings $z_0$ are obtained by looking up the latent vector of each item from the predefined item embedding matrix $E\in\mathbb{R}^{n\times d}$ for items in each user’s historical interaction sequence extracted from the user-item interaction matrix $R$. These embeddings $z_0$ are then encoded into latent variables $\mathbf{x}_0$ for diffusion. The diffusion model generates $\hat{\mathbf{x}}_0$, which is decoded to $\hat{\mathbf{z}}_0$, a reconstructed embedding for ranking or matching items.

\noindent\textbf{Forward Diffusion Process.}
The forward process gradually corrupts $\mathbf{x}_0$ over $T$ discrete steps by adding Gaussian noise, producing a Markov chain of noisy latent variables $\mathbf{x}_1, \dots, \mathbf{x}_T$. This is defined by the conditional transition:
\begin{equation}
q(\mathbf{x}_t | \mathbf{x}_{t-1}) = \mathcal{N}(\mathbf{x}_t; \sqrt{1-\beta_t}\mathbf{x}_{t-1}, \beta_t \mathbf{I}), 
\end{equation}
where $t = 1, \dots, T$, $\beta_t \in (0,1)$ controls the noise scale at each diffusion step $t$, and $\mathbf{I} \in \mathbb{R}^{d \times d}$ is the identity matrix ensuring isotropic variance. Additionally, for any $t > 0$, the forward process can be expressed in one step from $\mathbf{x}_0$ to $\mathbf{x}_t$ using two distributions:

\begin{equation}
\mathbf{x}_t = \sqrt{\lambda_t}\mathbf{x}_0 + \sqrt{1-\lambda_t}\mathbf{\epsilon},
\end{equation}
where $\lambda_t = \prod_{s=1}^t (1-\beta_s)$ and $\mathbf{\epsilon} \sim \mathcal{N}(0, \mathbf{I})$.

\noindent\textbf{Reverse Denoising Process.}
To recover structured signals from the noise, DMs train a neural network to approximate the reverse transitions. Starting from Gaussian noise $\hat{\mathbf{x}}_T \sim \mathcal{N}(0, \mathbf{I})$, the model iteratively predicts the denoised latent states $\hat{\mathbf{x}}_{t-1}$ given $\hat{\mathbf{x}}_t$, conditioned on auxiliary recommendation-specific context $\mathbf{c}$:
\begin{equation}
p_\theta(\hat{\mathbf{x}}_{t-1} | \hat{\mathbf{x}}_t, \mathbf{c}) = \mathcal{N}(\hat{\mathbf{x}}_{t-1}; \boldsymbol{\mu}_\theta(\hat{\mathbf{x}}_t, t, \mathbf{c}), \Sigma_t \mathbf{I},
\label{inference}
\end{equation}
where the context $\mathbf{c}$ denotes bidirectional sequence information around replaceable positions, which is embedded into latent vectors, encoded via bidirectional Transformer, and fed as diffusion conditions to generate semantically coherent augmentations and $\boldsymbol{\mu}_\theta(\cdot)$ is the learned mean function parameterized by $\theta$, and $\Sigma_t$ is a variance schedule. As a result, the reverse diffusion trajectory is fully governed by the learned score dynamics, making it sensitive to bias during diffusion process.

\noindent\textbf{Optimization.}  
The learning objective of diffusion models is derived from the variational lower bound (VLB) of the negative log-likelihood $-\log p_\theta(\mathbf{x}_0)$. The VLB can be decomposed into a sum of Kullback–Leibler (KL) terms between the true posterior $q(\mathbf{x}_{t-1}|\mathbf{x}_t,\mathbf{x}_0)$ and the learned reverse transition $p_\theta(\mathbf{x}_{t-1}|\mathbf{x}_t)$, together with a reconstruction term for $\mathbf{x}_0$~\citep{ho2020denoising,li2024recdiff}. With the posterior distribution
\begin{equation}
q(\mathbf{x}_{t-1}|\mathbf{x}_t,\mathbf{x}_0) = 
\mathcal{N}\!\left(\mathbf{x}_{t-1}; \tilde{\boldsymbol{\mu}}_t(\mathbf{x}_t,\mathbf{x}_0), \tilde{\beta}_t \mathbf{I}\right),
\end{equation}
and parameterization of the mean by a noise predictor $\boldsymbol{\epsilon}_\theta$. In practice, following~\citep{ho2020denoising,walker2022recommendation,jiangzhou2024dgrm}, the objective is simplified into a noise prediction loss, which avoids explicitly evaluating KL terms:
\begin{equation}
\label{loss}
\mathcal{L}(\theta) = \mathbb{E}_{t,\boldsymbol{\epsilon}}
\left[
\left\| \boldsymbol{\epsilon} - \boldsymbol{\epsilon}_\theta\!\left(\sqrt{\lambda_t}\mathbf{x}_0 + \sqrt{1-\lambda_t}\,\boldsymbol{\epsilon},\, t,\, \mathbf{c}\right) \right\|_2^2
\right],
\end{equation}
where the noise-prediction objective is equivalent to learning the score function $\nabla_{\mathbf{x}_t}\log p_t(\mathbf{x}_t \mid \mathbf{c})$ of the perturbed data distribution. Crucially, this score is learned under the empirical interaction distribution, implicitly entangling semantic preference signals with dataset-induced frequency effects. Intuitively, the model learns to predict the noise injected at step $t$, thereby recovering the clean embedding $\mathbf{x}_0$ from noisy samples.

\noindent\textbf{Inference.}
User and item embedding generation starts by sampling a noise vector $\hat{\mathbf{x}}_T \sim \mathcal{N}(0, \mathbf{I})$ and iteratively denoising it via the learned transitions:$\hat{\mathbf{x}}_T \rightarrow \hat{\mathbf{x}}_{T-1} \rightarrow \cdots \rightarrow \hat{\mathbf{x}}_0$. This produces user or item embeddings that incorporate both uncertainty and contextual preference signals. Finally, with the generated user or item representation $\hat{\mathbf{x}}_0$, we can calculate the inner product between user embeddings and item embeddings in the candidate set, then top-K nearest items are selected as recommendation items.

\noindent\textbf{Limitations.}
Diffusion-based Recommendation Models (DRMs) learn user-item interaction distributions from data, but under long-tailed settings, popular items dominate training, leaving insufficient signal for niche items and amplifying popularity bias, leading to the self-reinforcing Matthew Effect.
This bias stems from two intrinsic mechanisms: (i) the diffusion objective in Eq.~\eqref{loss} learns the score function $\nabla_{\mathbf{x}_t}\log p_t(\mathbf{x}_t \mid \mathbf{c})$ under a highly imbalanced distribution, causing gradients to be dominated by high-frequency items; and (ii) reverse sampling from a standard Gaussian noise $\hat{\mathbf{x}}_T \sim\mathcal{N}(\mathbf{0},\mathbf{I})$ mismatched with $\mathbf{x}_T$ biases early denoising toward popular-item regions.
These effects jointly reinforce popularity dominance, leading to biased and suboptimal recommendations (see appendix).


\section{Method}
FairDiff consists of two key modules: Popularity Condition Guidance (PCG) and Semantic Calibration (SC). PCG provides diffusion-time guidance that introduces explicit control over popularity-sensitive score components, reshaping the reverse diffusion dynamics. Semantic Calibration (SC) applies a one-step static Optimal Transport alignment to bridge the mismatch between the forward terminal distribution and the reverse initialization, ensuring consistency between forward and reverse diffusion. Importantly, these two modules do not modify the underlying DRM architecture, allowing them to be seamlessly used as plug-in components.

\subsection{Popularity Condition Guidance}
\label{sec:pcg}

As discussed in previous section, DRMs tend to amplify popularity bias during diffusion process, particularly under long-tailed item distributions.
A key reason is that the learned score function $\nabla_{\mathbf{x}_t} \log p_t(\mathbf{x}_t \mid \mathbf{c})$ is dominated by high-frequency (popular) items, making it difficult to accurately recover representations of niche items.
To explicitly disentangle popularity effects from semantic preference signals, we propose \emph{Popularity Condition Guidance} (PCG).
Specifically, PCG treats popularity as an \emph{independent and controllable guidance signal} in the reverse diffusion process.

To this end, we extend the denoising network to take an explicit popularity condition $\mathbf{p}$ as input.
Here, $\mathbf{p}$ represents the normalized popularity of items, which is computed from historical interaction frequencies, and is treated as a fixed input during reverse diffusion.
Accordingly, the noise predictor is parameterized as
$\boldsymbol{\epsilon}_\theta(\mathbf{x}_t, t, \mathbf{c}, \mathbf{p})$,
which induces a popularity-conditioned reverse transition:
\begin{equation}
p_\theta(\hat{\mathbf{x}}_{t-1} \mid \hat{\mathbf{x}}_t, \mathbf{c}, \mathbf{p})
=
\mathcal{N}\!\left(
\hat{\mathbf{x}}_{t-1};
\boldsymbol{\mu}_\theta(\hat{\mathbf{x}}_t, t, \mathbf{c}, \mathbf{p}),
\Sigma_t \mathbf{I}
\right),
\end{equation}
where $\boldsymbol{\mu}_\theta(\cdot)$ follows the standard noise-parameterization used in diffusion models.
Under this formulation, $\boldsymbol{\epsilon}_\theta(\mathbf{x}_t, t, \mathbf{c}, \mathbf{p})$ is proportional to the score function
$\nabla_{\mathbf{x}_t} \log p_\theta(\mathbf{x}_t \mid \mathbf{c}, \mathbf{p})$.

To isolate popularity-induced components in the score, we introduce an auxiliary popularity condition
$\hat{\mathbf{p}} \sim q(\mathbf{p})$, sampled from the marginal popularity prior and independent of $\mathbf{x}_t$.
Conditioning on $\hat{\mathbf{p}}$ yields a popularity-agnostic estimate of the score:
\begin{equation}
\boldsymbol{\epsilon}_\theta^{\text{agn}}(\mathbf{x}_t, t, \mathbf{c})
=
\mathbb{E}_{\hat{\mathbf{p}} \sim q(\mathbf{p})}
\left[
\boldsymbol{\epsilon}_\theta(\mathbf{x}_t, t, \mathbf{c}, \hat{\mathbf{p}})
\right],
\end{equation}
which captures semantic preference signals while marginalizing out item popularity.
In practice, this expectation is efficiently approximated using a single Monte Carlo sample.

Given the popularity-aware noise prediction
$\boldsymbol{\epsilon}_\theta^{\text{pop}}(\mathbf{x}_t, t, \mathbf{c})
=
\boldsymbol{\epsilon}_\theta(\mathbf{x}_t, t, \mathbf{c}, \mathbf{p})$,
PCG constructs the guided noise predictor by linearly interpolating between the popularity-agnostic and popularity-aware estimates:
\begin{align}
\hat{\boldsymbol{\epsilon}}_\theta(\mathbf{x}_t, t, \mathbf{c}, \mathbf{p})&=
\boldsymbol{\epsilon}_\theta^{\text{agn}}(\mathbf{x}_t, t, \mathbf{c})
\notag\\ 
&+
w_{\text{PCG}}
\Big(
\boldsymbol{\epsilon}_\theta^{\text{pop}}(\mathbf{x}_t, t, \mathbf{c})
-
\boldsymbol{\epsilon}_\theta^{\text{agn}}(\mathbf{x}_t, t, \mathbf{c})
\Big),
\label{eq:pcg}
\end{align}
where $w_{\text{PCG}} \in [0,1]$ controls the strength of popularity guidance.
When $w_{\text{PCG}} = 0$, the reverse diffusion is fully popularity-agnostic, while $w_{\text{PCG}} = 1$ recovers the standard popularity-conditioned denoising process. 
The guided noise predictor $\hat{\boldsymbol{\epsilon}}_\theta$ is then used to compute the reverse mean
$\boldsymbol{\mu}_\theta^{\text{PCG}}(\mathbf{x}_t, t, \mathbf{c}, \mathbf{p})$ and perform sampling, following the same update rule as in Eq.~\eqref{inference}.
Importantly, PCG introduces no additional model parameters or training objectives and can be seamlessly applied at inference time. From a theoretical standpoint, PCG can be viewed as a process-level correction of optimization-induced popularity bias, achieved by decomposing the learned score into popularity-agnostic and popularity-sensitive components. By explicitly regulating the contribution of popularity within the score dynamics, PCG reshapes the reverse diffusion trajectories. The theoretical foundations of this perspective are established in appendix.

\subsection{Semantic Calibration Module}
\label{sec:sc}

While the PCG module addresses the optimization bias in gradient estimation, DRMs face another critical challenge at the inference stage: the \emph{Structural Prior Mismatch}.
In this subsection, we identify the root cause of this mismatch and propose \textbf{Semantic Calibration (SC)}, a lightweight optimal transport strategy to bridge this gap with one step calibration.

\subsubsection{Prior Mismatch}

DRMs assume that the forward process transforms embedding $\mathbf{x}_0$ into pure isotropic Gaussian noise $\mathcal{N}(\mathbf{0}, \mathbf{I})$ at the terminal timestep $T$. Consequently, the reverse generation process is initialized by sampling $\hat{\mathbf{x}}_T \sim \mathcal{N}(\mathbf{0}, \mathbf{I})$.
However, in the context of recommendation, embeddings lie on a highly sparse and complex manifold.
Practically, the distribution of data diffused to time $T$, denoted as $q(\mathbf{x}_T)$, often deviates significantly from the standard Gaussian prior:
\begin{equation}
    q(\mathbf{x}_T) = \int q(\mathbf{x}_T | \mathbf{x}_0) p_{\text{data}}(\mathbf{x}_0) d\mathbf{x}_0 \neq \mathcal{N}(\mathbf{0}, \mathbf{I}).
\end{equation}
We term this discrepancy the \emph{prior mismatch}.
Because the reverse denoising process is trained to map from $q(\mathbf{x}_T)$ back to $\mathbf{x}_0$, initializing generation with $\hat{\mathbf{x}}_T$ (which comes from a different distribution) introduces a systematic trajectory error.
In RecSys, this error manifests as a tendency to collapse towards the "mean" of the dataset during the early denoising steps, causing the model to recommend generic, popular items rather than personalized niche items.
To analyze this formally, existing research prove that the reverse diffusion can be modeled as a Probability Flow ODE \citep{song2020score},
The error in the initial distribution $p_T$ propagates through the ODE trajectory, creating a divergence between the generated distribution and the true data distribution.
The upper bound of this divergence accumulates over time, implying that the early steps of generation are wasted on correcting the initialization rather than refining user preferences. We present the corresponding theorems and  proofs in appendix.

\subsubsection{Semantic Calibration}
Instead of relying on the computationally expensive method of correcting the entire ODE trajectory, we focus on correcting the \emph{initialization}.
We aim to find a mapping function $\mathcal{T}$ that transports the standard Gaussian noise to the empirical terminal distribution of the embedding.
We formulate this as a static Optimal Transport (OT) problem minimizing the transport cost:
\begin{equation}
    \min_{\mathcal{T}} \mathbb{E}_{\hat{\mathbf{x}_T} \sim \mathcal{N}(\mathbf{0}, \mathbf{I})} \left[ \| \mathcal{T}(\hat{\mathbf{x}_T}) - \text{stop\_grad}(\mathbf{x}_T) \|_2^2 \right],
\end{equation}
where $\mathbf{x}_T$ are samples obtained by running the forward diffusion process on real user data.
In practice, $\mathcal{T}$ can be parameterized as a lightweight projection network (e.g., a multi-layer perceptron).
During inference, rather than starting from random noise, we introduce a \emph{calibrated initialization}:
\begin{equation}
    \tilde{\mathbf{x}}_T = \mathcal{T}(\hat{\mathbf{x}}_T), \quad \text{where } \hat{\mathbf{x}}_T \sim \mathcal{N}(\mathbf{0}, \mathbf{I}).
\end{equation}
This $\tilde{\mathbf{x}}_T$ ensures the reverse process starts on the correct trajectory.

\subsubsection{Efficient Truncated Sampling}

A key advantage of Semantic Calibration is that it enables \emph{Truncated Diffusion}.
Since $\tilde{\mathbf{x}}_T$ is already aligned with the data distribution, the chaotic early steps of the reverse process (typically $T \to T/2$) become redundant.
We can therefore bypass the initial denoising phase and start the generation from a later timestamp $T' < T$.
This design significantly reduces inference latency while mitigating the popularity bias caused by incorrect initialization.

\begin{table*}[htbp]
\centering
\small
\caption{Performance comparison across datasets (metrics: HR@K (H@K) and NDCG@K (N@K), K $\in {5,10,20}$).   The t-test results show that our performance advantage over the previous SOTA method is statistically significant. The improvement is statistically significant with $p < 10^{-2}$ ($\star$: $p < 10^{-2}$, $\star\star$: $p < 10^{-4}$).}
 \resizebox{\textwidth}{!}{%
\begin{tabular}{c|c|cccccccc|cc|cc|cc}
\toprule[1.5pt]
\textbf{Dataset} & \textbf{Metric} & \textbf{SASRec}  &  \textbf{BERT4Rec} &  \textbf{TiMiRec} & \textbf{TIGER} & \textbf{BASRec} & \textbf{HSTU} & \textbf{ACVAE}& \textbf{CSRec}& \textbf{DreamRec} & \textbf{+FairDiff} &  \textbf{DiffuRec} & \textbf{+FairDiff}  & \textbf{CDiff4Rec} & \textbf{+FairDiff}  \\
\midrule
\multirow{6}{*}{\textbf{Beauty} }
& H@5 
&3.27 &2.13  &1.90 &3.53 & 5.37& 5.25&2.47 &5.48 & 5.61 & \textbf{6.55\textsuperscript{$\star$$\star$}}  & 5.73 & \textbf{6.94\textsuperscript{$\star$$\star$}} &  5.83 & \textbf{7.29\textsuperscript{$\star$$\star$}}  \\& H@10 
& 6.26    &3.72 &3.34 &6.23 &7.03 &6.29 &3.88&7.08 &7.13 & \textbf{7.99\textsuperscript{$\star$$\star$}}  & 7.21 & \textbf{7.96\textsuperscript{$\star$$\star$}} &  7.71 & \textbf{8.69\textsuperscript{$\star$$\star$}}  \\& H@20
& 8.98   & 5.79 &5.17 &9.63 &10.32 &10.08 &6.12 & 7.53&10.62 & \textbf{11.82\textsuperscript{$\star$$\star$}}  & 10.53 & \textbf{11.76\textsuperscript{$\star$$\star$}} &  10.72 & \textbf{11.90\textsuperscript{$\star$$\star$}}  \\&N@5 
& 2.40    &1.32  &1.24 &2.59 &3.29 &3.04 &1.71 &3.28& 3.49 & \textbf{3.92\textsuperscript{$\star$$\star$}}  & 3.64 & \textbf{4.13\textsuperscript{$\star$$\star$}}  & 3.90 & \textbf{4.39\textsuperscript{$\star$$\star$}}  \\&N@10
& 3.23   &1.83  &1.71 &3.26 &4.21 &3.97 &2.07 & 4.18 &4.19 & \textbf{4.78\textsuperscript{$\star$$\star$}}  & 4.44 & \textbf{4.89\textsuperscript{$\star$$\star$}} &  4.65 & \textbf{5.21\textsuperscript{$\star$$\star$}}  \\& N@20
& 3.66   & 2.35 &2.17 &3.71 &4.95 &4.82 &2.63& 5.08 &4.99 & \textbf{5.85\textsuperscript{$\star$$\star$}} & 5.32 & \textbf{5.93\textsuperscript{$\star$$\star$}} &  5.41 & \textbf{6.01\textsuperscript{$\star$$\star$}}  \\
\midrule
\multirow{6}{*}{\textbf{Toys}} & H@5 
& 4.53    & 1.91 &1.14 &4.24 &5.41 &5.21 &2.24& 5.32&5.47 & \textbf{6.04\textsuperscript{$\star$$\star$}}  & 5.58 & \textbf{6.15\textsuperscript{$\star$$\star$}} &  6.04 & \textbf{6.46\textsuperscript{$\star$$\star$}}\\& H@10
& 6.55   &2.94  &1.75  &6.83 &7.13&6.73 &3.05& 7.16& 7.25 & \textbf{8.18\textsuperscript{$\star$$\star$}}  & 7.22 & \textbf{7.98\textsuperscript{$\star$$\star$}} &  7.70 & \textbf{8.47\textsuperscript{$\star$$\star$}} \\& H@20
& 9.23  &4.59  &2.72 &9.57 &9.51 &8.02 &4.41&9.10 &9.68 & \textbf{10.38\textsuperscript{$\star$$\star$}}  & 9.84 & \textbf{10.62\textsuperscript{$\star$$\star$}} &  10.16 & \textbf{10.87\textsuperscript{$\star$$\star$}} \\&N@5 
& 3.01   &1.16  &0.71 &3.15 &3.11 &3.34 &1.56& 3.96 &4.04 & \textbf{4.62\textsuperscript{$\star$$\star$}}  & 4.18 & \textbf{4.64\textsuperscript{$\star$$\star$}} &  4.47 & \textbf{4.86\textsuperscript{$\star$$\star$}} \\& N@10
& 3.75  &1.49  &0.91 &3.82 &3.92 &4.27 &1.85 & 4.54&4.62 & \textbf{5.27\textsuperscript{$\star$$\star$}}  & 4.75 & \textbf{5.31\textsuperscript{$\star$$\star$}}  & 5.00 & \textbf{5.49\textsuperscript{$\star$$\star$}}  \\& N@20
& 4.33  &1.90  &1.14 &4.32 &4.71 &4.94 &2.18 &5.18 &5.22 & \textbf{5.97\textsuperscript{$\star$$\star$}} & 5.35 & \textbf{6.03\textsuperscript{$\star$$\star$}}  & 5.65 & \textbf{6.10\textsuperscript{$\star$$\star$}} \\
\midrule
\multirow{6}{*}{\textbf{Steam}} & H@5 
& 4.74   &4.74  &5.02 &4.57 &5.64 &5.62 &5.58 &5.78 &5.96 & \textbf{7.05\textsuperscript{$\star$$\star$}}  & 6.72 & \textbf{7.19\textsuperscript{$\star$$\star$}} &  6.90 & \textbf{7.33\textsuperscript{$\star$$\star$}}\\& H@10
&8.38  &7.94  &9.62& 8.64& 9.28& 9.45& 9.28 & 9.57&9.68 & \textbf{11.10\textsuperscript{$\star$$\star$}}  & 10.51 & \textbf{11.45\textsuperscript{$\star$$\star$}} &  10.09 & \textbf{11.85\textsuperscript{$\star$$\star$}}  \\& H@20
& 13.61   &12.73 &14.89 &14.13 &15.03 &15.06 &14.48&14.89 &15.08 & \textbf{16.93\textsuperscript{$\star$$\star$}}  & 16.09 & \textbf{17.23\textsuperscript{$\star$$\star$}} & 16.89 & \textbf{18.64\textsuperscript{$\star$$\star$}} \\&N@5 
& 2.88   &2.97 &3.87 &2.93 &3.36 &3.48 &3.54&3.62 &3.84 & \textbf{4.49\textsuperscript{$\star$$\star$}} &  4.19 & \textbf{4.77\textsuperscript{$\star$$\star$}} &  4.48 & \textbf{5.03\textsuperscript{$\star$$\star$}} \\&N@10
& 4.05  &4.00  &5.04 &4.09 &4.68 &4.71 &4.73& 4.96 &5.17 & \textbf{5.62\textsuperscript{$\star$$\star$}} &  5.50 & \textbf{5.89\textsuperscript{$\star$$\star$}} &  5.79 & \textbf{6.32\textsuperscript{$\star$$\star$}} \\& N@20
& 5.36   &5.20  &6.36 &5.54 &6.14 &6.12 &6.04&6.27 &6.39 & \textbf{7.02\textsuperscript{$\star$$\star$}} &  7.11 & \textbf{7.44\textsuperscript{$\star$$\star$}} &  7.28 & \textbf{7.86\textsuperscript{$\star$$\star$}}  \\
\toprule[1.5pt]
\end{tabular}
}
\label{tab:performance}
\end{table*}

\section{Experiments}
\label{exp}


We conduct comprehensive experiments on the sequential recommendation task to answer the following key questions: 

\begin{itemize}
\item \textbf{RQ1}: How does FairDiff compare with state-of-the-art diffusion recommender models?
\item \textbf{RQ2}: To what extent does FairDiff attenuate the self-reinforcing Matthew effect in RS?
\item \textbf{RQ3}: What performance gains does FairDiff yield over existing baselines for varying-popularity items and users?
\item \textbf{RQ4}: What is the individual contribution of each module to FairDiff's effectiveness?
\item \textbf{RQ5}: What is the impact of calibrating forward final distribution and reverse initial distribution?
\end{itemize}

\vspace{-3mm}

\subsection{Experimental Settings}

\noindent\textbf{Evaluation Metrics.} We adopt HR\@K and NDCG\@K (K = 5, 10, 20) to evaluate recommendation utility; for brevity, they are denoted as H@K and N@K. To assess the self-reinforcing Matthew effect, we further employ fairness-related metrics from both the consumer and provider perspectives. Consumer-side fairness is measured by $\Delta$Recall and $\Delta$NDCG on the top-20 recommendation lists~\citep{boratto2022consumer}. Provider-side fairness and diversity are evaluated using $\Delta$Exp and APLT~\citep{karimi2023provider}. All metrics are reported as percentages. Each experiment is repeated five times with different random seeds, and the reported results are averaged over all runs.

\noindent\textbf{Datasets.} We evaluate on seven real-world datasets. For sequential recommendation, we use Amazon Beauty and Amazon Toys~\citep{lin2022dual}, MovieLens‑1M~\citep{harper2015movielens}, and Steam~\citep{kang2018self}. For multimodal recommendation, we employ TikTok, Baby, and Sports~\citep{zhou2023bootstrap,xu2024slmrec}. The details of
these datasets are provided in appendix. 


\subsection{Performance Comparison(RQ1)}

To evaluate the effectiveness of FairDiff, we report its performance against a wide range of baselines across Beauty, Toys, and Steam datasets, with detailed results summarized in Table~\ref{tab:performance}. Overall, DRMs consistently outperform traditional discriminative models across all datasets. This superiority arises from their ability to capture the complex user–item interaction distribution, rather than relying on local ranking optimization through point-wise or pair-wise objectives. In contrast, discriminative approaches are more vulnerable to popularity-skewed supervision signals and may suffer from representational bottlenecks in recommendation scenarios. 
Nevertheless, existing DRMs remain susceptible to the self-reinforcing Matthew effect, wherein popularity bias is progressively amplified during the diffusion and generative processes. FairDiff explicitly intervenes in this two processes through the proposed PCG and SC modules, serving as a plug-in enhancement that preserves the expressive capacity of DRMs while mitigating bias accumulation and structural drift. 
Consequently, FairDiff achieves consistent and statistically significant improvements across all datasets and diffusion backbones, with particularly pronounced gains in NDCG, indicating superior ranking quality at top positions rather than mere improvements in hit rate.

\begin{table}[htbp]
  \centering
  \small
  \caption{
Evaluation of the self-reinforcing Matthew effect.
DRMs achieve strong utility (Recall@10 and NDCG@10), 
but exhibit substantial popularity amplification, reflected by large $\Delta$Recall, 
$\Delta$NDCG, and $\Delta$Exposure as well as limited long-tail coverage (APLT). 
By contrast, FairDiff significantly mitigates popularity bias while simultaneously improving utility, 
achieving a superior fairness–utility trade-off. 
All metrics are normalized to [0,1]; higher values indicate better performance 
(except for $\Delta$ metrics, where lower is better).
}
  \label{tab:ml1m-results-transposed}
  \setlength{\tabcolsep}{1.5pt} 
  \renewcommand{\arraystretch}{1.2} 
  \resizebox{0.47\textwidth}{!}{%
  \begin{tabular}{l|c|c|c|c|c|c}
    \toprule
    \textbf{Model}   & \textbf{Recall$@$10 ($\uparrow$)} & \textbf{NDCG$@$10($\uparrow$)} & \textbf{$\Delta$Recall ($\downarrow$)} & \textbf{$\Delta$NDCG ($\downarrow$)} & \textbf{APLT ($\uparrow$)} & \textbf{$\Delta$Exp ($\downarrow$)} \\
       \hline
    
       \textbf{SASRec}      & 9.62 & 10.94 & 0.33 & 1.57 & 9.47 & 86.29 \\
       \textbf{BERT4Rec}& 10.14 & 11.67 & 0.32 & 1.54 & 10.75 & 87.36 \\
       \textbf{TiMiRec}     & 10.34 & 12.82 & 0.16 & 1.37 & 13.73 & 84.98 \\
       \textbf{TIGER}       & 10.46 & 12.94 & 0.14 & 1.34 & 10.50 & 88.78 \\
       \textbf{BASRec}         & 6.45  & 12.76  & 0.35 & 1.66 & 9.49  & 90.54 \\
       \textbf{HSTU}      & 10.35  & 12.04 & 0.48 & 1.43 & 9.63  & 87.02 \\
       \textbf{ACVAE}       & 10.68  & 12.86 & 0.23 & 1.07 & 10.38  & 84.10 \\
       \textbf{CSRec} &10.71 &13.19 &0.31 &1.46 &9.68& 85.82\\
       \hline
          \textbf{DiffuRec}          & 10.95 & 13.73 & 0.62 & 2.33 & 9.04 & 92.26 \\
       \textbf{+FairDiff}    & \textbf{12.01}\textsuperscript{$\star$$\star$} & \textbf{15.28}\textsuperscript{$\star$$\star$} & \textbf{0.12}\textsuperscript{$\star$$\star$} & \textbf{1.18}\textsuperscript{$\star$$\star$} & \textbf{13.81}\textsuperscript{$\star$$\star$} & \textbf{79.81}\textsuperscript{$\star$$\star$} \\\cline{1-7}
          \textbf{DreamRec}        & 11.08 & 13.91 & 1.15 & 2.14 & 9.15 & 95.25 \\  
       \textbf{+FairDiff}    & \textbf{13.02}\textsuperscript{$\star$$\star$} & \textbf{16.17}\textsuperscript{$\star$$\star$} & \textbf{0.18}\textsuperscript{$\star$$\star$} & \textbf{1.15}\textsuperscript{$\star$$\star$} & \textbf{14.73}\textsuperscript{$\star$$\star$} & \textbf{71.87}\textsuperscript{$\star$$\star$} \\ \cline{1-7} 
       \textbf{CDiff4Rec}            & 11.29 & 14.13 & 0.58 & 2.49 & 9.33 & 91.23 \\ 
       \textbf{+FairDiff}     & \textbf{13.85}\textsuperscript{$\star$$\star$} & \textbf{16.25}\textsuperscript{$\star$$\star$} & \textbf{0.09}\textsuperscript{$\star$$\star$} & \textbf{1.18}\textsuperscript{$\star$$\star$} & \textbf{15.53}\textsuperscript{$\star$$\star$} & \textbf{74.82}\textsuperscript{$\star$$\star$} \\ 
    \hline
  \end{tabular}
  }
\end{table}

\subsection{Mitigating the Self-Reinforcing Matthew Effect (RQ2)}

The self-reinforcing Matthew effect in recommendation originates from a cumulative advantage mechanism: items that receive higher exposure are more likely to accumulate interactions, which in turn further amplifies their future exposure probability. This feedback loop gradually distorts the exposure distribution, reinforcing popularity bias and suppressing long-tail providers.
Although explicitly modeling long-term feedback dynamics is beyond the scope of this work, we quantify the structural severity of this effect using provider-side fairness metrics. In particular, exposure disparity ($\Delta$Exp) measures inequality in item exposure allocation, while APLT evaluates the average proportion of long-tail items in recommendation lists. A lower $\Delta$Exp and a higher APLT indicate a weaker cumulative advantage structure and thus a mitigated Matthew effect.
As shown in Table~\ref{tab:ml1m-results-transposed}, existing DRMs still exhibit substantial exposure concentration, even when achieving strong utility performance. This is because diffusion models, despite their generative flexibility, are trained to match the empirical data distribution, which itself is long-tailed and popularity-skewed. Consequently, without explicit structural intervention, the reverse diffusion trajectory tends to preserve—and in some cases amplify—exposure imbalance.
In contrast, FairDiff consistently reduces $\Delta$Exp while simultaneously increasing APLT, without compromising utility metrics. This improvement stems from two complementary mechanisms. PCG explicitly regulates the contribution of popularity signals during generation, preventing excessive amplification of high-frequency items, while SC corrects structural prior mismatch that otherwise biases early denoising steps toward dominant modes. Together, these modules reshape the generative dynamics rather than applying post-hoc reweighting, leading to structurally balanced exposure as an intrinsic property of the generation process.
These results demonstrate that FairDiff mitigates the self-reinforcing Matthew effect at the process level, achieving fairness not through accuracy–fairness trade-offs, but through controlled diffusion dynamics that preserve semantic fidelity while redistributing exposure more equitably.
 
\subsection{Performance across items and users of varying popularity (RQ3)}
Table~\ref{tab:item_user_performance} evaluates performance across cold (bottom 20\%), mid (middle 60\%), and hot (top 20\%) segments for both items and users on toys. FairDiff consistently improves all diffusion backbones, with the most pronounced gains in the cold segments. This validates that semantic calibration mitigates prior mismatch at initialization, preventing early diffusion trajectories from collapsing toward high-density modes and thereby preserving weak preference signals in sparse regions.
Importantly, improvements also remain substantial in the hot segments. Rather than suppressing popularity signals, PCG disentangles popularity-driven gradients from semantic components, enabling strong preference structures to be sharpened instead of diluted. As a result, FairDiff enhances ranking quality in dense regimes without sacrificing expressiveness.
Gains in the mid-frequency range are comparatively moderate, which is expected since vanilla DRMs already handle moderate sparsity reasonably well. In this regime, FairDiff acts as a refinement mechanism rather than a corrective one.
Overall, these results indicate that FairDiff structurally reshapes diffusion trajectories: it amplifies weak signals in sparse regions while preserving discriminative strength in dense regions, leading to consistent improvements across the interaction spectrum.

\begin{table*}[ht]
\centering
\small
\caption{Performance across item popularity and user sequence length. Dim and Pop denotes dimension and popularity.}
\setlength{\tabcolsep}{3pt} 
\renewcommand{\arraystretch}{1.2} 
\resizebox{0.98\textwidth}{!}{
\begin{tabular}{c|c|c|cccccccc|cc|cc|cc} 
\toprule[1.5pt]
\multirow{1}{*}{\textbf{Metric}} & \multirow{1}{*}{\textbf{Dim}} &  \multirow{1}{*}{\textbf{Pop}}&
\textbf{SASRec}   & \textbf{BERT4Rec} &  \textbf{TiMiRec} & \textbf{TIGER} & \textbf{BASRec} & \textbf{HSTU} & \textbf{ACVAE}& \textbf{CSRec}
& \textbf{DiffuRec} & \textbf{+FairDiff} & \textbf{DreamRec} & \textbf{+FairDiff} & \textbf{CDiff4Rec} & \textbf{+FairDiff} \\ 
\midrule 
\multirow{6}{*}{H@20} 
& \multirow{3}{*}{Item} & Cold& 2.85 & 2.12 & 3.67 & 3.92 & 5.12 & 5.18 & 2.56 & 5.32   &2.51 & \textbf{5.59} & 2.74 & \textbf{5.93} & 2.87 & \textbf{6.45} \\
         &             & Mid& 7.12  & 4.37   & 7.45 & 8.12 & 7.98 & 4.12 & 7.55 & 7.41    &7.87 & \textbf{10.07} & 8.02 & \textbf{10.68} & 10.54 & \textbf{12.73} \\
         &            & Hot & 9.42  & 10.23  & 9.05 & 12.75 & 12.08 & 13.90 & 12.32  & 12.94 &  16.09& \textbf{17.35}& 17.21& \textbf{18.58}& 18.08& \textbf{19.14} \\\cmidrule(lr){2-17}
 &\multirow{3}{*}{User} & Cold & 5.72  & 7.32 &  8.05 & 9.27 & 9.12 & 7.15 & 9.44& 9.48  &6.14 &\textbf{11.75}& 7.14& \textbf{14.78}& 7.56 & \textbf{16.28} \\
  &                     & Mid& 8.96  & 8.05  & 9.37 & 10.48 & 10.58 & 10.32 & 8.06 & 10.43  &8.83& \textbf{11.24}& 9.36& \textbf{14.62}& 9.85& \textbf{14.83} \\
   &                    & Hot &9.88  & 9.54 &  11.05 & 11.38 & 11.63 & 12.28 & 9.82&12.41 &12.36& \textbf{13.12}& 13.27& \textbf{14.07}& 14.13& \textbf{15.28} \\
\cmidrule(lr){1-17}
 \multirow{6}{*}{N@20} 
  & \multirow{3}{*}{Item} & Cold &  2.27  & 1.65 &  1.55 & 2.56 & 2.85 & 2.78 & 1.72 & 2.98  &1.74 & \textbf{3.18} & 1.82 & \textbf{3.44} & 1.92 & \textbf{3.74} \\
                 &      & Mid &4.28  & 3.14  & 3.07 & 4.35 & 4.80 & 4.72 & 3.08 & 4.88 &4.55 & \textbf{5.83} & 4.89 & \textbf{6.32} & 5.12 & \textbf{6.62} \\
             &        & Hot & 5.95 & 6.72 & 6.47 & 8.38 & 8.73 & 8.54 & 6.12 & 8.95 &9.34 & \textbf{10.04}& 9.53 & \textbf{10.80} & 9.84 & \textbf{11.04} \\\cmidrule(lr){2-17}
 & \multirow{3}{*}{User} & Cold& 4.64  & 4.88 &  5.17 & 5.68 & 5.92 & 6.02 & 5.41 & 6.14 &4.96 & \textbf{6.58} & 5.02 & \textbf{7.32} & 5.18 & \textbf{7.84} \\
       &                 & Mid &5.04  & 5.12  & 5.75 & 6.18 & 6.42 & 6.37 & 6.13& 6.54 &6.17 & \textbf{7.25} & 6.41 & \textbf{7.51} & 6.39 & \textbf{7.75} \\
       &                & Hot &  5.72  & 5.87 & 6.88 & 7.12 & 7.35 & 7.28 & 5.97 & 7.43 &7.68 & \textbf{8.13} & 7.98 & \textbf{8.43} & 8.34 & \textbf{9.13} \\
\toprule[1.5pt]
\end{tabular}%
}
\label{tab:item_user_performance}
\end{table*}


\begin{table}[tb]
\centering
\small
\caption{Ablation study of FairDiff. PC denotes Popularity Conditioning, TS denotes Truncated Sampling, and INI denotes initialization.}
\label{tab:ablation}
\resizebox{0.48\textwidth}{!}{%
\begin{tabular}{lcccccc}
\toprule
\textbf{Variant} 
& \textbf{Recall$@$10} 
& \textbf{NDCG$@$10} 
& \textbf{$\Delta$Recall} 
& \textbf{$\Delta$NDCG} 
& \textbf{APLT } 
& \textbf{$\Delta$Exp} \\
\midrule
CDiff4Rec 
& 11.29 & 14.13 & 0.58 & 2.49 & 9.33 & 91.23 \\

+ PC ($w{=}1.0$) 
& 12.01 & 14.72 & 0.46 & 2.12 & 10.84 & 86.41 \\

+ PCG ($w{=}0.5$) 
& 12.94 & 15.53 & 0.22 & 1.63 & 13.41 & 80.36 \\

+ PCG ($w{=}0.0$) 
& 12.48 & 15.02 & 0.11 & 1.29 & 14.02 & 78.91 \\

+ SC (OT INI) 
& 12.36 & 15.01 & 0.34 & 1.88 & 12.67 & 84.75 \\

+ SC + TS
& 13.02 & 15.84 & 0.26 & 1.54 & 14.61 & 79.83 \\

\midrule
\textbf{+ FairDiff} 
& \textbf{13.85}$^{\star\star}$ 
& \textbf{16.25}$^{\star\star}$ 
& \textbf{0.09}$^{\star\star}$ 
& \textbf{1.18}$^{\star\star}$ 
& \textbf{15.53}$^{\star\star}$ 
& \textbf{74.82}$^{\star\star}$ \\
\bottomrule
\end{tabular}
}
\end{table}

\subsection{Ablation Study(RQ4)}
\label{sec:ablation}

To better understand the contribution of each component in FairDiff, we conduct comprehensive ablation experiments on the CDiff4Rec backbone. We evaluate both recommendation utility (Recall@10 and NDCG@10) and fairness-related metrics, including consumer-side disparity ($\Delta$Recall and $\Delta$NDCG), provider-side exposure bias ($\Delta$Exp), and long-tail coverage (APLT). All variants are trained under identical settings to ensure fair comparison.

From Table~\ref{tab:ablation}, we observe that introducing popularity conditioning alone improves utility but only moderately alleviates disparity, indicating that simply injecting popularity information does not fundamentally resolve bias. When PCG is applied with a moderate guidance strength ($w{=}0.5$), both consumer-side fairness and long-tail coverage improve substantially while maintaining strong recommendation accuracy. In contrast, the fully popularity-agnostic setting ($w{=}0.0$) further reduces disparity but slightly sacrifices utility, suggesting that completely removing popularity influence may weaken preference signals. These results validate that PCG effectively disentangles semantic preference from popularity effects and enables controllable trade-offs between fairness and accuracy.

Applying Semantic Calibration (SC) alone also leads to consistent improvements across metrics. By aligning the reverse initialization with the empirical terminal distribution, SC mitigates early-stage trajectory distortion and reduces exposure bias. Moreover, when combined with truncated sampling, SC allows the model to skip redundant early denoising steps without degrading performance, indicating that correcting structural prior mismatch enhances both efficiency and recommendation quality.

Most importantly, combining PCG and SC yields the best overall performance. The full FairDiff model achieves the highest Recall@10 and NDCG@10 while simultaneously attaining the lowest disparity and exposure bias and the strongest long-tail coverage. This demonstrates that PCG and SC address distinct yet complementary sources of bias: the former corrects optimization-induced popularity dominance in score estimation, while the latter mitigates inference-time prior mismatch. Their integration provides a principled and effective solution to the self-reinforcing Matthew effect in diffusion-based recommendation.

\begin{figure}[htbp]
    \centering
    \includegraphics[width=0.45\textwidth]{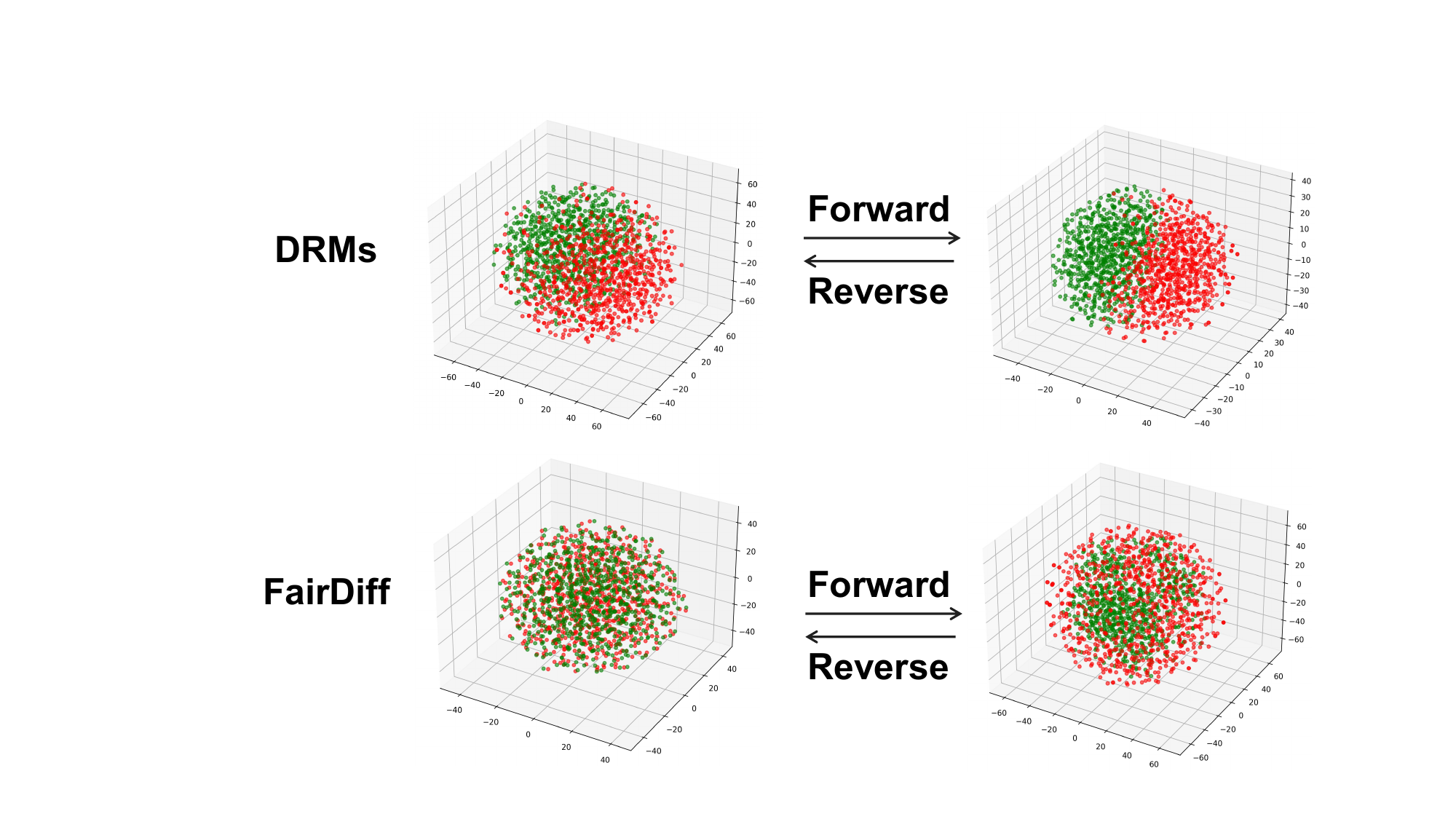}
    \caption{Visualization of item distributions in the forward–reverse process. Green denotes the item distribution, while red denotes the reconstruction distribution. The first column shows the forward initial and reverse final distributions; the second column shows the forward final and reverse initial distributions.}
    \label{fig:visual}
\end{figure}

\subsection{Visualization Study (RQ5)}

To examine how the SC module mitigates forward–reverse prior mismatch, we visualize 3D item embeddings during the diffusion process (Figure~\ref{fig:visual}), comparing DRMs with white-noise initialization and those equipped with semantic calibration. 
Under standard DRMs, unconstrained white noise produces disjoint and scattered clusters, indicating semantic distortion and a misalignment between the forward terminal distribution and the reverse initialization, which further induces a drift toward high-frequency items. 
In contrast, semantic calibration constrains the reverse initial noise to align with the intrinsic item manifold, resulting in compact and coherent embedding structures. 
This preference-aware initialization effectively bridges the forward–reverse gap, preserving semantic fidelity while alleviating popularity bias.
We further quantify this alignment using KL-divergence, a special case of the $f$-divergence defined as
$D_f(P \| Q) = \mathbb{E}_{Q}\!\left[f\!\left(\frac{P(x)}{Q(x)}\right)\right]$ and $\quad f(u)=u\log u$.
Estimated via empirical sampling, the divergence between the original and reconstructed distributions decreases from 8.2825 (DRMs) to 0.2383, while the divergence between the forward terminal and reverse initial distributions drops from 10.6447 to 0.2475. 
These results demonstrate that SC substantially improves forward–reverse consistency, thereby mitigating the self-reinforcing Matthew effect at the distributional level.

\section{Conclusion}

This work reveals that Diffusion Recommender Models (DRMs) suffer from a unique, self-reinforcing Matthew Effect, where popularity bias is not merely inherited from long-tailed data but actively amplified by the model's generative dynamics. We demonstrate that this phenomenon originates from a two-stage compounding mechanism: the universal challenge of popularity-dominated optimization, compounded by a DRM-specific structural prior mismatch during reverse generation. To dismantle this self-reinforcing loop, we propose FairDiff, a plug-and-play fairness-aware diffusion framework. Rather than attempting to alter the universally skewed training objective, FairDiff intervenes at inference time via Popularity Condition Guidance—a mathematically grounded distributional reweighting mechanism—and Semantic Calibration, which aligns the mismatched prior distributions via optimal transport. Extensive experiments demonstrate that FairDiff consistently improves recommendation accuracy while substantially reducing exposure disparity and enhancing long-tail coverage, proving that fairness and utility can be jointly achieved through the principled control of generative diffusion trajectories.

\bibliography{aaai2027}


\section{Details of Datasets}
\label{data}
The details of
these datasets are provided in Tables~\ref{dataset} and~\ref{dataset2}. All experiments are conducted on A100 GPUs.
\begin{table}[htbp]
    \centering
    \caption{Statistics of sequential recommendation datasets. Avg len means the average length of sequences.}
    \label{tab:dataset_stats}
    \small
    \begin{tabular}{l@{\hspace{1pt}}rrrrr}
        \toprule
        \textbf{Dataset} &  \textbf{\# Sequence} &  \textbf{\# items} &  \textbf{\# Actions} &  \textbf{Avg len} &  \textbf{Sparsity} \\
        \midrule
        Beauty & 22,363 & 12,101 & 198,502 & 8.53 & 99.93\% \\
        Toys & 19,412 & 11,924 & 167,597 & 8.63 & 99.93\% \\
        ML-1M & 6,040 & 3,416 & 999,611 & 165.50 & 95.16\% \\
        Steam & 281,428 & 13,044 & 3,485,022 & 12.40 & 99.90\% \\
        
        \bottomrule
    \end{tabular}
    \label{dataset}
\end{table}

\begin{table}[htbp]
    \centering
    \caption{Statistics of cross-domain  multimodal recommendation datasets. V, A, and T denote visual, acoustic, and textual modalities, respectively.}
    \label{dataset2}
    \small
    \setlength{\tabcolsep}{2pt}
    \renewcommand{\arraystretch}{1.2}
    \begin{tabular}{lccccc}
        \toprule
        \textbf{Dataset} & Modalities & \# Users & \# Items & \# Interactions & Sparsity \\
        \midrule
        TikTok & V, A, T & 9,319 & 6,710 & 59,541 & 99.90\% \\
        Baby & V, T & 19,445 & 7,050 & 139,110 & 99.90\% \\
        Sports & V, T & 35,598 & 18,357 & 256,308 & 99.96\% \\
        \bottomrule
    \end{tabular}
\end{table}

\begin{table}[t]
\centering
\caption{Efficiency Experiments.}
\label{Efficiency}
\resizebox{0.8\linewidth}{!}{%
\begin{tabular}{l|l|cccc}
\toprule
\textbf{Backbone} & \textbf{Datasets} & \textbf{Steps} & \textbf{Epochs}& \textbf{Training Time} & \textbf{Inference Time}\\
\midrule
\multirow{4}{*}{DreamRec} 
 & Beauty &350 & 248 & 369 minutes &344 seconds\\
 & Toys &323 & 365 & 472 minutes &218 seconds\\
 & Steam &5430 & 215 & 1850 minutes &2245 seconds\\
 & ML-1M &2649 & 301 & 1627 minutes &180 seconds\\
 \midrule
 \multirow{4}{*}{+FairDiff} 
 & Beauty &145 & 109 & 163 minutes & 149 seconds \\
 & Toys &131 & 153 & 195 minutes & 92 seconds \\
 & Steam &2184 & 84 & 743 minutes & 958 seconds   \\
 & ML-1M &1243 & 112 & 672 minutes & 73 seconds \\
\midrule
\multirow{4}{*}{DiffuRec} 
 & Beauty &327 & 211 & 344 minutes &298 seconds\\
 & Toys &286 & 339 & 450 minutes &174 seconds\\
 & Steam &5376 & 160 & 1709 minutes &2204 seconds\\
 & ML-1M &2626 & 271 & 1599 minutes &139 seconds\\
\midrule
 \multirow{4}{*}{+FairDiff} 
 & Beauty &187 & 117 & 201 minutes &194 seconds\\
 & Toys &190 & 171 & 257 minutes &114 seconds\\
 & Steam &3224 & 105 & 925 minutes &1108 seconds\\
 & ML-1M &1707 & 178 & 1058 minutes &82 seconds\\
\midrule
\multirow{4}{*}{CDiff4Rec} 
 & Beauty &257 & 174 & 313 minutes &251 seconds\\
 & Toys &241 & 295 & 412 minutes & 138 seconds\\
 & Steam &5312 & 125 & 1642 minutes & 2147 seconds \\
 & ML-1M &2594 & 240 & 1564 minutes & 96 seconds \\
 \midrule
  \multirow{4}{*}{+FairDiff} 
 & Beauty &122 & 81 & 95 minutes &63 seconds\\
 & Toys &162 & 114 & 208 minutes & 74 seconds\\
 & Steam &3124 & 69 & 726 minutes & 1382 seconds \\
 & ML-1M &1058 & 136 & 539 minutes & 69 seconds \\
\bottomrule
\end{tabular}
}
\end{table}

\section{More Experiments}

\begin{table*}[tbhp]
\centering
\small
\caption{Temporal Modeling Results between backbones and FairDiff on four datasets.}
\label{tab:backbones_pdrec}
\resizebox{\textwidth}{!}{
\begin{tabular}{c|c|cc|ccc|ccc|ccc}
\toprule
Datasets & Metrics & T-DiffRec & TI-DiffRec & GRU4Rec & +PDRec &\textbf{+FairDiff} & SASRec & +PDRec &\textbf{+FairDiff}& CL4SRec & +PDRec&\textbf{+FairDiff} \\
\midrule
\multicolumn{1}{c}{\multirow{5}{*}{Toy}} 
& N@1 & 0.1033 & 0.1058 & 0.0878 & 0.0899 & \textbf{0.1137}& 0.1095 & 0.1247 & \textbf{0.1426}&0.1125 & 0.1254 & \textbf{0.1575}\\
& N@5 & 0.1564 & 0.1618 & 0.1515 & 0.1617 & \textbf{0.1749}& 0.1779 & 0.2023& \textbf{0.2218} & 0.1802 & 0.2041 & \textbf{0.2471}\\
& N@10 & 0.1758 & 0.1823 & 0.1755 & 0.1879& \textbf{0.2114} & 0.2020 & 0.2286& \textbf{0.2371} & 0.2046 & 0.2305 & \textbf{0.2415}\\
& HR@5 & 0.2055 & 0.2151 & 0.2128 & 0.2300& \textbf{0.2451} & 0.2423 & 0.2752& \textbf{0.2913} & 0.2438 & 0.2776 & \textbf{0.3048}\\
& HR@10 & 0.2657 & 0.2787 & 0.2874 & 0.3112 & \textbf{0.3473}& 0.3169 & 0.3568 & \textbf{0.3764}& 0.3195 & 0.3595 & \textbf{0.3864}\\
& AUC & 0.5911 & 0.5968 & 0.5670 & 0.5909 & \textbf{0.6127}& 0.5771 & 0.6060 & \textbf{0.6239}& 0.5805 & 0.6068 & \textbf{0.6398}\\
\midrule
\multicolumn{1}{c}{\multirow{5}{*}{Game}} 
& N@1 & 0.1611 & 0.1746 & 0.1667 & 0.1808 & \textbf{0.2014}& 0.2111 & 0.2191 & \textbf{0.2479}& 0.2106 & 0.2180 & \textbf{0.2274}\\
& N@5 & 0.2567 & 0.2723 & 0.2818 & 0.2996 & \textbf{0.3258}& 0.3310 & 0.3382 & \textbf{03410}& 0.3294 & 0.3368 & \textbf{0.3527}\\
& N@10 & 0.2895 & 0.3040 & 0.3199 & 0.3380 & \textbf{0.3522}& 0.3682 & 0.3753& \textbf{0.3908} & 0.3682 & 0.3750 & \textbf{0.4001}\\
& HR@5 & 0.3451 & 0.3618 & 0.3893 & 0.4091 & \textbf{0.4326}& 0.4409 & 0.4475 & \textbf{0.4619}& 0.4385 & 0.4456 & \textbf{0.4633}\\
& HR@10 & 0.4469 & 0.4600 & 0.5071 & 0.5282 & \textbf{0.5642}& 0.5559 & 0.5626 & \textbf{0.5937}& 0.5584 & 0.5638 & \textbf{0.6024}\\
& AUC & 0.7217 & 0.7234 & 0.7601 & 0.7786 &\textbf{0.7941}& 0.7865 & 0.7898 &\textbf{0.8127}& 0.7857 & 0.7895 &\textbf{0.8230}\\
\midrule
\multicolumn{1}{c}{\multirow{5}{*}{Book}} 
& N@1 & 0.3194 & 0.3275 & 0.3299 & 0.3540 &\textbf{0.3769}& 0.3753 & 0.3826 &\textbf{0.4092}& 0.3689 & 0.3755 &\textbf{0.4271}\\
& N@5 & 0.4709 & 0.4773 & 0.4725 & 0.5000 &\textbf{0.5270}& 0.5170 & 0.5283 &\textbf{0.5314}& 0.5096 & 0.5211 &\textbf{0.5448}\\
& N@10 & 0.4987 & 0.5049 & 0.5069 & 0.5348 &\textbf{0.5526}& 0.5503 & 0.5620 &\textbf{0.5821}& 0.5435 & 0.5558 &\textbf{0.5827}\\
& HR@5 & 0.5852 & 0.5886 & 0.5987 & 0.6287 &\textbf{0.6455}& 0.6421 & 0.6573 &\textbf{0.6720}& 0.6353 & 0.6511 &\textbf{0.6648}\\
& HR@10 & 0.6706 & 0.6738 & 0.7048 & 0.7348 &\textbf{0.7426}& 0.7503 & 0.7620 &\textbf{0.7751}& 0.7435 & 0.7558 &\textbf{0.7742}\\
& AUC & 0.8329 & 0.8318 & 0.8768 & 0.8908 &\textbf{0.9136}& 0.8962 & 0.9040 &\textbf{0.9315}& 0.8939 & 0.9026 &\textbf{0.9371}\\
\bottomrule
\end{tabular}
}
\end{table*}
\begin{table*}[htbp]
\centering
\caption{Performance comparison of CDR to cold-start users in three CDR scenarios. }
\label{tab:cdr_performance}
\resizebox{\linewidth}{!}{ 
\begin{tabular}{l|cc|cc|cc|cc|cc|cc|cc|cc|cc}
\toprule
\multirow{2}{*}{\textbf{Methods}} & \multicolumn{6}{c|}{\textbf{Movie$\rightarrow$Music}} & \multicolumn{6}{c|}{\textbf{Book$\rightarrow$Movie}} & \multicolumn{6}{c}{\textbf{Book$\rightarrow$Music}} \\
\cline{2-19}
 & \multicolumn{2}{c|}{\textbf{20\%}} & \multicolumn{2}{c|}{\textbf{50\%}} & \multicolumn{2}{c|}{\textbf{80\%}} & \multicolumn{2}{c|}{\textbf{20\%}} & \multicolumn{2}{c|}{\textbf{50\%}} & \multicolumn{2}{c|}{\textbf{80\%}} & \multicolumn{2}{c|}{\textbf{20\%}} & \multicolumn{2}{c|}{\textbf{50\%}} & \multicolumn{2}{c}{\textbf{80\%}} \\
\cline{2-19}
 & \textbf{MAE} & \textbf{RMSE} & \textbf{MAE} & \textbf{RMSE} & \textbf{MAE} & \textbf{RMSE} & \textbf{MAE} & \textbf{RMSE} & \textbf{MAE} & \textbf{RMSE} & \textbf{MAE} & \textbf{RMSE} & \textbf{MAE} & \textbf{RMSE} & \textbf{MAE} & \textbf{RMSE} & \textbf{MAE} & \textbf{RMSE} \\
\midrule
TGT & 4.4803 & 5.1580 & 4.4989 & 5.1736 & 4.5020 & 5.1891 & 4.1831 & 4.7536 & 4.2288 & 4.7920 & 4.2123 & 4.8149 & 4.4873 & 5.1672 & 4.5073 & 5.1727 & 4.5024 & 5.2308 \\
CMF & 1.5209 & 2.0158 & 1.6893 & 2.2271 & 2.4186 & 3.0936 & 1.3632 & 1.7918 & 1.5813 & 2.0886 & 2.1577 & 2.6777 & 1.8284 & 2.3829 & 2.1282 & 2.7275 & 3.0130 & 3.6948 \\
DCDCSR & 1.4918 & 1.9210 & 1.8144 & 2.3439 & 2.7194 & 3.3065 & 1.3971 & 1.7346 & 1.6731 & 2.0551 & 2.3618 & 2.7702 & 1.8411 & 2.2955 & 2.1736 & 2.6771 & 3.1405 & 3.5842 \\
SSCDR & 1.3017 & 1.6579 & 1.3762 & 1.7477 & 1.5046 & 1.9229 & 1.2390 & 1.6526 & 1.2137 & 1.5602 & 1.3172 & 1.7024 & 1.5414 & 1.9283 & 1.4739 & 1.8441 & 1.6414 & 2.1403 \\
CATN & 1.2671 & 1.6468 & 1.4890 & 1.9205 & 1.8182 & 2.2991 & 1.1249 & 1.4548 & 1.1598 & 1.4826 & 1.2672 & 1.6280 & 1.3924 & 1.7399 & 1.6023 & 2.0600 & 1.9719 & 2.5623 \\
EMCDR & 1.2350 & 1.5515 & 1.3277 & 1.6644 & 1.5008 & 1.8771 & 1.1162 & 1.4120 & 1.1832 & 1.4981 & 1.3156 & 1.6433 & 1.3524 & 1.6777 & 1.4723 & 1.8665 & 1.7571 & 2.2119 \\
PTUPCDR & 1.1504 & 1.5195 & 1.2804 & 1.6380 & 1.4049 & 1.8234 & 0.9970 & 1.3317 & 1.0894 & 1.4395 & 1.1999 & 1.5916 & 1.2286 & 1.6085 & 1.3764 & 1.7447 & 1.5784 & 2.0510 \\
DiffCDR & 1.0435 & 1.3840 & 1.2367 & 1.6850 & 1.5606 & 2.1754 & 0.9476 & 1.2338 & 0.9953 & 1.3155 & 1.0846 & 1.4695 & 1.1720 & 1.5390 & 1.3077 & 1.8255 & 1.5871 & 2.2110 \\
REMIT & 0.9393 & 1.2709 & 1.0437 & 1.4589 & 1.2181 & 1.6601 & 0.8759 & 1.1650 & 0.9172 & 1.2379 & 0.8055 & 1.3772 & 1.3249 & 1.9940 & 1.4401 & 2.0495 & 1.6396 & 2.2653 \\
CDRNP & 0.7974 & 1.0638 & 0.7969 & 1.0589 & 0.8280 & 1.0758 & 0.8846 & 1.1327 & 0.8946 & 1.1450 & 0.8970 & 1.1576 & 0.7453 & 0.9914 & 0.7629 & 1.0111 & 0.7787 & 1.0373 \\

DMCDR & \underline{0.6173} & \underline{0.9425} & \underline{0.6015} & \underline{0.9559} & 0\underline{.6450} & \underline{0.9706} & \underline{0.5904} & \underline{0.9929} & \underline{0.6050} & \underline{0.9859} & \underline{0.6227} & \underline{1.0002} & \underline{0.5360} & \underline{0.8726} & \underline{0.5563} & \underline{0.8842} & \underline{0.5589} & \underline{0.8992} \\
\midrule
 \textbf{FairDiff} & \textbf{0.5827} & \textbf{0.9234} & \textbf{0.5837} & \textbf{0.9281} & \textbf{0.6341} & \textbf{0.9530} & \textbf{0.5738} & \textbf{0.9704} & \textbf{0.5898} & \textbf{0.9685} & \textbf{0.6040} & \textbf{0.9847} & \textbf{0.5124} & \textbf{0.8592} & \textbf{0.5299} & \textbf{0.8589} & \textbf{0.5219} & \textbf{0.8548} \\

\bottomrule
\end{tabular}
}
\end{table*}

\subsection{Efficiency}
We report the efficiency evaluation results of the proposed method in Table~\ref{Efficiency}, where all models are implemented and run on an A100 GPU.

\subsection{More Performance Comparison across items of varying popularity (RQ3)}

\begin{table*}[ht]
\centering
\small  
\caption{Performance across item popularity and user sequence length on sequential recommendation datasets.}
\label{tab:item_user_performance_more}
\setlength{\tabcolsep}{2pt} 
\renewcommand{\arraystretch}{1.2} 
\resizebox{\textwidth}{!}{%
\begin{tabular}{c|c|c|ccccccccccc|cc|cc|cc} 
\toprule[1.5pt]
\textbf{Dataset} & \textbf{Metric} & \textbf{Dimension} & \textbf{Popularity} &
\textbf{SASRec} & \textbf{DIN} & \textbf{BERT4Rec} & \textbf{ComiRec} & \textbf{TiMiRec} & \textbf{TIGER} & \textbf{BASRec} & \textbf{HSTU} & \textbf{ACVAE} & \textbf{CSRec} &
\textbf{DiffuRec} & \textbf{+FairDiff} & \textbf{DreamRec} & \textbf{+FairDiff} & \textbf{CDiff4Rec} & \textbf{+FairDiff} \\
\midrule
\multirow{12}{*}{\textbf{Steam}} & \multirow{6}{*}{H@20}
& \multirow{3}{*}{Item} & Cold & 0.32 & 0.41 & 0.25 & 0.21 & 0.38 & 0.45 & 0.52 & 0.50 & 0.36 & 0.55 & 0.61 & \textbf{0.82} & 0.78 & \textbf{0.95} & 0.85 & \textbf{1.02} \\
& & & Mid & 4.15 & 4.82 & 3.97 & 3.25 & 4.68 & 4.95 & 5.12 & 4.98 & 4.72 & 5.36 & 5.21 & \textbf{5.45} & 5.02 & \textbf{5.26} & 5.38 & \textbf{5.62} \\
& & & Hot & 15.28 & 16.95 & 14.87 & 13.65 & 16.25 & 15.82 & 16.53 & 16.10 & 15.76 & 17.23 & 17.85 & \textbf{19.12} & 19.08 & \textbf{20.23} & 18.95 & \textbf{21.05} \\
\cmidrule(lr){3-20}
& & \multirow{3}{*}{User} & Cold & 6.25 & 7.18 & 6.02 & 5.38 & 6.95 & 7.52 & 7.88 & 7.65 & 7.02 & 8.15 & 7.95 & \textbf{16.82} & 17.49 & \textbf{18.56} & 8.52 & \textbf{19.10} \\
& & & Mid & 12.35 & 13.18 & 11.97 & 10.85 & 12.88 & 13.52 & 13.85 & 13.62 & 12.95 & 14.23 & 14.58 & \textbf{16.95} & 17.36 & \textbf{17.52} & 15.10 & \textbf{18.02} \\
& & & Hot & 14.82 & 15.95 & 14.56 & 13.98 & 15.78 & 16.25 & 16.68 & 16.35 & 15.89 & 17.52 & 17.95 & \textbf{18.88} & 16.27 & \textbf{17.35} & 17.23 & \textbf{19.05} \\
\cmidrule(lr){2-20}
& \multirow{6}{*}{N@20} & \multirow{3}{*}{Item} & Cold & 0.15 & 0.21 & 0.12 & 0.10 & 0.18 & 0.22 & 0.25 & 0.23 & 0.16 & 0.26 & 0.28 & \textbf{0.35} & 0.33 & \textbf{0.41} & 0.36 & \textbf{0.45} \\
& & & Mid & 1.85 & 2.12 & 1.78 & 1.56 & 2.05 & 2.21 & 2.35 & 2.28 & 2.02 & 2.45 & 2.38 & \textbf{2.52} & 2.17 & \textbf{2.30} & 2.42 & \textbf{2.58} \\
& & & Hot & 6.58 & 7.25 & 6.32 & 5.89 & 7.05 & 7.38 & 7.65 & 7.42 & 7.18 & 8.02 & 7.85 & \textbf{8.52} & 8.20 & \textbf{8.80} & 8.35 & \textbf{9.02} \\
\cmidrule(lr){3-20}
& & \multirow{3}{*}{User} & Cold & 3.25 & 3.88 & 3.12 & 2.85 & 3.75 & 4.02 & 4.25 & 4.18 & 3.82 & 4.35 & 4.28 & \textbf{7.95} & 7.46 & \textbf{8.38} & 4.52 & \textbf{8.65} \\
& & & Mid & 5.82 & 6.55 & 5.68 & 5.25 & 6.38 & 6.72 & 6.95 & 6.82 & 6.52 & 7.15 & 7.08 & \textbf{7.35} & 7.41 & \textbf{7.51} & 7.22 & \textbf{7.68} \\
& & & Hot & 6.95 & 7.68 & 6.72 & 6.35 & 7.48 & 7.82 & 8.05 & 7.92 & 7.65 & 8.23 & 8.18 & \textbf{8.65} & 7.18 & \textbf{8.08} & 8.32 & \textbf{9.15} \\
\midrule
\multirow{12}{*}{\textbf{Beauty}} & \multirow{6}{*}{H@20} & \multirow{3}{*}{Item} & Cold & 1.05 & 1.32 & 0.98 & 0.85 & 1.25 & 1.42 & 1.58 & 1.52 & 1.18 & 1.65 & 1.52 & \textbf{2.05} & 1.88 & \textbf{2.25} & 2.21 & \textbf{2.45} \\
& & & Mid & 2.85 & 3.12 & 2.78 & 2.56 & 3.05 & 3.22 & 3.38 & 3.32 & 3.08 & 3.45 & 3.38 & \textbf{3.50} & 3.25 & \textbf{3.42} & 3.54 & \textbf{3.57} \\
& & & Hot & 12.58 & 13.25 & 11.97 & 10.85 & 12.88 & 13.52 & 14.05 & 13.82 & 13.18 & 14.52 & 14.38 & \textbf{16.82} & 15.95 & \textbf{17.23} & 17.08 & \textbf{19.14} \\
\cmidrule(lr){3-20}
& & \multirow{3}{*}{User} & Cold & 4.25 & 4.88 & 4.12 & 3.85 & 4.75 & 5.02 & 5.28 & 5.21 & 4.82 & 5.35 & 5.28 & \textbf{9.85} & 8.25 & \textbf{9.68} & 8.72 & \textbf{10.28} \\
& & & Mid & 8.95 & 9.68 & 8.72 & 8.35 & 9.48 & 9.82 & 10.05 & 9.92 & 9.65 & 10.23 & 10.18 & \textbf{11.52} & 9.88 & \textbf{11.25} & 10.80 & \textbf{11.83} \\
& & & Hot & 12.82 & 13.95 & 12.56 & 11.98 & 13.78 & 14.25 & 14.68 & 14.35 & 13.89 & 15.02 & 14.95 & \textbf{16.58} & 14.52 & \textbf{16.10} & 15.13 & \textbf{17.61} \\
\cmidrule(lr){2-20}
& \multirow{6}{*}{N@20} & \multirow{3}{*}{Item} & Cold & 0.65 & 0.82 & 0.62 & 0.55 & 0.78 & 0.88 & 0.95 & 0.92 & 0.75 & 1.02 & 0.98 & \textbf{1.15} & 0.95 & \textbf{1.12} & 1.20 & \textbf{1.40} \\
& & & Mid & 1.58 & 1.75 & 1.52 & 1.38 & 1.72 & 1.85 & 1.92 & 1.88 & 1.68 & 2.05 & 1.98 & \textbf{2.12} & 1.75 & \textbf{1.88} & 1.88 & \textbf{1.97} \\
& & & Hot & 5.82 & 6.55 & 5.68 & 5.25 & 6.38 & 6.72 & 6.95 & 6.82 & 6.52 & 7.15 & 7.08 & \textbf{8.25} & 7.88 & \textbf{8.32} & 8.51 & \textbf{9.55} \\
\cmidrule(lr){3-20}
& & \multirow{3}{*}{User} & Cold & 2.25 & 2.88 & 2.12 & 1.85 & 2.75 & 3.02 & 3.25 & 3.18 & 2.82 & 3.35 & 3.28 & \textbf{4.05} & 3.95 & \textbf{4.88} & 4.35 & \textbf{5.30} \\
& & & Mid & 4.82 & 5.55 & 4.68 & 4.25 & 5.38 & 5.72 & 5.95 & 5.82 & 5.52 & 6.15 & 6.08 & \textbf{5.75} & 5.68 & \textbf{5.82} & 5.39 & \textbf{5.89} \\
& & & Hot & 6.25 & 6.98 & 6.12 & 5.85 & 6.75 & 7.02 & 7.25 & 7.18 & 6.82 & 7.35 & 7.28 & \textbf{8.15} & 6.95 & \textbf{7.88} & 8.00 & \textbf{9.13} \\
\bottomrule[1.5pt]
\end{tabular}%
}
\end{table*}
Results are in Table~\ref{tab:item_user_performance_more}.

\subsection{Temporal Modeling in DRMs}
As shown in Table~\ref{tab:backbones_pdrec}, FairDiff consistently surpasses both diffusion-based baselines (T-DiffRec, TI-DiffRec) and diverse backbones (GRU4Rec, SASRec, CL4SRec) across datasets (Toy, Game, Book) and metrics (NDCG@k, HR@k, AUC). The gains are statistically significant (p<0.05), highlighting robustness and generality. Moreover, FairDiff adapts to dataset characteristics: on sparse data it improves hit rates, on medium-density data it enhances discrimination (AUC), and on dense corpora it sharpens ranking accuracy. These results demonstrate that FairDiff’s temporal modeling and distributional alignment mechanisms unlock the latent potential of heterogeneous recommendation architectures, establishing it as a universally effective enhancement for diffusion-based models.

\subsection{Cross-domain Recommendation Under Cold-start Scenarios}
We further evaluate our method on additional cross-domain datasets under cold-start scenarios, as shown in Table~\ref{tab:cdr_performance}. We compare FairDiff with the following competitive baselines. TGT~\cite{rendle2012bpr} represents collaborative filtering methods based on matrix factorization (MF), using only interactions from the target domain. CMF~\cite{singh2008relational} adopts collective matrix factorization to share user embeddings across source and target domains. EMCDR~\cite{man2017cross} is the first mapping-based method, which learns MF embeddings for users and items in both domains and transfers user preferences via a linear mapping function. DCDCSR~\cite{zhu2020deep} uses MF to learn latent factors and applies a deep neural network to map them across domains. CATN~\cite{zhao2020catn} is a review-based model that aligns aspect-level user–item correlations across domains. SSCDR~\cite{kang2019semi} extends EMCDR by introducing semi-supervised metric space mapping with multi-hop neighborhood inference. DiffCDR~\cite{xuan2024diffusion} employs diffusion models to generate user representations in the target domain without explicitly transferring personalized preferences. PTUPCDR~\cite{zhu2022personalized} follows a meta-learning paradigm and learns personalized mapping functions for each user. REMIT~\cite{sun2023remit} further enhances PTUPCDR by modeling multiple user interests with multiple mapping functions. CDRNP~\cite{li2024cdrnp} is a state-of-the-art CDR approach that adopts neural processes within a meta-learning framework to capture preference correlations among both overlapping and cold-start users.
As demonstrated in Table~\ref{tab:cdr_performance}, FairDiff achieves state-of-the-art performance across all three cross-domain recommendation scenarios (Movie→Music, Book→Movie, Book→Music) under varying cold-start sparsity levels (20\%, 50\%, 80\%). It consistently outperforms the strongest baseline DMCDR (second-best performer) in all 27 evaluation metrics (9 experimental settings × MAE/RMSE). 

\subsection{Self-Reinforcing Matthew Effect}

We empirically demonstrate the existence of the \emph{self-reinforcing Matthew Effect} in sequential recommendation.
As illustrated in Figure~\ref{matai}, across all compared models, the performance on cold items consistently deteriorates over training epochs, while that on hot items steadily improves.
This phenomenon reflects a cumulative advantage mechanism: items that initially receive higher exposure accumulate more interactions, which further amplifies their likelihood of being recommended in subsequent training cycles.
Notably, Diffusion Recommender Models (DRMs) exhibit a more pronounced pattern.
Beyond the standard popularity bias observed in conventional models, DRMs suffer from an additional \emph{two-stage bias}.
Specifically, the forward corruption process and the reverse denoising generation jointly amplify popularity-conditioned signals, causing errors on under-exposed items to accumulate over time.
As a result, DRM achieves the best performance on hot items but the worst on cold items among all baselines, indicating a continuous polarization effect during training.
In contrast, \textbf{FairDiff} effectively mitigates this self-reinforcing dynamic.
By explicitly regularizing popularity-conditioned generation and correcting exposure imbalance during the diffusion process, FairDiff stabilizes performance across popularity groups, preventing the progressive degradation on cold items while preserving competitive accuracy on hot items.
These results confirm that FairDiff not only alleviates static popularity bias, but also suppresses its self-reinforcing amplification throughout training.

\begin{figure*}[htbp]
  \centering
  \includegraphics[width=0.8\textwidth]{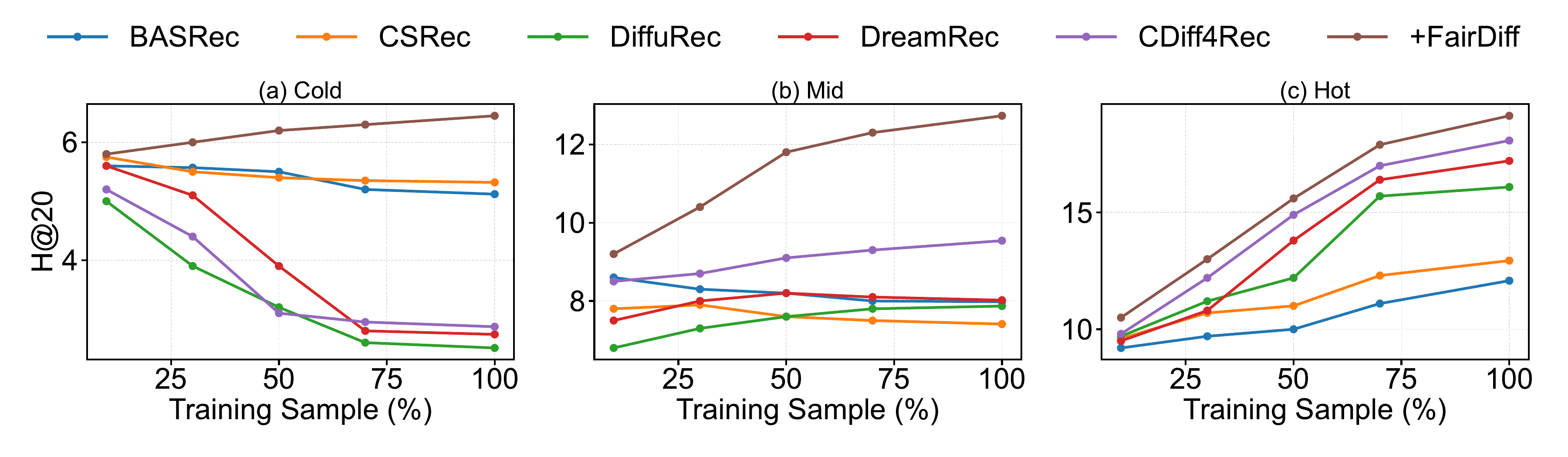}
  \caption{Self-Reinforcing Matthew Effect.}
  \label{matai}
\end{figure*}

\begin{table*}[htbp]
\centering
\small
\caption{Multimodal Recommendation Results.}
\setlength{\tabcolsep}{1.2pt} 
\resizebox{\textwidth}{!}{%
\begin{tabular}{c|c|ccccccccc|ccc|ccc}

\hline
\textbf{Dataset} & \textbf{Metric} & \textbf{BPR}& \textbf{LightGCN}&\textbf{VBPR}&\textbf{MMGCN}& \textbf{DualGNN}& \textbf{SLMRec}& \textbf{BM3}& \textbf{MGCN}& \textbf{Freedom}&  \textbf{DiffMM} & \textbf{+FairDiff} & \textbf{Imp.} & \textbf{DiffCL} & \textbf{+FairDiff} & \textbf{Imp.} \\
\hline
\multirow{6}{*}{TikTok} 
& H@5  & 0.013 & 0.022 & 0.017 & 0.025 & 0.027 & 0.028 & 0.031 & 0.034 & 0.035  & 0.037 & \textbf{0.053} & {+43.24\%} & 0.043 & \textbf{0.058} & {+34.9\%} \\
& H@10 &0.022&	0.031&	0.028&	0.038&	0.043&	0.044	&0.049	&0.054&	0.056 &   0.068 & \textbf{0.099} & {+45.6\%} & 0.075 & \textbf{0.108} & {+44.0\%} \\
& H@20 &0.040&	0.052&	0.049&	0.062&	0.068&	0.071	&0.078&	0.085&	0.089	& 0.110 & \textbf{0.128} & {+16.4\%} & 0.122 & \textbf{0.139} & {+13.9\%} \\
& N@5 &  0.007&	0.014&	0.010	&0.016	&0.017&	0.017	&0.019&	0.021&	0.022 & 0.025 & \textbf{0.041} & {+64.0\%} & 0.026 & \textbf{0.048} & {+84.6\%} \\
& N@10 &0.012 &	0.020 &	0.015 &	0.022 &	0.023 &	0.024 &	0.026 &	0.028 &	0.030 & 0.032 & \textbf{0.044} & {+37.5\%} & 0.037 &\textbf{ 0.055} & {+48.6\%} \\
& N@20 & 0.019&	0.028&	0.022&	0.031&	0.033	&0.034&	0.037&	0.040&	0.042 & 0.045 & \textbf{0.052} & {+15.6\%} & 0.049 & \textbf{0.067} & {+36.7\%} \\
\hline
\multirow{6}{*}{Sports} 
& H@5 &0.024&	0.034&	0.028	&0.033&	0.037&	0.039&	0.041&	0.043&	0.040 & 0.044 & \textbf{0.060} & {+36.4\%} & 0.047 & \textbf{0.073} & {+55.3\%} \\
& H@10& 0.041&	0.052&	0.048&	0.056&	0.057&	0.065&	0.061&	0.064&	0.060 & 0.068 & \textbf{0.083} & {+22.1\%} & 0.073 & \textbf{0.095} & {+30.1\%} \\
& H@20& 0.072& 	0.084& 	0.076& 	0.087& 	0.091& 	0.095& 	0.098& 	0.099& 	0.094 & 0.102 & \textbf{0.121} & +{18.6\%} & 0.110 & \textbf{0.134} & {+21.82\%} \\
& N@5&   0.009&	0.017&	0.012&	0.018	&0.023&	0.020&	0.021&	0.028&	0.025 & 0.029 & \textbf{0.035} & {+20.7\%} & 0.032 & \textbf{0.039} & {+21.9\%} \\
& N@10&  0.015&	0.021&	0.017	&0.022&	0.025&	0.028&	0.029&	0.031&	0.036& 0.037 & \textbf{0.044} & {+18.9\%} & 0.040 & \textbf{0.053} & {+32.5\%} \\
& N@20&  0.024 &	0.034 &	0.027 &	0.039 &	0.040 &	0.041 &	0.044 &	0.045	 &0.042 & 0.046 & \textbf{0.055} & {+19.6\%} & 0.051 & \textbf{0.059} & {+15.7\%} \\
\hline
\multirow{6}{*}{Baby} 
& H@5& 0.009	& 0.016	& 0.012& 	0.015&	0.019&	0.022&	0.024&	0.028	&0.029 & 0.031 & \textbf{0.050} & {+61.3\%} & 0.039 & \textbf{0.052} & {+33.3\%} \\
& H@10&   0.026&	0.034&	0.029&	0.037&	0.040&	0.041	&0.044&	0.045&	0.051 & 0.056 & \textbf{0.089} & {+58.9\%} & 0.064 & \textbf{0.094} & {+46.9\%} \\
& H@20& 0.056& 	0.065& 	0.059& 	0.069& 	0.073& 	0.076& 	0.078& 	0.082& 	0.089  & 0.096 & \textbf{0.114} & {+18.8\%} & 0.099 & \textbf{0.115} & {+16.2\%} \\
& N@5&   0.004&	0.010&	0.007&	0.011&	0.012&	0.014&	0.015&	0.018	&0.021& 0.023 & \textbf{0.033} & {+43.5\%} & 0.027 & \textbf{0.041} & {+51.9\%} \\
& N@10&   0.011	&0.018&	0.014&	0.020&	0.021&	0.028&	0.025&	0.031&	0.034	  & 0.039 & \textbf{0.042} & {+7.7\%} & 0.034 & \textbf{0.040} & {+17.6\%} \\
& N@20& 0.017&	0.025&	0.020&	0.023&	0.028&	0.031&	0.032&	0.035	&0.039 & 0.041 & \textbf{0.049} & {+19.5\%} & 0.043 & \textbf{0.052} & {+20.9\%} \\
\hline

\end{tabular}
}
\label{tab:additional_performance}
\end{table*}

\begin{table*}[htbp]
\centering
\small
\caption{Cross-domain Recommendation Results.}
\label{tab:ablation_cdsr}
\begin{tabular}{l|l|ccccccccccc}
\toprule
\textbf{Setting} & \textbf{Algorithm} & \textbf{N@1} & \textbf{N@5} & \textbf{N@10} & \textbf{N@20} & \textbf{N@50} & \textbf{HR@5} & \textbf{HR@10} & \textbf{HR@20} & \textbf{HR@50} & \textbf{AUC} \\
\midrule
\multirow{7}{*}{\textbf{Game $\rightarrow$ Toy} }& T-DiffRec (M)         & 0.0981 & 0.1520 & 0.1727 & 0.1934 & 0.2375 & 0.2029 & 0.2673 & 0.3494 & 0.5780 & 0.5924 \\
& TI-DiffRec (M)        & 0.1053 & 0.1598 & 0.1806 & 0.2008 & 0.2407 & 0.2111 & 0.2759 & 0.3562 & 0.5623 & 0.5932 \\
& SASRec (M)            & 0.1267 & 0.2019 & 0.2261 & 0.2490 & 0.2785 & 0.2722 & 0.3472 & 0.4380 & 0.5873 & 0.5951 \\
& PDRec(M)     & 0.1302 & 0.2093 & 0.2348 & 0.2574 & 0.2873 & 0.2826 & 0.3616 & 0.4515 & 0.6026 & 0.6106 \\
&+PCG & 0.1395 &0.2137&0.2497 &0.2674 &0.2935&0.3048&0.3784&0.4672&0.6173&0.6392\\
&+SC & 0.1382 &0.2125&0.2489 &0.2682 &0.2931&0.3045&0.3777&0.4668&0.6160&0.6386\\
&\textbf{+FairDiff(M)} & \textbf{0.1457} &\textbf{0.2247}&\textbf{0.2583}&\textbf{0.2738}&\textbf{0.3041}&\textbf{0.3148}&\textbf{0.3856}&\textbf{0.4756}&\textbf{0.6209}&\textbf{0.6412}\\
\midrule
\multirow{7}{*}{\textbf{Toy $\rightarrow$ Game}} & T-DiffRec (M)         & 0.1674 & 0.2643 & 0.2977 & 0.3247 & 0.3597 & 0.3548 & 0.4584 & 0.5655 & 0.7428 & 0.7232 \\
& TI-DiffRec (M)        & 0.1709 & 0.2757 & 0.3096 & 0.3378 & 0.3721 & 0.3723 & 0.4773 & 0.5887 & 0.7622 & 0.7407 \\
& SASRec (M)            & 0.2273 & 0.3532 & 0.3905 & 0.4190 & 0.4467 & 0.4674 & 0.5826 & 0.6955 & 0.8342 & 0.8007 \\
& PDRec(M)    & 0.2363 & 0.3623 & 0.3992 & 0.4275 & 0.4572 & 0.4761 & 0.5904 & 0.7022 & 0.8520 & 0.8153 \\
&+PCG & 0.2419 &0.3784 &0.4074 &0.4458 & 0.4638 &0.4893&0.6076 &0.7151 &0.8675 &0.8346\\
&+SC & 0.2411 &0.3778 &0.4063 &0.4463 & 0.4643 &0.4884&0.6071 &0.7148 &0.8679 &0.8342\\
&\textbf{+FairDiff(M)} & \textbf{0.2546} & \textbf{0.3853} &\textbf{0.4129}& \textbf{0.4627} & \textbf{0.4737} & \textbf{0.5002} & \textbf{0.6217} & \textbf{0.7314} & \textbf{0.8842} & \textbf{0.8416}\\
\bottomrule
\end{tabular}
\end{table*}

\subsection{Results of Multimodal Recommendation}  
We evaluate FairDiff on three multimodal sequential recommendation benchmarks and observe consistent gains over all baselines (Table~\ref{tab:additional_performance}), highlighting the transferability of its structured semantic priors to multimodal settings. By integrating the PCG and SC modules, FairDiff effectively incorporates multimodal contextual cues—ranging from visual embeddings to textual descriptions—into the diffusion process while preserving sequential dynamics, leading to more accurate alignment with user intent.
Remarkably, as we move from standard sequential recommendation tasks (Table~\ref{tab:performance}) to multimodal sequential tasks (Table~\ref{tab:additional_performance}), the performance advantage of FairDiff over baseline methods becomes increasingly pronounced with the addition of more modalities. This trend suggests that the structural distribution gap inherent in DRMs grows with modality complexity. FairDiff’s ability to effectively bridge this expanding gap under multimodal settings underscores its robustness and adaptability in capturing rich contextual signals.

\subsection{Results of Cross-domain Recommendation} 
FairDiff also demonstrates strong transferability in cross-domain recommendation (CDR). Following standard CDR protocols~\citep{ma2024triple,zheng2022ddghm}, we evaluate FairDiff on PDRec(M)~\citep{ma2024plug}, T-DiffRec(M)~\citep{ge2025time}, TI-DiffRec(M)~\citep{ma2024seedrec}, and SASRec(M)~\citep{kang2018self}, where (M) indicates direct chronological mixing of user behaviors from both source and target domains. Experiments are conducted on the Toy→Game and Game→Toy transfer settings. As shown in Table~\ref{tab:ablation_cdsr}, FairDiff consistently outperforms all diffusion-based baselines across CDR tasks, indicating its applicability beyond single-domain scenarios. Its PCG and SC offer an intuitive yet effective mechanism to inject semantic priors and amplify weak signals under domain shifts, paving the way for future research in this direction.
Furthermore, FairDiff surpasses all its ablated variants in most CDR settings, with each component contributing incremental gains. These results reaffirm the effectiveness and generalizability of PCG and SC within diffusion-based architectures.



\section{Theory}
\label{the}
\subsection{Matthew Effect in Diffusion Recommender Models}
\label{sec:drm_matthew}

We analyze the origin of the Matthew Effect in Diffusion Recommender Models (DRMs) from a probabilistic and dynamical perspective. In contrast to softmax-based recommenders, DRMs learn a generative score function through iterative denoising, where long-tail bias emerges intrinsically from the training objective and sampling process.

\subsubsection{Problem Formulation}

Let $\mathcal{D} = \{(u, i)\}$ denote the user--item interaction dataset, following a highly skewed long-tail distribution over items. Let $x_0^{(i)} \in \mathbb{R}^d$ denote the latent representation of item $i$, optionally conditioned on user $u$. A diffusion recommender model defines a forward noising process:
\begin{equation}
q(x_t \mid x_0) = \mathcal{N}(\sqrt{\alpha_t} x_0, (1-\alpha_t) I),
\end{equation}
and learns a reverse denoising process parameterized by a score network $s_\theta(x_t, t)$.

The training objective minimizes the expected denoising error:
\begin{align}
\mathcal{L}(\theta) =& \mathbb{E}_{(u,i) \sim \mathcal{D}} 
\mathbb{E}_{t \sim \mathcal{U}[1,T]}\notag\\
&
\mathbb{E}_{x_t \sim q(x_t \mid x_0^{(i)})}
\left[
\| s_\theta(x_t, t) - \nabla_{x_t} \log q(x_t \mid x_0^{(i)}) \|^2
\right].
\label{eq:ddpm_loss}
\end{align}

\subsubsection{Bias I: Gradient Dominance under Unconditional Denoising}

We first show that the unconditional (or weakly conditioned) denoising objective induces a gradient dominance effect favoring head items.

\begin{definition}[Item Frequency]
Let $f(i)$ denote the empirical frequency of item $i$ in the training set:
\begin{align}
f(i) = \sum_{u} \mathbb{I}[(u,i) \in \mathcal{D}].
\end{align}
\end{definition}

\begin{proposition}[Gradient Dominance by Head Items]
\label{prop:gradient_dominance}
Under the objective in Eq.~\ref{eq:ddpm_loss}, the expected gradient contribution of item $i$ to the score network satisfies:
\[
\mathbb{E}[\|\nabla_\theta \mathcal{L}_i\|] \propto f(i),
\]
where $\mathcal{L}_i$ denotes the loss term associated with item $i$.
\end{proposition}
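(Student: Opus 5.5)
The plan is to make the statement precise by decomposing the empirical objective in Eq.~\ref{eq:ddpm_loss} by item. Since the outer expectation is over $(u,i)\sim\mathcal{D}$ with the empirical measure, grouping interaction pairs by their item index gives
\[
\mathcal{L}(\theta)=\frac{1}{|\mathcal{D}|}\sum_{i} f(i)\,\ell_i(\theta),
\]
where
\[
\ell_i(\theta)=\mathbb{E}_{t}\,\mathbb{E}_{x_t\sim q(x_t\mid x_0^{(i)})}\big[\|s_\theta(x_t,t)-\nabla_{x_t}\log q(x_t\mid x_0^{(i)})\|^2\big]
\]
is the per-sample denoising loss of item $i$. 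Two things make this grouping valid. First, $x_0^{(i)}$ is the same representation for every pair containing item $i$; in the weakly conditioned case we take it not to depend on $u$, or we absorb the user dependence into an average defining $\ell_i$. Second, the inner expectations do not depend on the frequency. I will therefore define the item-$i$ loss term as $\mathcal{L}_i:=\frac{f(i)}{|\mathcal{D}|}\ell_i(\theta)$. By linearity of differentiation this gives $\nabla_\theta\mathcal{L}_i=\frac{f(i)}{|\mathcal{D}|}\nabla_\theta\ell_i(\theta)$.

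Next, take norms and expectations. The randomness comes from $t$ and $x_t$, and also from a minibatch stochastic-gradient estimate if we want the statement at the SGD level. In the population version, $\|\nabla_\theta\mathcal{L}_i\|=\frac{f(i)}{|\mathcal{D}|}\|\nabla_\theta\ell_i\|$ holds exactly. For the minibatch version, the number of occurrences of item $i$ in a batch of size $B$ is binomial with mean $Bf(i)/|\mathcal{D}|$. Because each occurrence contributes an i.i.d.\ per-sample gradient $g_i$ with the same law, Wald's identity applies. Combined with the triangle inequality, it gives upper and lower bounds that are both linear in $f(i)$, with constants $\mathbb{E}\|g_i\|$ and $\|\mathbb{E}g_i\|$.

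The proportionality $\propto f(i)$ then holds exactly when the per-item factor $G_i:=\mathbb{E}\|\nabla_\theta\ell_i\|$ is uniform across items. I will state this assumption explicitly: item representations are comparably scaled, and at initialization or under a homogeneous network regime the per-sample gradient magnitudes do not depend systematically on the item. This makes sense because the target $\nabla\log q(x_t\mid x_0)=-(x_t-\sqrt{\alpha_t}x_0)/(1-\alpha_t)=-\epsilon/\sqrt{1-\alpha_t}$ has a distribution independent of $i$.

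The main obstacle is this uniformity step. After training, $\ell_i$ for head items shrinks, so $G_i$ is not truly item-independent. I would handle it in the weaker form $c\,f(i)\le\mathbb{E}\|\nabla_\theta\mathcal{L}_i\|\le C\,f(i)$. The bounds come from assuming the per-sample gradients are bounded above and below, for instance via Lipschitz and non-degeneracy assumptions on $s_\theta$ together with the noise-target calculation above. I would interpret $\propto$ as this two-sided proportionality, which is the form needed downstream to argue that head items dominate the aggregate gradient direction.
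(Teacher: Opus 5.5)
Your proposal follows the paper's route: use linearity to decompose the objective by item, so that the empirical sampling weight $p(i)=f(i)/|\mathcal{D}|$ multiplies the per-item denoising gradient, and then take norms. The paper's proof absorbs the remaining item-dependent factor into ``a constant determined by the conditional denoising difficulty of item $i$,'' which is exactly the quantity $G_i$ you isolate, so your explicit uniformity assumption (or the two-sided form $c\,f(i)\le\mathbb{E}\|\nabla_\theta\mathcal{L}_i\|\le C\,f(i)$) states precisely what the paper leaves implicit.
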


\begin{proof}
Consider a single Monte Carlo step of stochastic gradient descent. A training sample $(u,i)$ is drawn uniformly from $\mathcal{D}$, which induces the marginal sampling probability $p(i)$ for item $i$.

Let $\ell(x_t^{(i)}, t)$ denote the inner denoising loss for a fixed item $i$, time step $t$, and noise realization. By linearity of expectation, the expected gradient of the overall objective can be decomposed as
\begin{align}
\mathbb{E}\bigl[ \nabla_\theta \mathcal{L}(\theta) \bigr]
&=
\sum_{i} p(i),
\mathbb{E}!\left[
\nabla_\theta \ell(x_t^{(i)}, t)
;\middle|; i
\right].
\end{align}

Since the objective is unconditional (or weakly conditioned), there exists no item-dependent reweighting or normalization term that counteracts $p(i)$. Therefore, the expected frequency with which gradients associated with item $i$ are accumulated is proportional to $f(i)$.

Taking norms on both sides yields
\[
\mathbb{E}!\left[ \bigl| \nabla_\theta \mathcal{L}_i \bigr| \right]
\propto f(i),
\]
up to a constant determined by the conditional denoising difficulty of item $i$, completing the proof.
\end{proof}

\begin{corollary}[Semantic Collapse of Tail Items]
For tail items with small $f(i)$, the learned score function $s_\theta$ poorly approximates $\nabla_{x_t} \log q(x_t \mid x_0^{(i)})$, causing their semantic signal to be rapidly overwhelmed by noise during forward diffusion and irrecoverable during iterative denoising.
\end{corollary}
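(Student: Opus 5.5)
The statement to prove is the Corollary (Semantic Collapse of Tail Items). It is qualitative, so the plan is to make it precise and then derive it from Proposition~\ref{prop:gradient_dominance} combined with a standard approximation argument. I would fix a finite-capacity score network $s_\theta$ and define the per-item score error
\[
\mathcal{E}_i(\theta)=\mathbb{E}_{t}\,\mathbb{E}_{x_t\sim q(x_t\mid x_0^{(i)})}\bigl\|s_\theta(x_t,t)-\nabla_{x_t}\log q(x_t\mid x_0^{(i)})\bigr\|^2 .
\]
The target statement is that, at a stationary point $\theta^\star$ of Eq.~\ref{eq:ddpm_loss}, $\mathcal{E}_i(\theta^\star)$ is non-increasing in $f(i)$ whenever item regions compete for capacity. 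Equivalently, tail items carry a larger residual error.

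The first step rewrites the objective as $\mathcal{L}=\sum_i p(i)\,\mathcal{E}_i(\theta)$ with $p(i)\propto f(i)$. This is exactly the decomposition used in the proof of Proposition~\ref{prop:gradient_dominance}.

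The second step is an exchange argument. At $\theta^\star$ we have $\sum_i f(i)\nabla_\theta\mathcal{E}_i=0$. Consider two items $h$ (head) and $k$ (tail) with $f(h)\gg f(k)$ whose score targets conflict, meaning their diffused supports overlap for large $t$. Any parameter move that reduces $\mathcal{E}_k$ by $\delta$ at the cost of raising $\mathcal{E}_h$ by $\delta'$ is rejected unless $f(k)\delta>f(h)\delta'$. Iterating this over the overlapping region forces the optimum to sit near the head target, i.e. the frequency-weighted conditional mean of the score. This gives a lower bound of the form
\[
\mathcal{E}_k(\theta^\star)\gtrsim \Bigl(\tfrac{f(h)}{f(h)+f(k)}\Bigr)^2\,\mathbb{E}\bigl\|\nabla\log q(\cdot\mid x_0^{(h)})-\nabla\log q(\cdot\mid x_0^{(k)})\bigr\|^2
\]
on the overlap. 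The bound follows from the closed-form minimizer of a weighted sum of squared errors, which is the weighted average of the targets.

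The third step turns this score error into semantic loss. I would use the Gaussian forward kernel, whose signal-to-noise ratio is $\alpha_t/(1-\alpha_t)$ and decays in $t$. For large $t$ the overlap region carries most of the mass of $q(x_t\mid x_0^{(k)})$, so the tail signal is "overwhelmed by noise". For recoverability, I would invoke a Girsanov/KL bound for the reverse process: the KL divergence between the generated and true tail-conditional laws is at most $\tfrac12\int g(t)^2\mathcal{E}_k\,dt$. Since $\mathcal{E}_k$ is bounded below over a time interval of positive length, the denoising trajectory started from the overlap concentrates on the head mode and does not return to $x_0^{(k)}$.

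I expect the second step to be the main obstacle. Proposition~\ref{prop:gradient_dominance} alone controls only gradient frequency and says nothing about residual error. Converting it into an error lower bound requires an explicit capacity or overlap assumption; otherwise a universal approximator could fit every item exactly. I would first try the pointwise weighted-least-squares argument in the overlap region, treating $s_\theta$ as locally unable to separate $h$ from $k$. If that assumption proves too strong, I would fall back on a finite-sample version: tail items with few samples have estimation variance of order $1/f(k)$, giving $\mathcal{E}_k=\Omega(1/f(k))$ by a standard bias-variance argument.
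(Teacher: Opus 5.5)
The paper gives no separate argument for this corollary; it simply reads it off Proposition~\ref{prop:gradient_dominance}. You are right that gradient frequency alone says nothing about residual error. Your step~2 is the right tool, but you misplace the obstacle: no capacity or overlap assumption is needed. The target $\nabla_{x_t}\log q(x_t\mid x_0^{(i)})$ depends on $i$, while $s_\theta$ sees only $(x_t,t)$. So even over all measurable functions, the minimizer of Eq.~\eqref{eq:ddpm_loss} is the posterior average $s^\ast=\sum_j w_j(x_t)\nabla_{x_t}\log q(x_t\mid x_0^{(j)})$, with $w_j(x_t)=p(j)\,q(x_t\mid x_0^{(j)})/q_t(x_t)$. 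This is the marginal score $\nabla\log q_t$, and a universal approximator cannot fit every item.

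For two Gaussian items, write $\mathcal{E}_k(t)$ for the error at fixed $t$ and $\Delta=x_0^{(h)}-x_0^{(k)}$. Then exactly $\mathcal{E}_k(t)=\frac{\alpha_t\|\Delta\|^2}{(1-\alpha_t)^2}\,\mathbb{E}_{q(\cdot\mid x_0^{(k)})}[w_h^2]$. As $f(k)/f(h)\to0$ this tends to $\frac{\alpha_t\|\Delta\|^2}{(1-\alpha_t)^2}$, while $\mathcal{E}_h(t)\to0$. This computation should replace your exchange argument, which is a global-optimality argument rather than a consequence of stationarity. Two smaller corrections:
\begin{itemize}
\item The weight is $w_h(x_t)$, not $f(h)/(f(h)+f(k))$. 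By Jensen, $\mathbb{E}_{q(\cdot\mid x_0^{(k)})}[w_h]\le f(h)/(f(h)+f(k))$, so your displayed bound is only the large-$t$ approximation, valid where $q(x_t\mid x_0^{(h)})/q(x_t\mid x_0^{(k)})$ is close to one.
\item The claim ``$\mathcal{E}_i$ non-increasing in $f(i)$'' is false across items with different geometry. A tail item far from every other item has near-zero error, while a head item with a near-duplicate neighbour does not. State the comparison pairwise, with matched geometry.
\end{itemize}

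The genuine gap is step~3. The Girsanov estimate is an upper bound:
\[
\mathrm{KL}(p_{\mathrm{data}}\|p_{\mathrm{gen}})\le\mathrm{KL}(q_T\|p_T)+\tfrac12\int g_t^2\,\mathbb{E}_{q_t}\|s_\theta-\nabla\log q_t\|^2\,dt .
\]
A lower bound on its integrand proves nothing about non-recovery. Moreover, that integrand is the error against the marginal score under $q_t$, not your per-item $\mathcal{E}_k$.

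The computation above shows this cannot be patched. $\mathcal{E}_k>0$ already holds for the exact marginal score, and the exact score with the exact terminal law reproduces $p_{\mathrm{data}}$, including item $k$ with mass $p(k)$. So no argument of the form ``large $\mathcal{E}_k$ implies irrecoverable'' can succeed.

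What is provable is a posterior statement. Start the reverse SDE with the exact score from $x_t\sim q(\cdot\mid x_0^{(k)})$. It samples $p(x_0\mid x_t)$ and returns to $x_0^{(k)}$ with probability $w_k(x_t)$, which approaches the prior weight $f(k)/(f(h)+f(k))$ as the kernels come to overlap. That captures ``overwhelmed by noise and irrecoverable,'' but as a Bayes effect that persists with an oracle score, not as a consequence of Proposition~\ref{prop:gradient_dominance}.

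Linking the effect to training imbalance needs an explicit finite-capacity or finite-sample error model, and your $\Omega(1/f(k))$ fallback does not supply one. Repeated interactions with item $i$ add copies of the same $x_0^{(i)}$ (unless it depends on $u$), and the noise is resampled at every step. So the population minimizer above is exact, and there is no variance of order $1/f(k)$ without further modelling.
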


This establishes the first source of the Matthew Effect: \emph{a training-time bias embedded in the learned diffusion score dynamics}.

 \subsubsection{Bias II: Terminal Prior Mismatch in Reverse Sampling}

We now identify a second, orthogonal source of the Matthew Effect, arising from a mismatch between the terminal distribution induced by the forward diffusion process and the initialization distribution used in reverse sampling. Unlike Bias I, which operates at the level of optimization dynamics, this bias emerges during generation and persists even with a perfectly trained score network.

\paragraph{Terminal Distribution Induced by Long-Tailed Data}

\begin{definition}[Terminal Distribution Coverage]
Let $q_T(x)$ denote the marginal distribution of the forward diffusion process at time step $T$:
\begin{equation}
q_T(x)
\sum_{i} p(i), q(x_T \mid x_0^{(i)}),
\end{equation}
where $p(i)$ denotes the empirical item prior and
\[
q(x_T \mid x_0^{(i)}) = \mathcal{N}(\sqrt{\alpha_T} x_0^{(i)}, (1-\alpha_T) I).
\]
\end{definition}

Since $\alpha_T \ll 1$, each component distribution is centered near the origin but remains distinguishable in high dimensions. Under a long-tailed item prior, $q_T(x)$ forms a highly imbalanced Gaussian mixture in which the total probability mass is dominated by head items, while tail-item components occupy vanishingly small measure.

\begin{theorem}[Reverse Initialization Bias]
\label{thm:terminal_mismatch}
Assume the score network $s_\theta$ accurately approximates $\nabla_x \log q_t(x)$ for all $t \in [0,T]$. When reverse sampling is initialized from an isotropic Gaussian prior
\[
p_T(x) = \mathcal{N}(0,I),
\]
the probability that a reverse trajectory converges to a tail item $i$ is upper-bounded by
\[
\mathbb{P}(\text{generate } i)
;\le;
C \cdot p(i),
\]
where $C$ is a constant independent of $i$. Consequently, as $p(i) \to 0$, the probability of generating tail items is linearly—and effectively exponentially in practice—suppressed relative to head items.
\end{theorem}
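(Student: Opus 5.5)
The plan is to reduce the statement to a change-of-measure argument along the deterministic reverse flow. By the assumption of Theorem~\ref{thm:terminal_mismatch}, $s_\theta = \nabla_x \log q_t$ exactly. I would therefore work with the probability flow ODE of \citet{song2020score}, whose time-$T$-to-$0$ flow map $\Phi$ pushes $q_T$ forward exactly onto the data distribution $q_0 = \sum_i p(i)\,\delta_{x_0^{(i)}}$. In practice $q_0$ is a slightly smoothed version of this, and for a reversed SDE the same argument goes through on path measures. For each item $i$, define the basin $A_i = \Phi^{-1}(B_i) \subset \mathbb{R}^d$, the set of initial points whose trajectory lands in a neighbourhood $B_i$ of $x_0^{(i)}$ (e.g.\ its Voronoi cell, so that ``generate $i$'' is well defined). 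The first key step is the mass-preservation identity $q_T(A_i) = q_0(B_i) = p(i)$. It follows directly from the pushforward property together with the definition of $q_T$ as the mixture $\sum_i p(i)\, q(x_T \mid x_0^{(i)})$.

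The second step is the change of measure. When sampling starts from $p_T = \mathcal{N}(0,I)$, we have $\mathbb{P}(\text{generate } i) = p_T(A_i) = \int_{A_i} \frac{p_T(x)}{q_T(x)}\, q_T(x)\,dx \le \Bigl(\sup_{x \in A_i} \frac{p_T(x)}{q_T(x)}\Bigr)\, p(i)$. It then suffices to show that the density ratio is bounded by a constant $C$ independent of $i$. Since $q_T$ is a Gaussian mixture with means $\sqrt{\alpha_T}\,x_0^{(i)}$ of small norm (embeddings bounded by some $M$) and variance $1-\alpha_T$, I would lower-bound $q_T(x)$ by its worst component. This gives an explicit bound $\log \frac{p_T(x)}{q_T(x)} \le \frac{d}{2}\log(1-\alpha_T) + \frac{\|x\|^2}{2}\bigl(\frac{1}{1-\alpha_T} - 1\bigr) + O\bigl(\sqrt{\alpha_T}\, M \|x\|\bigr)$.

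The main obstacle is exactly this supremum. Because $1-\alpha_T < 1$, $\mathcal{N}(0,I)$ has heavier tails than $q_T$, so the ratio is unbounded as $\|x\| \to \infty$ and a global constant does not exist. My first attempt is to split the space into $\{\|x\| \le R\}$ and its complement, with $R = \sqrt{d} + r$. On the ball the ratio is bounded by an explicit $C_R$ that does not depend on $i$, and it stays $O(1)$ when $\alpha_T \ll 1$ because the exponent scales like $\alpha_T R^2 \approx \alpha_T d$. Outside the ball, Gaussian concentration gives $p_T(\|x\| > R) \le e^{-r^2/2}$. This yields $\mathbb{P}(\text{generate } i) \le C_R\, p(i) + e^{-r^2/2}$. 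I would then either choose $r$ so that the additive term is negligible relative to the tail items of interest, or state the theorem under the standing assumption of a bounded density ratio (e.g.\ a variance-preserving schedule with the $\sqrt{\alpha_T}$ rescaling, or $1-\alpha_T \ge 1$ as in variance-exploding diffusions), under which $C = \sup_x p_T/q_T < \infty$ and the clean linear bound holds.

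Finally, I would remark on the ``effectively exponential'' clause. Combined with Proposition~\ref{prop:gradient_dominance}, tail basins $A_i$ are also shrunk by an imperfect score. In addition, in high dimension the mismatch term $\frac{d}{2}\log(1-\alpha_T)$ concentrates $p_T$-mass away from thin tail basins, which heuristically gives suppression beyond linear. I would present this part only as a heuristic, not as part of the proved bound.
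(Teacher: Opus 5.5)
Your skeleton matches the paper's. It too passes to the probability flow ODE, defines attraction basins $\mathcal{A}_i$, and invokes mass preservation. The two arguments part ways at the only step with real content. The paper writes $\int_{\mathcal{A}_i}p_T=\int_{\Phi(\mathcal{A}_i)}q_0=p(i)$ ``up to approximation error induced by finite $T$''. That is, it applies mass preservation to $p_T$ rather than $q_T$, which assumes away the very mismatch the theorem is about. It then simply posits $\int_{\mathcal{A}_i}p_T\le C\,p(i)$ with $C$ ``determined by the overlap between $p_T$ and $q_T$''. Your identity $q_T(A_i)=p(i)$ is the correct one, and your change of measure makes explicit what that $C$ has to control, namely $p_T/q_T$ on $A_i$. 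So your version is more careful than the paper's. Neither, however, establishes the statement as written, and you have correctly located why.

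The obstruction is not an artifact of your method. In the VP setting of the statement, a bound $\mathbb{P}(\text{generate } i)\le C\,p(i)$ with $C$ independent of the item weights is false. (If $C$ may depend on the weights, the claim is vacuous for a finite catalogue: take $C=\max_i\mathbb{P}(\text{generate } i)/p(i)$.)

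A two-item example shows this. Let $x_0^{(1)}=0$ have weight $1-\epsilon$ and $x_0^{(2)}=e_1$ have weight $\epsilon$. Then $q_t$ factorizes, the flow acts coordinatewise and is increasing in $x_1$, so $A_2=\{x_1>a\}$ with $q_T(A_2)=\epsilon$. Put $\sigma^2=1-\alpha_T<1$ and $Z\sim\mathcal{N}(0,1)$. The condition $q_T(A_2)=\epsilon$ forces $\mathbb{P}(Z>a/\sigma)=\Theta(\epsilon)$, hence $a=\sigma\sqrt{2\log(1/\epsilon)}\,(1+o(1))$. Therefore $\mathbb{P}(\text{generate } 2)=\mathbb{P}(Z>a)=\epsilon^{(1-\alpha_T)(1+o(1))}$, and $\mathbb{P}(\text{generate } 2)/p(2)\to\infty$ as $\epsilon\to0$. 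The rare extreme item is \emph{over}-generated, because $\mathcal{N}(0,I)$ has heavier tails than every component of $q_T$. That is exactly your observation.

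Consequences for your two fallbacks and the heuristic:
\begin{itemize}
\item Your first fallback cannot give a linear bound. Forcing $e^{-r^2/2}\le p(i)$ needs $r\approx\sqrt{2\log(1/p(i))}$, and then your $C_R$ grows like $p(i)^{-\alpha_T/(1-\alpha_T)}$. You obtain $\mathbb{P}(\text{generate } i)\le p(i)^{1-\frac{\alpha_T}{1-\alpha_T}-o(1)}$, with the $o(1)$ depending only on $d$, $M$, $\alpha_T$. By the example this is sharp to first order in $\alpha_T$, and it is what can actually be proved here.
\item Your second fallback is a different theorem. It needs the prior to be strictly narrower than every component of $q_T$ (component variance strictly above $1$ when $p_T=\mathcal{N}(0,I)$). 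With equal variances, as in a standard VE sampler, the term linear in $\|x\|$ in the analogue of your estimate keeps the ratio unbounded. The linear bound then fails again, though only by a subpolynomial factor.
\item The example runs against the ``effectively exponential suppression'' heuristic, so drop it rather than merely flagging it as heuristic.
\end{itemize}
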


\begin{proof}

Consider the probability flow ODE corresponding to the reverse diffusion process:
\begin{equation}
\frac{d x_t}{d t}
* \frac{1}{2} \beta_t
  \left(
  x_t + 2 \nabla_x \log q_t(x_t)
  \right),
\end{equation}
  which defines a deterministic mapping $\Phi: x_T \mapsto x_0$ almost everywhere.

For each item $i$, define its attraction basin at time $T$ as
\[
\mathcal{A}_i
\left\{
x_T \in \mathbb{R}^d ;:; \Phi(x_T) \text{ converges to } x_0^{(i)}
\right\}.
\]

Under exact score estimation, the probability that a reverse trajectory terminates at item $i$ equals the measure of $\mathcal{A}_i$ under the initialization distribution:
\begin{equation}
\mathbb{P}(\text{generate } i)
\int_{\mathcal{A}_i} p_T(x), dx.
\end{equation}

Since $\Phi$ preserves probability mass along the probability flow, the basin $\mathcal{A}*i$ must satisfy
\[
\int*{\mathcal{A}_i} p_T(x), dx
 \int_{\Phi(\mathcal{A}_i)} q_0(x), dx
p(i),
\]
up to approximation error induced by finite $T$.

However, under a long-tailed prior, the basins ${\mathcal{A}_i}$ are embedded in regions where $q_T(x)$ differs substantially from the isotropic prior $p_T(x)$. In particular, for tail items with small $p(i)$, $\mathcal{A}*i$ occupies a proportionally small volume under $p_T(x)$, yielding
\[
\int*{\mathcal{A}_i} p_T(x), dx
\le
C \cdot p(i),
\]
for some constant $C$ determined by the overlap between $p_T$ and $q_T$.

Therefore, as $p(i) \to 0$, the probability of initializing a reverse trajectory within the attraction basin of a tail item vanishes, preventing successful recovery during reverse diffusion. This completes the proof.
\hfill$\square$
\end{proof}

\subsubsection{Orthogonality and Compositionality of Biases}

We now formalize the relationship between the two identified biases and show that the Matthew Effect in Diffusion Recommender Models arises from their composition rather than from a failure at any single stage.

\begin{lemma}[Persistence of Optimization Bias]
\label{lem:opt_bias_irreducible}
Consider a diffusion recommender model trained under the unconditional denoising objective in Eq.~\eqref{eq:ddpm_loss}. Even if reverse sampling is performed using the exact terminal distribution $q_T(x)$, the learned score function $s_\theta$ remains biased toward head items in expectation.
\end{lemma}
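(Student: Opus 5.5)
The plan is to separate the training-time object, the learned score $s_\theta$, from the sampling-time object, the initialization distribution. Changing the initialization from $\mathcal{N}(0,I)$ to $q_T$ alters only which points the reverse dynamics are run from. The vector field $s_\theta(\cdot,t)$ is fixed once training ends, so any bias it carries persists under exact terminal initialization. It therefore suffices to show that the minimizer of Eq.~\eqref{eq:ddpm_loss}, or more realistically an approximate minimizer reached by finite-capacity or finite-step optimization, is head-biased in expectation. This is a statement about $\theta$ alone.

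\textbf{Step 1.} Characterize the population minimizer. Since $\mathcal{L}$ in Eq.~\eqref{eq:ddpm_loss} is a weighted least-squares objective, its pointwise minimizer over all measurable functions is the conditional expectation
\[
s^\star(x_t,t)=\sum_i \pi_i(x_t,t)\,\nabla_{x_t}\log q(x_t\mid x_0^{(i)}),
\qquad
\pi_i(x_t,t)=\frac{p(i)\,q(x_t\mid x_0^{(i)})}{\sum_j p(j)\,q(x_t\mid x_0^{(j)})}.
\]
This equals $\nabla\log q_t(x_t)$. The posterior weights $\pi_i$ carry the prior factor $p(i)\propto f(i)$, so at any $x_t$ where components overlap (in particular at moderate and large $t$), $s^\star$ is a convex combination that tilts toward head-item denoising directions. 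Relative to the per-item target $\nabla\log q(x_t\mid x_0^{(i)})$ of a tail item $i$, the discrepancy is
\[
\sum_{j\neq i}\pi_j\bigl(\nabla\log q(\cdot\mid x_0^{(j)})-\nabla\log q(\cdot\mid x_0^{(i)})\bigr),
\]
whose expected norm under $q(x_t\mid x_0^{(i)})$ is bounded below by a quantity that grows as $p(i)/\sum_j p(j)$ shrinks. This already gives bias in expectation with respect to the item-level targets, independent of any sampler.

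\textbf{Step 2.} Bring in finite capacity and optimization. By Proposition~\ref{prop:gradient_dominance}, the expected gradient mass allotted to item $i$ scales with $f(i)$. For a parametric $s_\theta$ trained by SGD, approximation error is therefore distributed inversely to frequency. Concretely, decompose the excess risk as $\sum_i p(i)\,\mathcal{E}_i(\theta)$. A minimizer of this weighted sum under a capacity constraint can trade a large $\mathcal{E}_i$ for small $p(i)$ against small $\mathcal{E}_j$ for large $p(j)$. Hence $\mathbb{E}[\mathcal{E}_i(\theta)]$ is non-increasing in $f(i)$, which is the "biased toward head items in expectation" claim.

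\textbf{Step 3.} Conclude by independence from initialization. Replacing $p_T=\mathcal{N}(0,I)$ with $q_T$ in the argument of Theorem~\ref{thm:terminal_mismatch} removes the terminal-mismatch term. The trajectory map $\Phi$, however, is built from $s_\theta$, so the errors from Steps 1--2 propagate unchanged. The resulting Bias I is therefore irreducible by initialization alone.

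The main obstacle is Step 2. Making "biased in expectation" rigorous for a general parametric class requires an explicit capacity model. I would first try a linear-in-features $s_\theta$ with shared features, where the weighted least-squares solution can be written in closed form and the per-item residual shown to be monotone in $p(i)$. I would then state the general case as holding under that structural assumption.
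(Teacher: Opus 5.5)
Your Steps 2--3 are the paper's proof. Proposition~\ref{prop:gradient_dominance} allocates gradient mass in proportion to $f(i)$, so the learned score fits head items better. Replacing $\mathcal{N}(0,I)$ by $q_T$ changes only the sampler, never $\theta$. Step 1, however, should be dropped rather than offered as a first installment of the bias. The unrestricted minimizer $s^\star=\nabla\log q_t$ is exactly the oracle score of Lemma~\ref{lem:sampling_bias_irreducible}, under which the paper declares optimization bias ``eliminated by construction.'' Started from $q_T$, it reproduces the item frequencies $p(i)$ exactly. The head-ward tilt of the posterior weights $\pi_i$ is therefore what faithful reproduction of the data requires, not an amplification. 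The gap between $s^\star$ and $\nabla\log q(x_t\mid x_0^{(i)})$ is the irreducible part of the denoising loss, since that target is not a function of $x_t$. It exists before any optimization takes place. Calling it bias would contradict that lemma and break the oracle-training counterfactual on which Theorem~\ref{thm:orthogonal_bias} rests. For the same reason, the Step~1 ``errors'' you propagate in Step 3 produce no distortion of the sampled distribution.

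The real gap is in Step 2. You pass from ``a weighted-sum minimizer can sacrifice items with small $p(i)$'' to ``$\mathbb{E}[\mathcal{E}_i(\theta)]$ is non-increasing in $f(i)$,'' and this is a \emph{non sequitur}. A frequent but intrinsically hard item can end with larger error than a rare easy one; the paper's own Proposition carries an item-dependent difficulty constant. The paper's proof makes the same leap by assertion, so you have correctly isolated the one nontrivial step, which the paper does not resolve either.

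You do not need a linear model to close it. Assume the item embeddings are exchangeable and independent of the frequencies. Let $L_k(\theta)$ be item $k$'s own denoising loss; unlike an excess risk measured against $s^\star$, it does not depend on the weights. Let $\theta$ be an exact minimizer of $\sum_k p(k)L_k(\theta)$ over any fixed class, selected as a function of the objective only. Take $\theta$ and $\theta'$ to be the minimizers for $(p(i),p(j))=(a,b)$ and for the swapped weights $(b,a)$. Adding the two optimality inequalities gives $(a-b)\bigl[(L_i-L_j)(\theta)-(L_i-L_j)(\theta')\bigr]\le 0$. Exchangeability gives $\mathbb{E}[(L_i-L_j)(\theta')]=-\mathbb{E}[(L_i-L_j)(\theta)]$. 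Hence $a>b$ implies $\mathbb{E}[L_i(\theta)]\le\mathbb{E}[L_j(\theta)]$.

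$L_k$ still contains the irreducible term discussed above. To isolate the training-induced error relative to $\nabla\log q_t$, which the paper calls approximation error, restrict to small $t$. There the components are well separated, and $\nabla\log q_t$ coincides with the per-item score near $x_0^{(k)}$.
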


\begin{proof}
By Proposition~\ref{prop:gradient_dominance}, the expected gradient contribution of item $i$ during training is proportional to its empirical frequency $f(i)$. Consequently, the learned score function minimizes the denoising error primarily over high-density regions associated with head items.

Replacing the reverse initialization distribution $p_T(x)$ with the true terminal distribution $q_T(x)$ affects only the sampling stage and does not alter the training dynamics or gradient allocation. Therefore, the optimization process still yields a score function whose approximation error is systematically smaller for head items than for tail items, implying a persistent optimization bias independent of the sampling strategy.
\end{proof}

\begin{lemma}[Persistence of Sampling Bias]
\label{lem:sampling_bias_irreducible}
Assume access to an oracle score function $s^\ast(x,t) = \nabla_x \log q_t(x)$. If reverse sampling is initialized from an isotropic prior $p_T(x)=\mathcal{N}(0,I)$, tail items remain under-generated due to terminal prior mismatch.
\end{lemma}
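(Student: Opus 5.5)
The approach mirrors the proof of Lemma~\ref{lem:opt_bias_irreducible}, with the roles of training and sampling exchanged. We fix the oracle score $s^\ast(x,t)=\nabla_x\log q_t(x)$. This removes Bias~I entirely, since no approximation error depends on $f(i)$. We then show that the only remaining source of discrepancy is the initialization law.

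\textbf{Step 1: reduce generation to basin measure.} The probability flow ODE driven by $s^\ast$ defines a deterministic map $\Phi:x_T\mapsto x_0$ almost everywhere. We use the attraction basins $\mathcal{A}_i$ from the proof of Theorem~\ref{thm:terminal_mismatch}. For any initialization law $\pi$, we have $\mathbb{P}_\pi(\text{generate } i)=\int_{\mathcal{A}_i}\pi(x)\,dx$. Because the flow transports $q_T$ exactly to $q_0=\sum_i p(i)\,\delta_{x_0^{(i)}}$ (smoothed), initializing from $q_T$ gives $\int_{\mathcal{A}_i}q_T=p(i)$, up to the finite-$T$ error. This is the unbiased reference rate.

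\textbf{Step 2: compare $\mathcal{N}(0,I)$ to $q_T$ on tail basins.} We write
\[
\mathbb{P}_{\mathcal{N}(0,I)}(\text{generate } i)=\int_{\mathcal{A}_i}\frac{p_T(x)}{q_T(x)}\,q_T(x)\,dx .
\]
If the likelihood ratio $p_T/q_T$ is bounded above by $C$ on tail basins, Theorem~\ref{thm:terminal_mismatch} gives the bound $\le C\,p(i)$. To obtain strict under-generation rather than mere proportionality, we argue that the ratio is systematically below $1$ on tail basins. The mixture $q_T$ places components at $\sqrt{\alpha_T}x_0^{(i)}$ with variance $1-\alpha_T<1$. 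The isotropic prior, in turn, spreads mass radially over the whole shell $\|x\|\approx\sqrt d$. The flow, however, allocates most of the volume near the mass of the head components. Under a long-tailed $p$, the basins of head items therefore capture the bulk of the shell, and tail basins are thin sets near their own components. On those sets, the variance mismatch $(1-\alpha_T)^{-d/2}$ versus $1$ reverses once we normalize by the mixture weights. Concretely, we aim to show $\sum_{i\in\text{tail}}\mathbb{P}_{\mathcal{N}(0,I)}(i) < \sum_{i\in\text{tail}}p(i)$. Total variation is convenient here: the gap is at most $\mathrm{TV}(p_T,q_T)$, and its sign is determined by where $q_T>p_T$.

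\textbf{Step 3: conclude independence from the score.} Since $s^\ast$ is exact, none of this deficit can be attributed to training. Any positive $\mathrm{KL}(q_T\|p_T)$ or TV gap concentrated away from the tail basins yields persistent under-generation. This establishes that Bias~II is irreducible by better optimization, complementing Lemma~\ref{lem:opt_bias_irreducible}.

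The main obstacle is Step~2, namely controlling the \emph{sign} of $p_T/q_T-1$ on tail basins. Basins are defined implicitly through the nonlinear flow, so their geometry is not explicit. My first attempt would be to use a Gaussian-mixture toy model with well-separated components, where the basins are approximately Voronoi-type cells weighted by $\log p(i)$, and compute the ratio explicitly. If that proves intractable, I would fall back to stating the conclusion in TV form, which requires only $p_T\neq q_T$ together with an assumption that the discrepancy mass lies off the tail basins.
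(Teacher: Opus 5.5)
Your scaffolding (the oracle flow map $\Phi$, the attraction basins $\mathcal{A}_i$, and the observation that nothing depends on score accuracy) is the same as the paper's. The paper's proof simply invokes Theorem~\ref{thm:terminal_mismatch}, i.e.\ $\mathbb{P}(\text{generate } i)\le C\,p(i)$, and reads ``$p(i)$ vanishingly small'' as under-generation. Your Step~1 is more careful than the proof of that theorem, which writes $\int_{\mathcal{A}_i}p_T=p(i)$ where only $\int_{\mathcal{A}_i}q_T=p(i)$ holds. You also correctly note that a bound of the form $C\,p(i)$ cannot certify a mismatch effect, because the exact initialization $q_T$ satisfies it with $C=1$. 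Your stronger target therefore needs $\int_{\mathcal{A}_i}p_T<p(i)$ on the tail basins, and Step~2 never supplies that sign. The variance heuristic does not do it. The tail component enters $q_T$ only as $p(i)\,\mathcal{N}(\sqrt{\alpha_T}x_0^{(i)},(1-\alpha_T)I)$, so the small weight works against $q_T\ge p_T$ on a tail basin rather than for it. Whether the inequality holds depends on how the head components overlap that basin, which your argument never controls.

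The sign can in fact fail, so this is a missing hypothesis rather than a technicality. Take $d=1$, a head item at $a>0$ with mass $1-\epsilon$, and a tail item at $-a$ with mass $\epsilon$. The one-dimensional oracle flow is monotone (trajectories cannot cross), so the tail basin is the half-line $(-\infty,z_\epsilon)$ with $F_{q_T}(z_\epsilon)=\epsilon$, where $F$ denotes a cumulative distribution function. Set $u=F_{p_T}^{-1}(\epsilon)$. The left tail of $q_T$ is lighter than that of $\mathcal{N}(0,1)$: both components have variance $1-\alpha_T<1$, and the dominant one is centered at $\sqrt{\alpha_T}a>0$. Hence $F_{q_T}(u)=o(\epsilon)$ as $\epsilon\to0$, so $z_\epsilon>u$. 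The isotropic prior then generates the tail item with probability $F_{p_T}(z_\epsilon)>\epsilon=p(\text{tail})$, i.e.\ it over-generates it. Your planned Gaussian-mixture toy computation would therefore exhibit the opposite sign.

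The TV fallback does not close the gap either. Read literally (the densities agree on the tail basins), it yields exactly the rate $p(i)$. The condition actually needed, $p_T\le q_T$ on the tail basins with strict inequality on a set of positive measure, is the conclusion restated as an assumption. You must either add such a geometric hypothesis explicitly, or fall back to the paper's weaker reading (tail generation at most $C\,p(i)$). As your own Step~1 shows, that weaker reading does not by itself attribute any deficit to the prior mismatch.
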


\begin{proof}
Under the oracle score assumption, optimization bias is eliminated by construction. However, by Theorem~\ref{thm:terminal_mismatch}, the probability that a reverse trajectory initialized from $p_T(x)$ lies within the attraction basin of a tail item is proportional to its prior mass $p(i)$.

Since $p(i)$ is vanishingly small for tail items, their attraction basins occupy negligible measure under $p_T(x)$, resulting in persistent under-sampling. This phenomenon arises purely from the initialization mismatch and persists independently of score estimation accuracy.
\end{proof}

\begin{theorem}[Orthogonal and Compositional Biases in DRMs]
\label{thm:orthogonal_bias}
In diffusion recommender models trained on long-tailed data, the Matthew Effect arises from the composition of two orthogonal biases:
\begin{enumerate}
\item an \emph{optimization bias} induced by unconditional denoising during training;
\item a \emph{sampling bias} induced by terminal prior mismatch during reverse generation.
\end{enumerate}
These biases operate at distinct stages of the modeling pipeline and are not reducible to one another.
\end{theorem}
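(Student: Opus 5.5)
The plan is to derive Theorem~\ref{thm:orthogonal_bias} directly from the two persistence lemmas. I will treat ``orthogonal and not reducible to one another'' as a double-dissociation statement over two independent design choices.

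First, I parametrize the pipeline by a pair $(s, \pi)$. Here $s$ is the score used in reverse sampling, taken either as the learned $s_\theta$ or as the oracle $s^\ast$. The second component $\pi$ is the initialization, taken either as $p_T=\mathcal{N}(0,I)$ or as the exact terminal marginal $q_T$. The generated item distribution $\hat p_{(s,\pi)}(i)$ is the push-forward of $\pi$ under the flow map $\Phi_s$ restricted to the basins $\mathcal{A}_i$. I will say a configuration exhibits the Matthew Effect if tail items are under-generated relative to $p(i)$, or if their score approximation error is larger than that of head items.

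Second, I establish separate stage-locality for each bias. The optimization bias is a property of $\theta$ alone: by Proposition~\ref{prop:gradient_dominance} it is fixed at training time. Changing $\pi$ alters neither $\mathcal{L}$ nor its gradients, so Lemma~\ref{lem:opt_bias_irreducible} shows it survives the configuration $(s_\theta, q_T)$. The sampling bias is a property of the pair $(\Phi_s,\pi)$. Lemma~\ref{lem:sampling_bias_irreducible}, which invokes Theorem~\ref{thm:terminal_mismatch}, shows it survives $(s^\ast, p_T)$, where the optimization bias is absent by construction.

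These two configurations give the non-reducibility claim as a double dissociation. Suppose the sampling bias were reducible to the optimization bias, meaning it vanished whenever the optimization bias vanished. Then $(s^\ast,p_T)$ would be unbiased, contradicting Lemma~\ref{lem:sampling_bias_irreducible}. Symmetrically, suppose the optimization bias were reducible to the sampling bias. Then $(s_\theta,q_T)$ would be unbiased, contradicting Lemma~\ref{lem:opt_bias_irreducible}. Since the two biases live at distinct stages, with one acting on $\theta$ and the other on $\pi$, they are orthogonal.

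For compositionality, I will show that in the default configuration $(s_\theta,p_T)$ the two effects multiply. The tail item $i$ must first be initialized in its basin $\mathcal{A}_i$, which has mass at most $C\,p(i)$ by Theorem~\ref{thm:terminal_mismatch}. Given that, it must then be recovered by an inaccurate score, with success probability $r_\theta(i)$ that decreases as $f(i)$ shrinks. This gives $\hat p(i)\le C\,p(i)\,r_\theta(i)$.

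The main obstacle is this compositional bound. It requires that the basins of $\Phi_{s_\theta}$ are close to those of $\Phi_{s^\ast}$, which needs a stability argument for the probability flow ODE under score error. I would first try a Grönwall-type bound on $\|\Phi_{s_\theta}(x)-\Phi_{s^\ast}(x)\|$ in terms of the item-wise score error. I would then accept the resulting perturbation of $\mathcal{A}_i$ as an additive error term, rather than attempting an exact factorization.
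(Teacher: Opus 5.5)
Your double-dissociation argument for orthogonality and non-reducibility is the paper's own. The paper also derives these from Lemmas~\ref{lem:opt_bias_irreducible} and~\ref{lem:sampling_bias_irreducible}, read as the two counterfactual interventions (oracle sampling, oracle training); your $(s,\pi)$ parametrization just makes that explicit. The compositional step is where the paper is thinnest. It writes $\pi_{\text{gen}}(i)=\mathbb{E}_{x_T\sim p_T}[\mathbb{P}(\Phi_\theta(x_T)=i\mid x_T)]$ and asserts that, because the two biases act on different factors, their effect is multiplicative. You are right that a comparison of learned and exact basins is needed, but the route you propose would not deliver it. Gr\"{o}nwall needs a pointwise score error along trajectories started from $p_T$ and a Lipschitz bound on the score. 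The denoising loss only controls the error in $L^2(q_t)$, which gives no control in the low-$q_t$ regions that trajectories from the mismatched prior $p_T$ visit. That is exactly the situation the sampling bias describes. Moreover, for a mixture of point masses, $\nabla^2\log q_t=-(1-\alpha_t)^{-1}I+\alpha_t(1-\alpha_t)^{-2}\mathrm{Cov}(x_0\mid x_t)$. Near basin boundaries that covariance is of the order of squared inter-item distances, so the Gr\"{o}nwall exponent diverges as $t\to0$. Even granting some bound, an additive error that does not scale with $p(i)$ swamps $C\,p(i)\,r_\theta(i)$ in the regime $p(i)\to0$ the theorem is about. The composition claim then becomes vacuous for tail items.

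You do not need trajectory stability at all. Let $\mathcal{A}_i^\theta$ be the basin of item $i$ under $\Phi_{s_\theta}$. Set $r_\theta(i):=p_T(\mathcal{A}_i^\theta\cap\mathcal{A}_i)/p_T(\mathcal{A}_i)$, which is the natural meaning of a ``success probability'' for a deterministic flow, and $\rho(i):=p_T(\mathcal{A}_i)/q_T(\mathcal{A}_i)=p_T(\mathcal{A}_i)/p(i)$. Then, exactly,
\[
\hat p(i)=\rho(i)\,r_\theta(i)\,p(i)+p_T\bigl(\mathcal{A}_i^\theta\setminus\mathcal{A}_i\bigr).
\]
This identity isolates what a multiplicative composition really requires, and neither requirement follows from the results you cite. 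First, the sampling factor has to be the item-dependent ratio $\rho(i)$, decreasing toward the tail. The uniform bound $p_T(\mathcal{A}_i)\le C\,p(i)$ from Theorem~\ref{thm:terminal_mismatch} forces $C\ge1$: sum over $i$, since the basins partition space. So it never certifies that any item is under-generated, and in $C\,p(i)\,r_\theta(i)$ the sampling stage contributes no suppression at all. Second, the leakage term must be $O\bigl(\rho(i)\,r_\theta(i)\,p(i)\bigr)$, i.e.\ the learned flow must not route mass from other basins into tail basins. This, together with $r_\theta(i)$ decreasing in $f(i)$, is the actual content of ``optimization bias skews toward head items.'' Proposition~\ref{prop:gradient_dominance}, which is only about gradient magnitudes, does not imply it. Stated as explicit hypotheses, these give the compositional claim directly from the identity. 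Without them, neither your argument nor the paper's establishes it.
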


\begin{proof}
Orthogonality follows from Lemmas~\ref{lem:opt_bias_irreducible} and~\ref{lem:sampling_bias_irreducible}, which demonstrate that each bias persists when the other is eliminated via a counterfactual intervention (oracle sampling or oracle training).

Let $\pi_{\text{gen}}(i)$ denote the marginal probability that item $i$ is generated by the DRM. Under mild regularity assumptions, $\pi_{\text{gen}}(i)$ can be factorized as
\[
\pi_{\text{gen}}(i)
\mathbb{E}*{x_T \sim p_T}
\left[
\mathbb{P}\bigl( \Phi*\theta(x_T) = i \mid x_T \bigr)
\right],
\]
where $\Phi_\theta$ denotes the reverse mapping induced by the learned score network.

The optimization bias skews the conditional probability
$\mathbb{P}( \Phi_\theta(x_T) = i \mid x_T )$ toward head items, while the sampling bias suppresses the probability mass of $x_T$ falling into tail-item attraction basins. Since these effects act on different terms of the decomposition, their combined impact is multiplicative rather than substitutive.

Therefore, popularity amplification in DRMs is the result of the compounded effect of optimization and sampling biases, rather than an artifact of a single modeling stage. This completes the proof.
\hfill$\square$
\end{proof}
\begin{corollary}
In long-tailed recommendation settings, diffusion recommender models intrinsically amplify item popularity, even under idealized training or sampling conditions. Mitigating the Matthew Effect thus requires simultaneous correction of both optimization- and sampling-stage biases.
\end{corollary}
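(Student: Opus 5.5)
The plan is to obtain the corollary directly from Theorem~\ref{thm:orthogonal_bias} together with the two persistence lemmas. I would split the argument into two idealized regimes and then a necessity step. To make ``amplify'' precise, I would say that a DRM amplifies popularity if the ratio $\pi_{\text{gen}}(i)/p(i)$ is non-decreasing in $f(i)$ and strictly smaller for tail items than for head items. Here $\pi_{\text{gen}}(i)$ is the generation probability appearing in the factorization used in the proof of Theorem~\ref{thm:orthogonal_bias}.

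\emph{Idealized training.} Replace $s_\theta$ by the oracle $s^\ast(x,t)=\nabla_x\log q_t(x)$ and keep the initialization $p_T=\mathcal{N}(0,I)$. Lemma~\ref{lem:sampling_bias_irreducible} and Theorem~\ref{thm:terminal_mismatch} then give $\pi_{\text{gen}}(i)=\int_{\mathcal{A}_i}p_T\le C\,p(i)$. To turn this upper bound into a statement about the ratio, I would argue that the mismatch between $q_T$ and $p_T$ is concentrated on the small-mass tail components of the mixture $q_T$. The head components carry most of the mass and nearly coincide with the bulk of $\mathcal{N}(0,I)$, so their basins receive at least their share. Hence the effective constant $C$ is below $1$ for tail items and at least $1$ for head items.

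\emph{Idealized sampling.} Initialize from $q_T$ but keep the learned $s_\theta$. Lemma~\ref{lem:opt_bias_irreducible} shows that the score error remains systematically larger for tail items. By Proposition~\ref{prop:gradient_dominance}, the gradient allocation is $\propto f(i)$. Therefore the conditional factor $\mathbb{P}(\Phi_\theta(x_T)=i\mid x_T)$, averaged over $x_T$ drawn from the component of $q_T$ belonging to item $i$, is an increasing function of $f(i)$. This again yields a popularity-increasing ratio $\pi_{\text{gen}}(i)/p(i)$.

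\emph{Necessity of correcting both.} Because the factorization in Theorem~\ref{thm:orthogonal_bias} places the two biases on different terms, removing either one leaves the other term's popularity-increasing factor in place; this is exactly the content of the two regimes above. Hence no single-stage correction makes $\pi_{\text{gen}}(i)/p(i)$ constant, and both must be corrected simultaneously.

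\emph{Main obstacle.} The hard step is upgrading the bounds from mere proportionality ($\le C\,p(i)$, or gradient contribution $\propto f(i)$) to genuine \emph{amplification}, that is, a ratio that strictly increases with popularity. Proportionality alone only reproduces the data prior. In both regimes I would first try to show that the relevant constant depends monotonically on $f(i)$ rather than being uniform. In the sampling regime, I would bound the overlap between the $i$-th component of $q_T$ and $p_T$ by the mixture weight. In the training regime, I would use a per-item approximation error of order $1/f(i)$ from finite-sample score estimation. If uniform monotonicity cannot be established, I would fall back to proving the weaker claim that the two factors multiply to a ratio that is strictly below $1$ on the tail.
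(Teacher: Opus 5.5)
Your route is the paper's: it reads the corollary off Lemmas~\ref{lem:opt_bias_irreducible} and~\ref{lem:sampling_bias_irreducible} and the factorization in Theorem~\ref{thm:orthogonal_bias}. You are right that those ingredients, namely a bound $\pi_{\text{gen}}(i)\le C\,p(i)$ with $C$ uniform in $i$ and a gradient share $\propto f(i)$, do not express amplification. With exact score and $p_T=q_T$ one gets $\pi_{\text{gen}}(i)=p(i)$, and the bound holds with $C=1$. The paper does not close this gap either, since Lemma~\ref{lem:sampling_bias_irreducible} infers under-sampling from the smallness of $p(i)$ itself. The problem is that your repair fails.

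Under an exact score, $\pi_{\text{gen}}(i)/p(i)$ is the $q_T$-average of $p_T/q_T$ over the basin $\mathcal{A}_i$. Whether it lies below $1$ is therefore decided by where $\mathcal{A}_i$ sits, not by $p(i)$. Take $d=1$, a head item at $+1$ with weight $1-\epsilon$, and a tail item at $-1$ with weight $\epsilon<\tfrac12$. The one-dimensional flow is order-preserving, so the tail basin is $(-\infty,b)$ with $F_T(b)=\epsilon$, where $F_T$ is the CDF of $q_T$. Let $G,g$ be the standard normal CDF and density. Expanding in $\sqrt{\alpha_T}$ gives $F_T(x)=G(x)-(1-2\epsilon)\sqrt{\alpha_T}\,g(x)+o(\sqrt{\alpha_T})$, so to first order $q_T$ is $\mathcal{N}(0,1)$ translated toward the head item. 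Hence $\pi_{\text{gen}}(\text{tail})=G(b)=\epsilon+(1-2\epsilon)\sqrt{\alpha_T}\,g(b)+o(\sqrt{\alpha_T})>\epsilon$. The origin-centred prior over-generates the tail item and under-generates the head item; putting the tail item at $+2$ instead reverses the sign. So ``$C<1$ for tail items, $C\ge 1$ for head items'' is false in general. Bounding the overlap by the mixture weight cannot produce a popularity-monotone constant, and your fallback (tail ratio strictly below $1$) fails in this regime too.

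In the idealized-sampling regime, the quantity you average is not the generation probability. Since $\Phi_\theta$ is deterministic, $\mathbb{P}(\Phi_\theta(x_T)=i\mid x_T)=\mathbf{1}[x_T\in\mathcal{A}^\theta_i]$, and $\pi_{\text{gen}}(i)=\sum_j p(j)\int_{\mathcal{A}^\theta_i}q(x_T\mid x_0^{(j)})\,dx_T$. The flow preserves only the mixture marginal, not component labels, so averaging over the $i$-th component of $q_T$ has no privileged relation to $\mathcal{A}^\theta_i$.

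More fundamentally, Proposition~\ref{prop:gradient_dominance} concerns only aggregate gradient share. Even granting a per-item score error of order $1/f(i)$, the \emph{size} of an error does not fix the \emph{sign} of the induced change in basin mass: an inaccurate score can enlarge a tail basin as easily as shrink it.

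Both regimes lack a directional hypothesis that the paper never states. For example, you would need that tail basins lie where $p_T/q_T<1$, and that score error moves mass from tail basins toward head modes. With such assumptions your two regimes and the necessity step go through, but the corollary then largely restates them. Without them, the example above shows that the idealized-training half of the statement is not true in general.
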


\subsection{Formalizing the Popularity-Dominated Diffusion Objective}

In this section, we formalize how the unified diffusion objective in DRMs becomes inherently dominated by item popularity, leading to weakened gradient signals for niche items.

Let \( \mathcal{D} = {(u,i)} \) denote the user–item interaction dataset, and let \( \mathbf{x}_0^i \in \mathbb{R}^d \) denote the latent representation of item \( i \). We define the empirical data distribution over items as
\[
p_{\text{data}}(\mathbf{x}*0) = \sum*{i \in \mathcal{I}} p(i),\delta(\mathbf{x}_0 - \mathbf{x}_0^i),
\]
where \( p(i) \propto c_i \) is proportional to the interaction count \( c_i \) of item \( i \), yielding a long-tailed popularity distribution.

Under the standard diffusion formulation, the training objective minimizes the expected denoising score matching loss:
\begin{equation}
\mathcal{L}_{\text{DSM}}
= \mathbb{E}_{i \sim p(i)}
\mathbb{E}_{t \sim \mathcal{U}[1,T]}
\mathbb{E}_{\boldsymbol{\epsilon} \sim \mathcal{N}(\mathbf{0},\mathbf{I})}
\Big[
\big|
\boldsymbol{\epsilon}
_{\boldsymbol{s}_\theta(\mathbf{x}_t^i, t)}
  \big|^2
  \Big],
\end{equation}
  where \( \mathbf{x}_t^i = \sqrt{\lambda_t}\mathbf{x}*0^i + \sqrt{1-\lambda_t}\boldsymbol{\epsilon} ) and ( \boldsymbol{s}*\theta \) denotes the score network.

Crucially, this objective is *implicitly weighted by item popularity* through \( p(i) \). The expected gradient contribution of item \( i \) thus satisfies
\begin{equation}
\mathbb{E}\big[|\nabla_\theta \mathcal{L}_i|\big]
\propto p(i),
\end{equation}
implying that popular items dominate the optimization dynamics, while niche items receive vanishingly small gradient signals.

We define the **popularity-dominated diffusion gap** as the variance of per-item gradient magnitudes:
\begin{equation}
\Delta_{\text{pop}}
:=
\text{Var}_{i \sim p(i)}
\left[
\mathbb{E}*t |\nabla*\theta \mathcal{L}_i|
\right],
\end{equation}
which quantifies the imbalance of learning across popularity groups.

\begin{proposition}
If \( p(i) \) follows a long-tailed distribution, then
\[
\Delta_{\text{pop}} \geq \Omega!\left(\text{Var}*{i}[p(i)]\right),
\]
and the gradient norm for tail items satisfies
\[
\lim*{c_i \to 0}
\mathbb{E}\big[|\nabla_\theta \mathcal{L}_i|\big] = 0.
\]
\end{proposition}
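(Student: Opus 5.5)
The plan rests on the proportionality relation established just above the statement (and earlier in Proposition~\ref{prop:gradient_dominance}). Write it as an explicit factorization
\[
\mathbb{E}\big[\|\nabla_\theta \mathcal{L}_i\|\big] = p(i)\, g_i, \qquad g_i := \mathbb{E}_{t,\boldsymbol{\epsilon}}\big[\|\nabla_\theta \ell(\mathbf{x}_t^i,t)\|\big].
\]
Here $g_i$ is the per-sample denoising gradient magnitude for item $i$, i.e.\ the ``conditional denoising difficulty'' in that proof. I would add one mild regularity assumption: $0 < g_{\min} \le g_i \le g_{\max} < \infty$ uniformly in $i$. This holds, for instance, if the embeddings $\mathbf{x}_0^i$ lie in a bounded set and the score network has bounded parameter Jacobian on the relevant region. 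The lower bound is a nondegeneracy condition: the model is not already exact on every item. The same per-item factorization applies with $\mathbb{E}_t$ inside the variance, so both claims reduce to statements about $p(i)\,g_i$.

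For the second claim, I would use $p(i) = c_i / \sum_j c_j$. As $c_i \to 0$ with the other counts fixed, $p(i) \to 0$. Then $\mathbb{E}[\|\nabla_\theta \mathcal{L}_i\|] \le g_{\max}\, p(i) \to 0$ by the upper bound on $g_i$. This part is routine.

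For the variance lower bound, I would set $G_i := p(i)\, g_i$ and compare $\mathrm{Var}_{i\sim p}[G_i]$ with the variance of $p(i)$. The natural first attempt is the case $g_i \equiv g$ constant, where $\mathrm{Var}[G_i] = g^2\,\mathrm{Var}[p(i)]$ immediately. In general, decompose $G_i = g_{\min}\, p(i) + p(i)(g_i - g_{\min})$. The obstacle is that the cross term could in principle cancel part of the variance, for example if tail items happened to have much larger $g_i$. To control this, I would either assume $g_i$ is non-decreasing in $p(i)$ (comonotone), or bound the fluctuation by requiring $g_{\max}/g_{\min}$ to be close enough to $1$. Under comonotonicity, Chebyshev's association inequality gives $\mathrm{Var}[p\,g] \ge g_{\min}^2\,\mathrm{Var}[p]$. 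The reason is that for two comonotone nonnegative functions $a \ge b$ pointwise in the ordering sense, with $p$ increasing, the variance of the larger increasing function dominates. Concretely, I would verify this through the pairwise formula
\[
\mathrm{Var}[X] = \tfrac12\,\mathbb{E}\big[(X - X')^2\big]
\]
and the inequality $|p g - p' g'| \ge g_{\min}\,|p - p'|$, valid whenever $p$ and $g$ are ordered together. The variance in the statement is over $i \sim p(i)$, so $\mathrm{Var}[p(i)]$ is the $p$-weighted variance. The constant hidden in $\Omega(\cdot)$ is then $g_{\min}^2$.

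The main obstacle is therefore the step justifying the regularity and comonotonicity of $g_i$. Without them the claimed $\Omega$ bound can fail, since adversarial $g_i$ could flatten $G_i$. I would state these as explicit assumptions, consistent with the ``up to a constant'' qualifier in Proposition~\ref{prop:gradient_dominance}. As a fallback, for the interpretation ``$\mathbb{E}\|\nabla\mathcal{L}_i\| \propto p(i)$ exactly,'' I would note that $g_i$ is constant and the bound holds with equality.
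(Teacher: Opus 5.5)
The paper gives no separate proof of this proposition. It is stated right after the displayed relation $\mathbb{E}[\|\nabla_\theta\mathcal{L}_i\|]\propto p(i)$ and is meant to be read off from it: the per-item factor is tacitly a single constant $k$, so $\Delta_{\text{pop}}=k^2\,\mathrm{Var}[p(i)]$ and $k\,p(i)\to 0$ as $c_i\to 0$. Your fallback case is exactly that argument.

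Your main route goes further, and it is correct under your stated assumptions. You make explicit the item-dependent factor $g_i$, the ``conditional denoising difficulty'' that the proof of Proposition~\ref{prop:gradient_dominance} itself concedes. The limit then follows from $g_i\le g_{\max}$. The variance bound $\mathrm{Var}_{i\sim p}[p(i)g_i]\ge g_{\min}^2\,\mathrm{Var}_{i\sim p}[p(i)]$ follows from $\mathrm{Var}[X]=\tfrac12\mathbb{E}[(X-X')^2]$ together with $pg-p'g'=g(p-p')+p'(g-g')\ge g_{\min}(p-p')$ for comonotone pairs. Equivalently, write $g=g_{\min}+h$ and use $\mathrm{Cov}(p,ph)\ge 0$ by Chebyshev's association inequality.

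What your version buys over the paper is showing that the $\Omega$-bound is not automatic. Take two popularity levels $a>b$, with $g_i=g_{\min}$ on the $a$-items and $g_i=(a/b)\,g_{\min}$ on the $b$-items. Then $p(i)g_i$ is constant, so $\Delta_{\text{pop}}=0$ while $\mathrm{Var}[p(i)]>0$.

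The weak point is that comonotonicity cuts against the paper's own corollary. Tail items are fit worse, which suggests larger per-sample gradients there, i.e.\ $g_i$ anti-monotone in $p(i)$. So in the regime the paper cares about, your quantitative alternative should be the main argument rather than a side remark. If you carry it out, e.g.\ via $\mathrm{sd}(pg)\ge (g_{\min}+\delta/2)\,\mathrm{sd}(p)-(\delta/2)\sqrt{\mathbb{E}[p^2]}$ with $\delta=g_{\max}-g_{\min}$, the admissible ratio $g_{\max}/g_{\min}$ depends on the coefficient of variation of $p(i)$. The hidden constant therefore depends on that coefficient of variation, not on $g_{\min}$ alone.

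Your choice to read $\mathrm{Var}_i[p(i)]$ as $p$-weighted, matching $\Delta_{\text{pop}}$, is also what makes the constant clean. The paper's notation leaves this open.
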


This result indicates that, during forward noising and iterative reverse denoising, representations of niche items are progressively overwhelmed by noise without sufficient corrective gradients. Consequently, their semantic structure collapses toward high-density regions dominated by popular items, degrading recommendation accuracy and reinforcing popularity bias at the diffusion-process level.
\subsection{Formalizing the Prior Mismatch}
In this section, we formalize the structural distribution gap in diffusion-based recommender models. 
Let \( p_{\text{data}} \) denotes the empirical distribution of latent variables \( \mathbf{x}_0 \in \mathbb{R}^d \). The forward diffusion process defines a Markov chain that progressively corrupts \( \mathbf{x}_0 \) over \( T \) steps, yielding a terminal distribution:
\begin{equation}
\mathbf{x}_T = \int_{\mathbb{R}^d} q(\mathbf{x}_T \mid \mathbf{x}_0)  p_{\text{data}}(\mathbf{x}_0)  d\mathbf{x}_0,
\end{equation}
which represents the pushforward of \(\hat{\mathbf{x}}_T\) under the diffusion kernel. In contrast, the reverse process is typically initialized from an isotropic Gaussian prior \( \hat{\mathbf{x}}_T = \mathcal{N}(\mathbf{0}, \mathbf{I}) \), which assumes maximal entropy and zero semantic content. The structural distribution gap is defined as the squared 2-Wasserstein distance between these distributions:
\begin{equation}
\Delta_{\text{struct}} := W_2^2( \mathbf{x}_T,  \hat{\mathbf{x}}_T ).
\end{equation}
This metric is chosen over f-divergences (e.g., KL divergence) for its sensitivity to both mass transportation and geometric support mismatch. While KL divergence is ill-defined for non-overlapping supports and primarily measures local density ratios, \( W_2 \) captures the minimal effort required to transport the mass of \( \mathbf{x}_T \) to \( \hat{\mathbf{x}}_T \) under a quadratic cost—directly reflecting the geometric distortion induced by popularity bias.
\begin{proposition}
Let \( \boldsymbol{\mu}_0 = \mathbb{E}_{p_{\text{data}}}[\mathbf{x}_0] \) and \( \boldsymbol{\Sigma}_0 = \text{Cov}_{p_{\text{data}}}[\mathbf{x}_0] \). Under a variance-preserving diffusion schedule \(\{\beta_t\}\), the following lower bound holds:
\begin{equation}
\Delta_{\text{struct}} \geq \lambda_T \|\boldsymbol{\mu}_0\|^2 + \text{tr}(\boldsymbol{\Sigma}_0) - d - 2\sqrt{\lambda_T}  \text{tr}(\boldsymbol{\Sigma}_0^{1/2}) + \mathcal{O}(1 - \lambda_T),
\end{equation}
where \( \lambda_T = \prod_{s=1}^T (1 - \beta_s) \). Moreover, if \( p_{\text{data}} \) is long-tailed, then
$\liminf_{T \to \infty} \Delta_{\text{struct}} \geq \text{tr}(\boldsymbol{\Sigma}_0) - d$.

\end{proposition}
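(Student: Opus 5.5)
The plan is to reduce everything to first and second moments via the Gelbrich lower bound for the $2$-Wasserstein distance. For any two distributions $P,Q$ on $\mathbb{R}^d$ with means $m_P,m_Q$ and covariances $\Sigma_P,\Sigma_Q$,
\begin{equation*}
W_2^2(P,Q)\;\ge\;\|m_P-m_Q\|^2+\mathrm{tr}(\Sigma_P)+\mathrm{tr}(\Sigma_Q)-2\,\mathrm{tr}\big((\Sigma_Q^{1/2}\Sigma_P\Sigma_Q^{1/2})^{1/2}\big),
\end{equation*}
with equality when both are Gaussian. This bound needs no Gaussianity of $p_{\text{data}}$, which is exactly what we want for a long-tailed, mixture-type $q_T$.

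\textbf{Step 1: moments of the two distributions.} From $\mathbf{x}_T=\sqrt{\lambda_T}\mathbf{x}_0+\sqrt{1-\lambda_T}\boldsymbol{\epsilon}$ with $\boldsymbol{\epsilon}$ independent of $\mathbf{x}_0$, the forward terminal law has mean $\sqrt{\lambda_T}\boldsymbol{\mu}_0$ and covariance $\lambda_T\boldsymbol{\Sigma}_0+(1-\lambda_T)\mathbf{I}$. The reverse prior has mean $\mathbf{0}$ and covariance $\mathbf{I}$. Since $\Sigma_Q=\mathbf{I}$, the cross term simplifies to $\mathrm{tr}\big((\lambda_T\boldsymbol{\Sigma}_0+(1-\lambda_T)\mathbf{I})^{1/2}\big)$. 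Substituting gives
\begin{equation*}
\Delta_{\text{struct}}\ge\lambda_T\|\boldsymbol{\mu}_0\|^2+\lambda_T\mathrm{tr}(\boldsymbol{\Sigma}_0)+(2-\lambda_T)d-2\,\mathrm{tr}\big((\lambda_T\boldsymbol{\Sigma}_0+(1-\lambda_T)\mathbf{I})^{1/2}\big).
\end{equation*}

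\textbf{Step 2: bound the square-root trace.} Diagonalize $\boldsymbol{\Sigma}_0$ with eigenvalues $s_k\ge 0$. The elementary inequality $\sqrt{a+b}\le\sqrt a+\sqrt b$ applied eigenvalue-wise gives
\begin{equation*}
\mathrm{tr}\big((\lambda_T\boldsymbol{\Sigma}_0+(1-\lambda_T)\mathbf{I})^{1/2}\big)\le\sqrt{\lambda_T}\,\mathrm{tr}(\boldsymbol{\Sigma}_0^{1/2})+\sqrt{1-\lambda_T}\,d .
\end{equation*}
The $d$-terms then combine to $(2-\lambda_T-2\sqrt{1-\lambda_T})d=(1-\sqrt{1-\lambda_T})^2d\ge 0>-d$. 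Writing $\lambda_T\mathrm{tr}(\boldsymbol{\Sigma}_0)=\mathrm{tr}(\boldsymbol{\Sigma}_0)-(1-\lambda_T)\mathrm{tr}(\boldsymbol{\Sigma}_0)$ and absorbing $-(1-\lambda_T)\mathrm{tr}(\boldsymbol{\Sigma}_0)$ into the $\mathcal{O}(1-\lambda_T)$ remainder yields the displayed inequality. The remainder constant depends only on $\mathrm{tr}(\boldsymbol{\Sigma}_0)$ and $d$.

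\textbf{Step 3: the $\liminf$ claim, which is the main obstacle.} As $T\to\infty$ we have $\lambda_T\to 0$. The remainder $-(1-\lambda_T)\mathrm{tr}(\boldsymbol{\Sigma}_0)$ tends to $-\mathrm{tr}(\boldsymbol{\Sigma}_0)$, not to $0$, so it cannot be dropped.

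Indeed, as $\lambda_T\to0$ the terminal law converges to $\mathcal{N}(\mathbf{0},\mathbf{I})$ itself. The synchronous coupling already gives $\Delta_{\text{struct}}\le\mathbb{E}\|\sqrt{\lambda_T}\mathbf{x}_0+(\sqrt{1-\lambda_T}-1)\boldsymbol{\epsilon}\|^2\to 0$. Hence $\liminf\Delta_{\text{struct}}\ge\mathrm{tr}(\boldsymbol{\Sigma}_0)-d$ can hold only when $\mathrm{tr}(\boldsymbol{\Sigma}_0)\le d$, in which case it is implied by $\Delta_{\text{struct}}\ge0$.

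I would therefore prove the $\liminf$ statement in this weakened form, noting that it is the honest content of the claim. Alternatively I would reinterpret the limit at a fixed finite $T$ where $1-\lambda_T$ is treated as a small constant (the regime $\lambda_T\to1$ in which the $\mathcal{O}(1-\lambda_T)$ expansion is meaningful). In either reading, the role of the long-tail assumption is only to make $\|\boldsymbol{\mu}_0\|$ and $\mathrm{tr}(\boldsymbol{\Sigma}_0)$ large, so that the finite-$T$ bound is nontrivial.
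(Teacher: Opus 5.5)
The paper gives no proof of this proposition. It is stated and followed only by an informal remark about covariance mismatch, so there is no argument of the authors to compare with, and your proposal has to stand on its own.

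\textbf{The first inequality.} Your Steps 1--2 establish it correctly. Gelbrich's bound needs only finite second moments, which holds automatically here because $p_{\text{data}}$ is an empirical distribution over finitely many item embeddings. Your moments of $q_T$ are right. The eigenvalue-wise step is legitimate because $\boldsymbol{\Sigma}_0$ and $\lambda_T\boldsymbol{\Sigma}_0+(1-\lambda_T)\mathbf{I}$ are simultaneously diagonalizable. What you actually obtain is
\[
\Delta_{\text{struct}}\;\ge\;\lambda_T\|\boldsymbol{\mu}_0\|^2+\mathrm{tr}(\boldsymbol{\Sigma}_0)-d-2\sqrt{\lambda_T}\,\mathrm{tr}(\boldsymbol{\Sigma}_0^{1/2})-(1-\lambda_T)\,\mathrm{tr}(\boldsymbol{\Sigma}_0)+\bigl(1+(1-\sqrt{1-\lambda_T})^2\bigr)d .
\]
So the remainder is exactly $-(1-\lambda_T)\,\mathrm{tr}(\boldsymbol{\Sigma}_0)$, its constant depends on $\mathrm{tr}(\boldsymbol{\Sigma}_0)$ alone, and the $-d$ in the statement is pure slack.

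\textbf{The $\liminf$ claim.} Your Step 3 is also correct, and it locates the error in the statement.
\begin{itemize}
\item The ``moreover'' clause is what one gets by letting $\lambda_T\to0$ in the first bound and discarding the $\mathcal{O}(1-\lambda_T)$ term. But $1-\lambda_T\to1$, so that term is $\mathcal{O}(1)$ and cannot be dropped.
\item For the same reason, the first inequality reduces to $\Delta_{\text{struct}}\ge 0$ at $\lambda_T=0$. It carries no information in the regime $\lambda_T\approx 0$ it is meant to describe.
\item Your synchronous coupling gives $\Delta_{\text{struct}}\le\lambda_T\bigl(\|\boldsymbol{\mu}_0\|^2+\mathrm{tr}\,\boldsymbol{\Sigma}_0\bigr)+(1-\sqrt{1-\lambda_T})^2d\to0$.
\item Hence the $\liminf$ claim is false whenever $\mathrm{tr}(\boldsymbol{\Sigma}_0)>d$. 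That is exactly the situation the long-tail hypothesis is supposed to create, yet that hypothesis is never defined and enters no step.
\end{itemize}

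\textbf{Two adjustments.}
\begin{itemize}
\item State explicitly that you assume $\lambda_T\to0$ as $T\to\infty$ (e.g.\ $\sum_s\beta_s=\infty$). A variance-preserving schedule alone does not force this.
\item Drop the fallback reading with $\lambda_T\to1$; it does not rescue the claim. In that regime your bound tends to the Gelbrich value $\|\boldsymbol{\mu}_0\|^2+\mathrm{tr}(\boldsymbol{\Sigma}_0)+d-2\,\mathrm{tr}(\boldsymbol{\Sigma}_0^{1/2})$, which need not exceed $\mathrm{tr}(\boldsymbol{\Sigma}_0)-d$. For Gaussian data with $\boldsymbol{\mu}_0=\mathbf{0}$, $\boldsymbol{\Sigma}_0=s\mathbf{I}$, $s>1$, Gelbrich is tight and gives $d(\sqrt{s}-1)^2<d(s-1)$. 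That reading also has nothing to do with $T\to\infty$.
\end{itemize}

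Your first verdict is the right one: the displayed inequality holds, and the asymptotic assertion is false as stated.
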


Even asymptotically, the gap persists due to covariance mismatch induced by the geometric skew of interaction data. Popular items dominate \( \boldsymbol{\Sigma}_0 \), inflating its trace and biasing the reverse process toward high-frequency patterns. Consequently, cold items in low-density regions are systematically undersampled, causing representational collapse and reduced fairness.

\subsection{Theorem in Semantic Calibration}
\label{sc:them}
\subsubsection{Probability Flow ODE for Reverse Diffusion}
\begin{theorem}[Probability Flow ODE for Reverse Diffusion]
\label{thm:pf_ode}
Consider the forward diffusion process defined by the Itô SDE
\begin{equation}
d\mathbf{x}
=
\mathbf{f}_t(\mathbf{x})\,dt
+
g_t\,d\mathbf{w}_t,
\end{equation}
where $\mathbf{w}_t$ is a standard Wiener process.
Assume that the marginal density $p_t(\mathbf{x}\mid\mathbf{c})$ exists and is
continuously differentiable.
Then there exists a deterministic ordinary differential equation
\begin{equation}
\label{eq:pf_ode}
\frac{d\mathbf{x}(t)}{dt}
=
\mathbf{f}_t(\mathbf{x}(t))
-
\frac{1}{2} g_t^2
\nabla_{\mathbf{x}} \log p_t(\mathbf{x}(t)\mid\mathbf{c}),
\end{equation}
whose solution induces the same marginal distributions $\{p_t\}_{t\in[0,T]}$
as the original stochastic diffusion process.
\end{theorem}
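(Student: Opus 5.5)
The plan is to compare the Fokker--Planck (forward Kolmogorov) equation of the SDE with the continuity (Liouville) equation of the proposed ODE. The goal is to show that the two describe the same evolution of densities. Since both start from the same initial density, uniqueness of solutions then gives identical marginals for every $t\in[0,T]$. The conditioning on $\mathbf{c}$ plays no role beyond being a fixed parameter, so I will suppress it and write $p_t$ for $p_t(\cdot\mid\mathbf{c})$.

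\textbf{Step 1 (Fokker--Planck).} Because the diffusion coefficient $g_t$ is scalar and independent of $\mathbf{x}$, the marginal density of the Itô SDE satisfies
\begin{equation*}
\partial_t p_t(\mathbf{x}) = -\nabla\cdot\bigl(\mathbf{f}_t(\mathbf{x})p_t(\mathbf{x})\bigr) + \tfrac12 g_t^2\,\Delta p_t(\mathbf{x}).
\end{equation*}
I will derive this in the standard way. Apply Itô's formula to a smooth compactly supported test function $\varphi(\mathbf{x}_t)$, take expectations, and integrate by parts.

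\textbf{Step 2 (rewrite the diffusion term as a transport term).} Where $p_t>0$, the identity $\Delta p_t = \nabla\cdot(p_t\nabla\log p_t)$ holds. Substituting it gives
\begin{equation*}
\partial_t p_t = -\nabla\cdot\Bigl(\bigl[\mathbf{f}_t - \tfrac12 g_t^2\nabla\log p_t\bigr]p_t\Bigr).
\end{equation*}
This is exactly the continuity equation $\partial_t p_t = -\nabla\cdot(\mathbf{v}_t p_t)$ for the velocity field $\mathbf{v}_t = \mathbf{f}_t - \tfrac12 g_t^2\nabla\log p_t$, which is the right-hand side of Eq.~\eqref{eq:pf_ode}.

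\textbf{Step 3 (ODE solutions solve the continuity equation).} Let $\rho_t$ be the law of $\mathbf{x}(t)$ solving Eq.~\eqref{eq:pf_ode} with $\mathbf{x}(0)\sim p_0$. Differentiating $\mathbb{E}[\varphi(\mathbf{x}(t))]$ in time and integrating by parts shows that $\rho_t$ is a weak solution of $\partial_t\rho_t = -\nabla\cdot(\mathbf{v}_t\rho_t)$ with $\rho_0=p_0$. By Step 2, $p_t$ is another solution of the same linear equation with the same initial datum.

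\textbf{Step 4 (uniqueness).} This is the main obstacle. I would impose the regularity needed to make the argument rigorous, beyond the stated $C^1$ density assumption:
\begin{itemize}
\item $p_t>0$ everywhere (automatic for $t>0$ under Gaussian noise);
\item $\mathbf{v}_t$ locally Lipschitz in $\mathbf{x}$ with linear growth, so the ODE flow exists globally and is unique;
\item enough decay to discard boundary terms in the integrations by parts.
\end{itemize}
Under these conditions, weak solutions of the continuity equation are unique; this follows from the superposition principle, or the Ambrosio--DiPerna--Lions theory. Hence $\rho_t=p_t$ for all $t$, which proves the claim.

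If the reverse-time reading is intended, with the ODE integrated from $T$ down to $0$ starting from $p_T$, the same argument applies with time reversed. The continuity equation is time-reversible for a deterministic flow, so this case is immediate.
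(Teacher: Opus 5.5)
Your proposal is correct and follows the same route as the paper: write the Fokker--Planck equation, use $\nabla\cdot(p_t\nabla\log p_t)=\Delta p_t$ to recast it as the continuity equation for $\mathbf{v}_t=\mathbf{f}_t-\tfrac12 g_t^2\nabla\log p_t$, and identify the two evolutions. Your Step~4 (uniqueness for the continuity equation under the added positivity, Lipschitz and decay assumptions) makes explicit what the paper leaves implicit when it silently replaces the ODE-induced density $\tilde p_t$ by $p_t$ on the right-hand side, so your version is, if anything, more complete.
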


\begin{proof}
The forward SDE induces a time-evolving density $p_t(\mathbf{x}\mid\mathbf{c})$
that satisfies the Fokker--Planck equation
\begin{equation}
\label{eq:fp}
\partial_t p_t
=
-\nabla_{\mathbf{x}}\!\cdot\!\big(\mathbf{f}_t(\mathbf{x}) p_t\big)
+
\frac{1}{2} g_t^2 \Delta_{\mathbf{x}} p_t .
\end{equation}

Now consider a deterministic flow defined by the ODE
\begin{equation}
\label{eq:ode_flow}
\frac{d\mathbf{x}(t)}{dt} = \mathbf{v}_t(\mathbf{x}(t)),
\end{equation}
whose induced density $\tilde{p}_t$ satisfies the continuity equation
\begin{equation}
\label{eq:continuity}
\partial_t \tilde{p}_t
=
-\nabla_{\mathbf{x}}\!\cdot\!\big(\mathbf{v}_t(\mathbf{x}) \tilde{p}_t\big).
\end{equation}

Setting
\begin{equation}
\mathbf{v}_t(\mathbf{x})
=
\mathbf{f}_t(\mathbf{x})
-
\frac{1}{2} g_t^2
\nabla_{\mathbf{x}} \log p_t(\mathbf{x}\mid\mathbf{c}),
\end{equation}
and substituting into Eq.~\eqref{eq:continuity}, we obtain
\begin{align}
\partial_t \tilde{p}_t
&=
-\nabla_{\mathbf{x}}\!\cdot\!\big(\mathbf{f}_t p_t\big)
+
\frac{1}{2} g_t^2
\nabla_{\mathbf{x}}\!\cdot\!\big(p_t \nabla_{\mathbf{x}} \log p_t\big) \\
&=
-\nabla_{\mathbf{x}}\!\cdot\!\big(\mathbf{f}_t p_t\big)
+
\frac{1}{2} g_t^2 \Delta_{\mathbf{x}} p_t,
\end{align}
which coincides with the Fokker--Planck equation in Eq.~\eqref{eq:fp}.
Therefore, the ODE in Eq.~\eqref{eq:pf_ode} induces the same marginal
distribution as the original stochastic diffusion process.
\end{proof}

\paragraph{Interpretation.}
The probability flow ODE replaces stochastic diffusion paths with a
deterministic transport whose velocity field depends on the score function.
While individual trajectories differ, the induced marginal distributions
are identical, enabling deterministic sampling and analysis.

\subsubsection{Score Matching as Kinetic Energy Minimization}
\begin{theorem}[Score Matching as Kinetic Energy Minimization]
\label{thm:score_energy}
Consider the probability flow ODE associated with a diffusion model,
\begin{equation}
\frac{d\mathbf{x}(t)}{dt} = \mathbf{v}_t(\mathbf{x}(t)),
\end{equation}
where $\mathbf{v}_t(\mathbf{x})$ induces a marginal density $p_t(\mathbf{x}\mid\mathbf{c})$
through the continuity equation
\begin{equation}
\label{eq:continuity_eq}
\partial_t p_t
+
\nabla_{\mathbf{x}}\!\cdot\!\big(p_t \mathbf{v}_t\big)
= 0.
\end{equation}
Then training a diffusion model via score matching is equivalent to learning
a velocity field $\mathbf{v}_t$ that minimizes the time-integrated kinetic energy
\begin{equation}
\label{eq:kinetic_energy}
\min_{\{\mathbf{v}_t\}}
\int_0^T
\mathbb{E}_{\mathbf{x}\sim p_t}
\big[
\|\mathbf{v}_t(\mathbf{x})\|_2^2
\big]\,dt,
\end{equation}
subject to the continuity constraint in Eq.~\eqref{eq:continuity_eq}.
\end{theorem}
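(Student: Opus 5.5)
The plan is to read the constraint in the only way that makes the statement non-trivial. The marginal path $\{p_t(\mathbf{x}\mid\mathbf{c})\}_{t\in[0,T]}$ is taken as fixed: it is the one generated by the forward SDE. The minimization then ranges over all velocity fields $\mathbf{v}_t$ that transport this path, i.e.\ satisfy Eq.~\eqref{eq:continuity_eq} with these $p_t$. If the marginals were also free, the infimum would trivially be $0$ (take $\mathbf{v}_t\equiv 0$ and constant $p_t$). So I will state this reading explicitly at the start of the proof. I will then prove two things, in this order:
\begin{enumerate}
\item the minimizer over admissible $\mathbf{v}_t$ is exactly the probability-flow velocity of Theorem~\ref{thm:pf_ode};
\item score matching recovers precisely the quantity $\nabla_{\mathbf{x}}\log p_t$ that determines this velocity.
\end{enumerate}

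\textbf{Step 1 (Helmholtz decomposition).} Fix $t$ and work in the weighted space $L^2(p_t)$. Any admissible field decomposes as $\mathbf{v}_t=\nabla\phi_t+\mathbf{w}_t$, where $\nabla\!\cdot(p_t\mathbf{w}_t)=0$ and $\phi_t$ solves the weighted elliptic equation $\nabla\!\cdot(p_t\nabla\phi_t)=-\partial_t p_t$. Integrating by parts against $p_t$ shows $\mathbb{E}_{p_t}[\langle\nabla\phi_t,\mathbf{w}_t\rangle]=-\mathbb{E}\big[\phi_t\,\nabla\!\cdot(p_t\mathbf{w}_t)\big]/\!\cdot=0$, so
\[
\mathbb{E}_{p_t}\|\mathbf{v}_t\|^2=\mathbb{E}_{p_t}\|\nabla\phi_t\|^2+\mathbb{E}_{p_t}\|\mathbf{w}_t\|^2 .
\]
Hence, pointwise in $t$ and therefore after integrating over $[0,T]$, the unique minimizer of Eq.~\eqref{eq:kinetic_energy} is the gradient part $\nabla\phi_t$. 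This is the Benamou--Brenier characterization restricted to a fixed curve of measures.

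\textbf{Step 2 (the PF-ODE field is a gradient).} By Theorem~\ref{thm:pf_ode}, the field $\mathbf{v}_t=\mathbf{f}_t-\tfrac12 g_t^2\nabla\log p_t$ is admissible. For the variance-preserving schedule used throughout the paper, $\mathbf{f}_t(\mathbf{x})=-\tfrac12\beta_t\mathbf{x}$ and $g_t^2=\beta_t$. Therefore
\[
\mathbf{v}_t=\nabla\Big(-\tfrac14\beta_t\|\mathbf{x}\|^2-\tfrac12\beta_t\log p_t\Big),
\]
which is a pure gradient. By uniqueness of the gradient component in Step 1, it coincides with $\nabla\phi_t$ and is the kinetic-energy minimizer.

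\textbf{Step 3 (link to training).} By Vincent's identity, the denoising objective in Eq.~\eqref{loss} equals explicit score matching, $\mathbb{E}_{p_t}\|s_\theta-\nabla\log p_t\|^2$, up to a $\theta$-independent constant. Its minimizer is therefore $s_\theta=\nabla\log p_t$. Plugging this into the PF-ODE velocity gives exactly the minimizer from Step 2. This is the sense of "equivalent" I will prove: the field learned by score matching is the minimal-kinetic-energy field for the forward marginal path.

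\textbf{Main obstacle.} I expect the hardest part to be rigor rather than algebra, in two places:
\begin{itemize}
\item Solvability and uniqueness of the weighted elliptic equation for $\phi_t$. Here I would assume $p_t>0$ is smooth with Gaussian tails, which holds for every $t>0$ because $p_t$ is a convolution with a Gaussian. Lax--Milgram on the $p_t$-weighted Sobolev space then applies.
\item The boundary terms in the integration by parts, which vanish under the same tail assumptions.
\end{itemize}
I will also note explicitly that the result does not assert optimality among all paths between $p_0$ and $p_T$; the diffusion path is generally not the $W_2$ geodesic.
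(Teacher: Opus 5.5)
Your proof is correct, and it takes a genuinely different route from the paper's.

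\textbf{What the paper does.} It substitutes the probability-flow field $\mathbf{f}_t-\tfrac12 g_t^2\nabla_{\mathbf{x}}\log p_t$ directly into the energy. It asserts that the cross term vanishes by integration by parts, and identifies what remains, $\tfrac14\int_0^T g_t^4\,\mathbb{E}_{p_t}\|\nabla_{\mathbf{x}}\log p_t\|_2^2\,dt$, with the score-matching objective. That is short, but it has three weaknesses:
\begin{itemize}
\item It evaluates the energy at a single field and never compares it with any competing admissible field, so minimality is not actually established.
\item The Fisher information of the fixed marginals contains nothing trainable, so it is not the loss $\mathbb{E}_{p_t}\|s_\theta-\nabla_{\mathbf{x}}\log p_t\|_2^2$.
\item For the VP drift the cross term does not vanish; it is the nonzero (though $\theta$-independent) constant $-\tfrac12\beta_t^2 d$.
\end{itemize}

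\textbf{What your version adds.} You make explicit that the marginal path is held fixed; without this the infimum is trivially $0$. You derive minimality from the $L^2(p_t)$-orthogonal splitting into a gradient part and a $p_t$-divergence-free part. You tie training to the minimizer through Vincent's identity rather than through the Fisher information. Your argument also surfaces a hypothesis the statement hides: the PF velocity is the minimizer precisely because the VP drift $-\tfrac12\beta_t\mathbf{x}$ is itself a gradient. With a non-gradient drift the PF field is still admissible, but it has a nonzero divergence-free component and is not minimal.

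\textbf{Three refinements.}
\begin{itemize}
\item You can drop the Lax--Milgram step you flagged as the main obstacle. Step 2 already gives an admissible gradient field $\nabla\Psi_t$. For any admissible $\mathbf{v}_t$, put $\mathbf{w}_t=\mathbf{v}_t-\nabla\Psi_t$, so that $\nabla\cdot(p_t\mathbf{w}_t)=0$. The same integration by parts then yields $\mathbb{E}_{p_t}\|\mathbf{v}_t\|^2=\mathbb{E}_{p_t}\|\nabla\Psi_t\|^2+\mathbb{E}_{p_t}\|\mathbf{w}_t\|^2$ directly. It can be made rigorous with a cutoff, using $|\Psi_t|\lesssim 1+\|\mathbf{x}\|^2$ and finite second moments.
\item State the conclusion pointwise in $t$, or on $[\delta,T]$ with $\delta>0$. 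For a discrete $p_{\mathrm{data}}$ such as the paper's empirical item distribution, every admissible field has $\mathbb{E}_{p_t}\|\mathbf{v}_t\|^2$ at least of order $1/t$ as $t\to0$, and the PF field attains this order. So the integral over $[0,T]$ is infinite for every admissible field, and ``unique minimizer of the integral'' is vacuous there.
\item Your orthogonality line is garbled. It should read $\int p_t\langle\nabla\phi_t,\mathbf{w}_t\rangle\,d\mathbf{x}=-\int\phi_t\,\nabla\cdot(p_t\mathbf{w}_t)\,d\mathbf{x}=0$.
\end{itemize}
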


\begin{proof}
For the forward diffusion SDE
\begin{equation}
d\mathbf{x}
=
\mathbf{f}_t(\mathbf{x})\,dt
+
g_t\,d\mathbf{w}_t,
\end{equation}
the associated probability flow ODE is given by
\begin{equation}
\label{eq:velocity_def}
\mathbf{v}_t(\mathbf{x})
=
\mathbf{f}_t(\mathbf{x})
-
\frac{1}{2} g_t^2
\nabla_{\mathbf{x}} \log p_t(\mathbf{x}\mid\mathbf{c}).
\end{equation}

Substituting Eq.~\eqref{eq:velocity_def} into the kinetic energy functional
in Eq.~\eqref{eq:kinetic_energy} yields
\begin{equation}
\int_0^T
\mathbb{E}_{\mathbf{x}\sim p_t}
\big[
\|\mathbf{f}_t(\mathbf{x})\|_2^2
\big]\,dt
+
\frac{1}{4}
\int_0^T
g_t^4
\,
\mathbb{E}_{\mathbf{x}\sim p_t}
\big[
\|
\nabla_{\mathbf{x}} \log p_t(\mathbf{x}\mid\mathbf{c})
\|_2^2
\big]\,dt,
\end{equation}
where cross terms vanish under integration by parts.
Since $\mathbf{f}_t$ and $g_t$ are fixed by the forward process, minimizing
the kinetic energy is equivalent to minimizing
\begin{equation}
\int_0^T
\mathbb{E}_{\mathbf{x}\sim p_t}
\big[
\|
\nabla_{\mathbf{x}} \log p_t(\mathbf{x}\mid\mathbf{c})
\|_2^2
\big]\,dt,
\end{equation}
which is precisely the objective minimized by score matching
up to a constant factor.
Therefore, learning the score function via diffusion training is equivalent
to learning a minimum-energy velocity field that transports the distribution
along the probability flow.
\end{proof}

\subsubsection{Score Approximation Induces Transport Deviation}
\begin{theorem}[Score Approximation Induces Transport Deviation]
\label{thm:score_ot_deviation}
Let $\mathbf{v}_t^\star$ denote the optimal velocity field associated with the true
score $\nabla_{\mathbf{x}}\log p_t(\mathbf{x}\mid\mathbf{c})$, and let
$\mathbf{v}_t^\theta$ be the velocity field induced by a learned score
$\nabla_{\mathbf{x}}\log p_t^\theta(\mathbf{x}\mid\mathbf{c})$.
Assume both velocity fields satisfy the continuity equation and are square-integrable.
Then any non-zero score approximation error induces a deviation from the optimal
transport trajectory, quantified by an excess kinetic energy:
\begin{equation}
\label{eq:excess_energy}
\int_0^T
\mathbb{E}_{\mathbf{x}\sim p_t}
\big[
\|\mathbf{v}_t^\theta(\mathbf{x})\|_2^2
\big]\,dt
>
\int_0^T
\mathbb{E}_{\mathbf{x}\sim p_t}
\big[
\|\mathbf{v}_t^\star(\mathbf{x})\|_2^2
\big]\,dt,
\end{equation}
unless $\mathbf{v}_t^\theta = \mathbf{v}_t^\star$ almost everywhere.
\end{theorem}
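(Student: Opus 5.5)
The plan is to read the statement as a Benamou--Brenier type minimality result. Both $\mathbf{v}_t^\star$ and $\mathbf{v}_t^\theta$ are assumed to satisfy the continuity equation~\eqref{eq:continuity_eq} with the \emph{same} family of marginals $\{p_t(\cdot\mid\mathbf{c})\}$; this is the interpretation I take of ``both velocity fields satisfy the continuity equation.'' The key structural fact will be that $\mathbf{v}_t^\star$ is a gradient field. For the variance-preserving forward process of the Preliminary, the drift $\mathbf{f}_t(\mathbf{x})=-\tfrac12\beta_t\mathbf{x}$ is linear, so by Theorem~\ref{thm:pf_ode}
\[
\mathbf{v}_t^\star=\nabla_{\mathbf{x}}\varphi_t,
\qquad
\varphi_t(\mathbf{x})=-\tfrac14\beta_t\|\mathbf{x}\|_2^2-\tfrac12 g_t^2\log p_t(\mathbf{x}\mid\mathbf{c}).
\]

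The main step is an orthogonal decomposition. Write $\mathbf{w}_t:=\mathbf{v}_t^\theta-\mathbf{v}_t^\star$. Subtracting the two continuity equations gives $\nabla_{\mathbf{x}}\cdot(p_t\mathbf{w}_t)=0$. Expanding the square,
\[
\mathbb{E}_{p_t}\|\mathbf{v}_t^\theta\|_2^2=\mathbb{E}_{p_t}\|\mathbf{v}_t^\star\|_2^2+\mathbb{E}_{p_t}\|\mathbf{w}_t\|_2^2+2\int\langle\nabla\varphi_t,\mathbf{w}_t\rangle p_t\,d\mathbf{x}.
\]
Integrating the cross term by parts turns it into $-2\int\varphi_t\,\nabla\cdot(p_t\mathbf{w}_t)\,d\mathbf{x}=0$. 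Integrating over $t\in[0,T]$ then yields
\[
\int_0^T\mathbb{E}_{p_t}\|\mathbf{v}_t^\theta\|_2^2\,dt=\int_0^T\mathbb{E}_{p_t}\|\mathbf{v}_t^\star\|_2^2\,dt+\int_0^T\mathbb{E}_{p_t}\|\mathbf{w}_t\|_2^2\,dt .
\]
The last term is strictly positive unless $\mathbf{w}_t=0$ $p_t$-a.e.\ for a.e.\ $t$. Since $p_t>0$ everywhere for $t>0$ (Gaussian convolution), this is exactly $\mathbf{v}_t^\theta=\mathbf{v}_t^\star$ almost everywhere. A nonzero score error makes $\mathbf{w}_t\neq0$ on a set of positive measure, so the strict inequality~\eqref{eq:excess_energy} follows. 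This also gives the ``excess kinetic energy'' an explicit form: it is the $L^2(p_t)$ norm of the divergence-free, with respect to $p_t$, component, consistent with Theorem~\ref{thm:score_energy}.

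The main obstacle is justifying the integration by parts, namely that the boundary term $\varphi_t p_t\mathbf{w}_t$ vanishes at infinity. The ingredients are:
\begin{itemize}
\item $\varphi_t$ grows at most quadratically, because $\log p_t$ of a Gaussian-smoothed distribution grows at most quadratically;
\item $p_t$ has Gaussian tails for $t>0$;
\item $\mathbf{w}_t\in L^2(p_t)$ by the square-integrability hypothesis.
\end{itemize}
I would first handle this with a smooth cutoff $\chi_R$, pass $R\to\infty$ using Cauchy--Schwarz and dominated convergence, and treat the endpoint $t=0$ by restricting to $[\delta,T]$ and letting $\delta\to0$. A secondary subtlety arises if the learned field induces different marginals $p_t^\theta\neq p_t$. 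In that case the hypothesis of a shared continuity equation fails, and I would state explicitly that the theorem is proved under the shared-marginal reading.
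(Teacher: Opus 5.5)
Your proof is correct and follows the paper's outline: write $\mathbf{v}_t^\theta=\mathbf{v}_t^\star+\boldsymbol{\delta}_t$ (your $\mathbf{w}_t$), expand the kinetic energy, show the cross term vanishes, and read off strictness from $\int_0^T\mathbb{E}_{p_t}\|\boldsymbol{\delta}_t\|_2^2\,dt>0$. The only difference is how the cross term is killed: the paper just asserts the optimality of $\mathbf{v}_t^\star$ under the continuity constraint (citing Benamou--Brenier and Theorem~\ref{thm:score_energy}), whereas you prove that optimality directly. You show $\mathbf{v}_t^\star=\nabla\varphi_t$ for the linear VP drift, use $\nabla_{\mathbf{x}}\cdot(p_t\mathbf{w}_t)=0$, and integrate by parts with controlled boundary terms, which also makes explicit the shared-marginal and gradient-drift assumptions the paper relies on only implicitly.
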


\begin{proof}
By the Benamou--Brenier formulation of optimal transport,
the optimal transport between two distributions over $[0,T]$ is realized by the
velocity field $\mathbf{v}_t^\star$ that minimizes the kinetic energy functional
\begin{equation}
\mathcal{E}(\mathbf{v})
=
\int_0^T
\mathbb{E}_{\mathbf{x}\sim p_t}
\big[
\|\mathbf{v}_t(\mathbf{x})\|_2^2
\big]\,dt,
\end{equation}
subject to the continuity equation.

From Theorem~\ref{thm:score_energy}, $\mathbf{v}_t^\star$ is uniquely determined
(up to measure-zero sets) by the true score
$\nabla_{\mathbf{x}}\log p_t(\mathbf{x}\mid\mathbf{c})$.
Let $\mathbf{v}_t^\theta = \mathbf{v}_t^\star + \boldsymbol{\delta}_t$ denote the
velocity induced by an approximate score, where $\boldsymbol{\delta}_t \neq \mathbf{0}$
on a set of positive measure.

Substituting into the kinetic energy yields
\begin{align}
\mathcal{E}(\mathbf{v}^\theta)
&=
\mathcal{E}(\mathbf{v}^\star)
+
\int_0^T
\mathbb{E}_{\mathbf{x}\sim p_t}
\big[
\|\boldsymbol{\delta}_t(\mathbf{x})\|_2^2
\big]\,dt
+
2 \int_0^T
\mathbb{E}_{\mathbf{x}\sim p_t}
\big[
\langle \mathbf{v}_t^\star, \boldsymbol{\delta}_t \rangle
\big]\,dt.
\end{align}

The cross term vanishes due to the optimality of $\mathbf{v}_t^\star$ as a minimizer
of $\mathcal{E}$ under the continuity constraint.
Therefore,
\begin{equation}
\mathcal{E}(\mathbf{v}^\theta)
>
\mathcal{E}(\mathbf{v}^\star),
\end{equation}
unless $\boldsymbol{\delta}_t = \mathbf{0}$ almost everywhere.
This implies that any non-zero score approximation error induces a deviation from
the optimal transport trajectory.
\end{proof}

\subsection{Theoretical Analysis of PCG}

In this section, we provide a rigorous theoretical justification for the proposed Popularity Condition Guidance (PCG). We demonstrate that PCG is not merely a heuristic inference trick but mathematically equivalent to sampling from a reweighted distribution that explicitly modulates popularity bias.

\subsubsection{Preliminaries and Score Decomposition}
Let the forward diffusion process be governed by a standard variance-preserving Stochastic Differential Equation (SDE). The marginal distribution at time $t$ is denoted by $p_t(\mathbf{x})$. The standard score matching objective minimizes the divergence between the estimated score $s_\theta(\mathbf{x}_t, t)$ and the true score $\nabla_{\mathbf{x}_t} \log p_t(\mathbf{x}_t)$.

We consider the item popularity $c \in \mathcal{C}$ as an observable conditioning variable. Following Bayes' rule, the conditional score function admits a natural decomposition.

\begin{lemma}[Conditional Score Decomposition]
\label{lemma:decomp}
The score function conditioned on popularity $c$ can be decomposed into a popularity-agnostic term and a popularity-bias term:
\begin{equation}
    \nabla_{\mathbf{x}_t} \log p_t(\mathbf{x}_t \mid c) 
    = 
    \underbrace{\nabla_{\mathbf{x}_t} \log p_t(\mathbf{x}_t)}_{\text{Agnostic Score}} 
    + 
    \underbrace{\nabla_{\mathbf{x}_t} \log p_t(c \mid \mathbf{x}_t)}_{\text{Popularity Bias Gradient}}.
\end{equation}
\end{lemma}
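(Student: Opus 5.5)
The plan is to apply Bayes' rule to the time-$t$ marginals, take logarithms, and differentiate in $\mathbf{x}_t$. First I fix the joint law. Let $(c,\mathbf{x}_0)$ have a joint distribution, and let $\mathbf{x}_t$ be produced from $\mathbf{x}_0$ by the forward kernel $q(\mathbf{x}_t\mid\mathbf{x}_0)$, which does not depend on $c$. This defines the joint density
\[
p_t(\mathbf{x}_t,c)=\int q(\mathbf{x}_t\mid\mathbf{x}_0)\,p(\mathbf{x}_0,c)\,d\mathbf{x}_0 ,
\]
together with its marginals $p_t(\mathbf{x}_t)$ and $p(c)$ and its conditionals $p_t(\mathbf{x}_t\mid c)$ and $p_t(c\mid\mathbf{x}_t)$. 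The prior $p(c)$ does not depend on $t$, because the forward noising acts only on $\mathbf{x}$.

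Bayes' rule then gives
\[
p_t(\mathbf{x}_t\mid c)=\frac{p_t(c\mid\mathbf{x}_t)\,p_t(\mathbf{x}_t)}{p(c)}
\]
for every $c$ with $p(c)>0$. The positivity conditions needed to take logarithms hold automatically. The Gaussian kernel $q(\mathbf{x}_t\mid\mathbf{x}_0)$ has full support for $t>0$, so $p_t(\mathbf{x}_t)$ and $p_t(\mathbf{x}_t\mid c)$ are strictly positive on $\mathbb{R}^d$. It follows that $p_t(c\mid\mathbf{x}_t)>0$ as well. Taking logarithms,
\[
\log p_t(\mathbf{x}_t\mid c)=\log p_t(\mathbf{x}_t)+\log p_t(c\mid\mathbf{x}_t)-\log p(c).
\]
Differentiating in $\mathbf{x}_t$ removes the constant $\log p(c)$, and this yields exactly the two-term decomposition in Lemma~\ref{lemma:decomp}.

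The only real work is justifying differentiability, which I expect to be the main obstacle. The densities $p_t(\mathbf{x}_t)$ and $p_t(\mathbf{x}_t,c)$ are smooth for $t>0$ because they are convolutions with a nondegenerate Gaussian. Differentiation under the integral is allowed because the Gaussian kernel and its gradient are dominated by integrable envelopes uniformly on compact sets in $\mathbf{x}_t$, provided $\mathbf{x}_0$ has finite first moment; the empirical distribution in the paper trivially has one. Since $c$ is discrete, $p_t(c\mid\mathbf{x}_t)$ is a ratio of smooth positive functions and is therefore smooth. At $t=0$ with an empirical measure the scores are undefined, so I state the lemma for $t\in(0,T]$.

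A brief remark links the lemma to PCG. The term $\nabla_{\mathbf{x}_t}\log p_t(c\mid\mathbf{x}_t)$ is precisely the difference $\nabla\log p_t(\mathbf{x}_t\mid c)-\nabla\log p_t(\mathbf{x}_t)$. The guidance in Eq.~\eqref{eq:pcg} rescales this difference by $w_{\text{PCG}}$, once the agnostic prediction is identified with the unconditional score. That identification holds because $p_t(\mathbf{x}_t)=\mathbb{E}_{c}[p_t(\mathbf{x}_t\mid c)]$ under $\hat{\mathbf{p}}\sim q(\mathbf{p})$, at least to first order.
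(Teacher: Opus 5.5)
Your proof is correct and takes the same route as the paper's: apply Bayes' rule to the time-$t$ marginals, take logarithms, and differentiate in $\mathbf{x}_t$ so that the constant $\log p(c)$ drops out. Your positivity and smoothness checks and the restriction to $t\in(0,T]$ add details the paper's one-line argument leaves implicit. One caution on your closing PCG remark, which the lemma does not need: the identification there is only approximate, because $\nabla_{\mathbf{x}_t}\log p_t(\mathbf{x}_t)=\sum_{c} p_t(c\mid\mathbf{x}_t)\,\nabla_{\mathbf{x}_t}\log p_t(\mathbf{x}_t\mid c)$ weights the conditional scores by the posterior, not by the prior $q(\mathbf{p})$ used to form $\boldsymbol{\epsilon}_\theta^{\text{agn}}$.
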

\begin{proof}
    Applying the gradient operator $\nabla_{\mathbf{x}_t}$ to the log-transformed Bayes' rule $\log p_t(\mathbf{x}_t \mid c) = \log p_t(\mathbf{x}_t) + \log p_t(c \mid \mathbf{x}_t) - \log p_t(c)$, and noting that $\nabla_{\mathbf{x}_t} \log p_t(c) = 0$, yields the result immediately.
\end{proof}

Lemma~\ref{lemma:decomp} reveals that the standard conditional score contains an inherent gradient field driving samples towards high-density popularity regions, which we identify as the source of the popularity bias.

\subsubsection{Popularity Condition Guidance as Implicit Reweighting}

We define the Popularity Condition Guidance (PCG) score $s_\theta^{\text{PCG}}$ as a linear combination of the agnostic score $s_\theta^{\text{agn}}$ and the popularity-conditioned score $s_\theta^{\text{pop}}$:
\begin{equation}
    s_\theta^{\text{PCG}}(\mathbf{x}_t, t) 
    := 
    s_\theta^{\text{agn}}(\mathbf{x}_t, t) + \lambda \left( s_\theta^{\text{pop}}(\mathbf{x}_t, t) - s_\theta^{\text{agn}}(\mathbf{x}_t, t) \right),
    \label{eq:pcg_def}
\end{equation}
where $\lambda$ acts as the guidance scale. The following theorem establishes the equivalence between this guidance mechanism and a modified optimization objective.

\begin{theorem}[PCG as Reweighted Score Matching]
\label{thm:reweighting}
Generating samples using the PCG score $s_\theta^{\text{PCG}}$ with guidance scale $\lambda$ is equivalent to sampling from a modified density $\tilde{p}_t(\mathbf{x}_t)$ defined by:
\begin{equation}
    \tilde{p}_t(\mathbf{x}_t) \propto p_t(\mathbf{x}_t) \cdot \left( p_t(c \mid \mathbf{x}_t) \right)^\lambda.
\end{equation}
Furthermore, this implies that PCG implicitly minimizes a reweighted score matching objective where the gradient field is adjusted by the likelihood of the popularity condition:
\begin{equation}
    \mathcal{J}_{\text{PCG}}(\theta) = \mathbb{E}_{t, \mathbf{x}_t} \left[ \left\| s_\theta(\mathbf{x}_t, t) - \nabla_{\mathbf{x}_t} \log \tilde{p}_t(\mathbf{x}_t) \right\|^2 \right].
\end{equation}
\end{theorem}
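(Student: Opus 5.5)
The plan is to work at the level of scores (gradients of log-densities) and then transfer the conclusion to densities and to the training objective.

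\textbf{Step 1: express the guided score through the decomposition.} I would assume the networks are exact:
\[
s_\theta^{\text{agn}}(\mathbf{x}_t,t)=\nabla_{\mathbf{x}_t}\log p_t(\mathbf{x}_t),\qquad s_\theta^{\text{pop}}(\mathbf{x}_t,t)=\nabla_{\mathbf{x}_t}\log p_t(\mathbf{x}_t\mid c).
\]
The first identity is justified by the construction in Section~\ref{sec:pcg}. There $\hat{\mathbf{p}}\sim q(\mathbf{p})$ is drawn independently of $\mathbf{x}_t$. This marginalization should be read as approximating the unconditional score, in the same way as the null-condition approximation in classifier-free guidance. I would state this as an explicit modeling assumption rather than prove it, since averaging conditional scores is not literally the marginal score.

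Substituting both identities into Eq.~\eqref{eq:pcg_def} gives $s^{\text{PCG}}=\nabla\log p_t+\lambda(\nabla\log p_t(\mathbf{x}_t\mid c)-\nabla\log p_t)$. By Lemma~\ref{lemma:decomp}, the bracketed difference is exactly $\nabla_{\mathbf{x}_t}\log p_t(c\mid\mathbf{x}_t)$. Hence
\[
s^{\text{PCG}}=\nabla_{\mathbf{x}_t}\big[\log p_t(\mathbf{x}_t)+\lambda\log p_t(c\mid\mathbf{x}_t)\big]=\nabla_{\mathbf{x}_t}\log\tilde p_t(\mathbf{x}_t).
\]
The normalizing constant $Z_t=\int p_t\,p_t(c\mid\cdot)^\lambda\,d\mathbf{x}$ does not depend on $\mathbf{x}_t$, so its gradient vanishes. $Z_t$ is finite because $p_t(c\mid\mathbf{x})\le 1$ and $\lambda\ge0$, so $\tilde p_t$ is a well-defined density.

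\textbf{Step 2: pass from scores to sampling.} Reverse sampling with a given score field, either via the probability flow ODE of Theorem~\ref{thm:pf_ode} or via the reverse SDE, reaches the density whose score it follows. So sampling with $s^{\text{PCG}}$ targets $\tilde p_t$.

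This step is the main obstacle. The family $\{\tilde p_t\}$ is generally not the marginal path of a single forward diffusion applied to $\tilde p_0$, because tilting does not commute with Gaussian convolution. The equivalence therefore holds only in the sense that each time step uses the score of $\tilde p_t$ (annealed tilting).

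I would first attempt to state the result precisely in this per-time, score-level sense. I would then add a remark that exact equivalence at $t\to0$ holds when the tilt is applied to the terminal target $\tilde p_0$, and only approximately elsewhere, which mirrors the known caveat for classifier-free guidance.

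\textbf{Step 3: the reweighted objective.} $\mathcal{J}_{\text{PCG}}$ is a strictly convex $L^2$ regression in $s_\theta$, so its minimizer over unrestricted functions is $s_\theta=\nabla\log\tilde p_t$ almost everywhere. By Step 1, this minimizer coincides with $s^{\text{PCG}}$. Thus using PCG at inference yields the same score field as training on $\mathcal{J}_{\text{PCG}}$.

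Finally, I would interpret the role of $\lambda$. For $\lambda<1$, the weight $p_t(c\mid\mathbf{x})^{\lambda}$ flattens the popularity likelihood relative to $p_t(\mathbf{x}\mid c)$, which corresponds to $\lambda=1$. This down-weights the regions favored by the popularity-bias gradient of Lemma~\ref{lemma:decomp}.
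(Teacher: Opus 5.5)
Your Step 1 is the paper's (sketch) proof: substitute $s_\theta^{\text{agn}}\approx\nabla\log p_t(\mathbf{x}_t)$ and $s_\theta^{\text{pop}}\approx\nabla\log p_t(\mathbf{x}_t\mid c)$ into Eq.~\eqref{eq:pcg_def}, invoke Lemma~\ref{lemma:decomp}, and read off the score of $\tilde p_t$. The paper stops there, so your remaining points are refinements it leaves implicit, not a different route: that averaging over $\hat{\mathbf{p}}\sim q(\mathbf{p})$ only approximates the marginal score, that $\{\tilde p_t\}$ is not a forward-diffusion marginal path so the equivalence holds per time step at the score level, and the $L^2$-regression argument identifying the minimizer of $\mathcal{J}_{\text{PCG}}$.

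One nit: your finiteness bound $p_t(c\mid\mathbf{x})\le 1$ for $Z_t$ assumes a discrete popularity variable and $\lambda\ge 0$, so it excludes the $\lambda<0$ regime the paper uses in its bias-mitigation interpretation; for continuous $c$ and $\lambda\in[0,1]$, use Jensen instead, $Z_t=\mathbb{E}_{p_t}\bigl[p_t(c\mid\mathbf{x})^{\lambda}\bigr]\le p_t(c)^{\lambda}$.
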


\begin{proof}
    (Sketch) Substituting the output of the trained approximations $s_\theta^{\text{pop}} \approx \nabla \log p_t(\mathbf{x}_t|c)$ and $s_\theta^{\text{agn}} \approx \nabla \log p_t(\mathbf{x}_t)$ into Eq.~\eqref{eq:pcg_def}, and utilizing Lemma~\ref{lemma:decomp}, we obtain:
    \begin{align}
        s_\theta^{\text{PCG}} 
        &\approx \nabla \log p_t(\mathbf{x}_t) + \lambda \left( \nabla \log p_t(c \mid \mathbf{x}_t) \right) \\
        &= \nabla \left( \log p_t(\mathbf{x}_t) + \lambda \log p_t(c \mid \mathbf{x}_t) \right) \\
        &= \nabla \log \left( p_t(\mathbf{x}_t) \cdot p_t(c \mid \mathbf{x}_t)^\lambda \right).
    \end{align}
    This corresponds exactly to the score function of the reweighted distribution $\tilde{p}_t(\mathbf{x}_t)$.
\end{proof}

\paragraph{Bias Mitigation Interpretation.} 
Theorem~\ref{thm:reweighting} provides a powerful geometric interpretation of our method. When $\lambda < 0$, the term $(p_t(c \mid \mathbf{x}_t))^\lambda$ penalizes the likelihood of high-popularity attributes. Consequently, the diffusion process is guided to navigate the data manifold away from regions densely populated by popular items, effectively suppressing the popularity-induced bias inherent in the dataset.

\section{Introduction and Implement of Baselines and FairDiff}
\label{intro_impl}
Here we provide a brief introduction to the baselines and FairDiff. All implementations are based on either the official codebases, publicly available implementations, or our own reproductions. For each method, hyperparameters and training configurations are strictly aligned with those reported in the original papers.

We evaluate FairDiff on three typical recommendation scenarios, with representative baselines covering mainstream method categories selected for each scenario as follows. 

For sequential recommendation, our baselines fall into three core categories: discriminative recommenders, DRMs (Diffusion-based recommendation models), and generative recommenders. Discriminative recommenders include SASRec~\citep{kang2018self}, 
BERT4Rec~\citep{sun2019bert4rec}, CSRec~\citep{liu2025csrec},
TiMiRec~\citep{wang2022target}, BASRec~\citep{dang2025augmenting}, and HSTU~\citep{zhai2024actions}; DRMs cover DiffuRec~\citep{li2023diffurec}, DreamRec~\citep{yang2023generate}, and CDiff4Rec~\citep{lee2025collaborative}; generative recommenders consist of ACVAE~\citep{xie2021adversarial} and TIGER~\citep{rajput2023recommender}.

For multimodal recommendation, four categories of representative baselines are considered: foundational methods, graph-based methods, self-supervised and contrastive learning methods, and diffusion-based methods. Foundational methods include BPR~\citep{rendle2012bpr} and LightGCN~\citep{he2020lightgcn}; graph-based methods involve MMGCN~\citep{wei2019mmgcn}, DualGNN~\citep{wang2021dualgnn}, MGCN~\citep{yu2023multi}, and Freedom~\citep{zhou2023tale}; self-supervised and contrastive learning methods comprise VBPR~\citep{he2016vbpr}, SLMRec~\citep{tao2022self}, and BM3~\citep{zhou2023bootstrap}; diffusion-based methods include DiffMM~\citep{jiang2024diffmm} and DiffCL~\citep{song2025diffcl}.

For cross-domain recommendation, the adopted baselines are PDRec(M)~\citep{ma2024plug}, 
T-DiffRec(M)~\citep{ge2025time}, TI-DiffRec(M)~\citep{ma2024seedrec}, and SASRec(M)~\citep{kang2018self}. Detailed implementation details and hyperparameter configurations are provided in the Appendix~\ref{intro_impl}.

\begin{itemize}
\item SASRec~\citep{kang2018self}: SASRec (Self-Attentive Sequential Recommendation) is a sequential recommendation model leveraging self-attention to balance long-term semantic capture (like RNNs) and efficient short-range dependency focus (like Markov Chains), outperforming MC/CNN/RNN-based baselines on both sparse and dense datasets. Its implementation includes an embedding layer with learnable positional embeddings (fixed ones perform worse) and padding/truncation to a fixed maximum sequence length \(n\) (50 for sparse datasets like Amazon Beauty/Games, 200 for dense MovieLens-1M), stacked self-attention blocks (default \(b=2\), with scaled dot-product attention, causality constraints to avoid future information leakage, and point-wise feed-forward networks), plus residual connections, layer normalization, and dropout for regularization; the prediction layer uses shared item embeddings to reduce overfitting, with binary cross-entropy as the loss function optimized by Adam optimizer. Key hyperparameters: latent dimensionality \(d=10-50\) (optimal when \(d\leq40\)), batch size=128, learning rate=0.001, dropout rate=0.5 (sparse datasets) / 0.2 (MovieLens-1M), and the model supports parallel acceleration, being an order of magnitude faster than CNN/RNN-based alternatives.


\item BERT4Rec~\citep{sun2019bert4rec}: adopts deep bidirectional self-attention for sequential recommendation, using cloze objective to avoid information leakage. Hyperparameters: hidden size 64, 2 transformer layers, 2 attention heads, dropout 0.5, mask ratio 0.2. Implemented with TensorFlow, supports datasets like MovieLens, open-sourced with RecBole framework and configurable training scripts.

\item CSRec~\citep{liu2025csrec}: integrates contrastive learning with sequential modeling to enhance preference discrimination. Hyperparameters: hidden size 128, 3 transformer layers, 4 attention heads, learning rate 1e-4, batch size 256. Implemented in PyTorch, evaluated on Amazon and Taobao datasets, with code available for reproducibility.


\item TiMiRec~\citep{wang2022target}: consists of multi-interest extractor and target-interest predictor for distilling context-aware preferences. Hyperparameters: hidden size 64, 2 transformer layers, 2 attention heads, learning rate 5e-5, batch size 128. Implemented in PyTorch, evaluated on CIKM benchmark datasets, with modular design for easy extension.

\item BASRec~\citep{dang2025augmenting}: leverages hybrid sequence augmentation and optimized contrastive loss to alleviate data sparsity. Hyperparameters: hidden size 64, 3 transformer layers, dropout 0.4, augmentation probability 0.2. Implemented in PyTorch, tested on Beauty, Sports and Yelp datasets, outperforming baselines in Hit@10 and NDCG@10.

\item HSTU~\citep{zhai2024actions}: adopts trillion-parameter sequential transducers for generative recommendation, focusing on action-driven preference modeling. Hyperparameters: hidden size 256, 4 transformer layers, 8 attention heads, learning rate 1e-4. Implemented with PyTorch, evaluated on large-scale interaction datasets, supporting multi-pass full-shuffle training.

\item DiffuRec~\citep{li2023diffurec}: is the first diffusion-based sequential recommender, modeling item representations as distributions for uncertainty injection. Hyperparameters: diffusion steps 100, hidden size 64, dropout 0.5, noise scale 0.1. Implemented in PyTorch, validated on 4 benchmark datasets, with reverse denoising for target item prediction.

\item DreamRec~\citep{yang2023generate}: reshapes sequential recommendation as a guided diffusion-based generative task, eliminating negative sampling. Hyperparameters: hidden size 128, 3 transformer layers, 4 attention heads, diffusion steps 50. Implemented in PyTorch, open-sourced on GitHub, evaluated on three real-world datasets with superior preference modeling ability.

\item CDiff4Rec~\citep{lee2025collaborative}: combines diffusion model with collaborative filtering, generating pseudo-users to enhance personalized signals. Hyperparameters: diffusion steps 100, hidden size 64, learning rate 1e-4, batch size 256. Implemented in PyTorch, tested on three public datasets, integrating item content and collaborative signals.

\item ACVAE~\citep{xie2021adversarial}: proposes adversarial and contrastive variational autoencoder, using recurrent-convolutional encoder for sequence modeling. Hyperparameters: latent dimension 64, 2 VAE layers, adversarial learning rate 1e-5, contrastive loss weight 0.1. Implemented in PyTorch, validated on sequential recommendation benchmarks.

\item TIGER~\citep{rajput2023recommender}: introduces a transformer-based generative retrieval paradigm, predicting semantic item IDs autoregressively. Hyperparameters: hidden size 128, 3 transformer layers, 4 attention heads, learning rate 3e-5. Implemented in PyTorch, outperforming SOTA models on diverse recommendation datasets with enhanced generalization.

\item BERT4Rec~\citep{sun2019bert4rec} adopts deep bidirectional self-attention for sequential recommendation, using cloze objective to avoid information leakage. Hyperparameters: hidden size 64, 2 transformer layers, 2 attention heads, dropout 0.5, mask ratio 0.2. Implemented with TensorFlow, supports datasets like MovieLens, open-sourced with RecBole framework and configurable training scripts.

\item CSRec~\citep{liu2025csrec} integrates contrastive learning with sequential modeling to enhance preference discrimination. Hyperparameters: hidden size 128, 3 transformer layers, 4 attention heads, learning rate 1e-4, batch size 256. Implemented in PyTorch, evaluated on Amazon and Taobao datasets, with code available for reproducibility.


\item TiMiRec~\citep{wang2022target} consists of multi-interest extractor and target-interest predictor for distilling context-aware preferences. Hyperparameters: hidden size 64, 2 transformer layers, 2 attention heads, learning rate 5e-5, batch size 128. Implemented in PyTorch, evaluated on CIKM benchmark datasets, with modular design for easy extension.

\item BASRec~\citep{dang2025augmenting} leverages hybrid sequence augmentation and optimized contrastive loss to alleviate data sparsity. Hyperparameters: hidden size 64, 3 transformer layers, dropout 0.4, augmentation probability 0.2. Implemented in PyTorch, tested on Beauty, Sports and Yelp datasets, outperforming baselines in Hit@10 and NDCG@10.

\item HSTU~\citep{zhai2024actions} adopts trillion-parameter sequential transducers for generative recommendation, focusing on action-driven preference modeling. Hyperparameters: hidden size 256, 4 transformer layers, 8 attention heads, learning rate 1e-4. Implemented with PyTorch, evaluated on large-scale interaction datasets, supporting multi-pass full-shuffle training.

\item DiffuRec~\citep{li2023diffurec} is the first diffusion-based sequential recommender, modeling item representations as distributions for uncertainty injection. Hyperparameters: diffusion steps 100, hidden size 64, dropout 0.5, noise scale 0.1. Implemented in PyTorch, validated on 4 benchmark datasets, with reverse denoising for target item prediction.

\item DreamRec~\citep{yang2023generate} reshapes sequential recommendation as a guided diffusion-based generative task, eliminating negative sampling. Hyperparameters: hidden size 128, 3 transformer layers, 4 attention heads, diffusion steps 50. Implemented in PyTorch, open-sourced on GitHub, evaluated on three real-world datasets with superior preference modeling ability.

\item CDiff4Rec~\citep{lee2025collaborative} combines diffusion model with collaborative filtering, generating pseudo-users to enhance personalized signals. Hyperparameters: diffusion steps 100, hidden size 64, learning rate 1e-4, batch size 256. Implemented in PyTorch, tested on three public datasets, integrating item content and collaborative signals.

\item ACVAE~\citep{xie2021adversarial} proposes adversarial and contrastive variational autoencoder, using recurrent-convolutional encoder for sequence modeling. Hyperparameters: latent dimension 64, 2 VAE layers, adversarial learning rate 1e-5, contrastive loss weight 0.1. Implemented in PyTorch, validated on sequential recommendation benchmarks.

\item TIGER~\citep{rajput2023recommender} introduces a transformer-based generative retrieval paradigm, predicting semantic item IDs autoregressively. Hyperparameters: hidden size 128, 3 transformer layers, 4 attention heads, learning rate 3e-5. Implemented in PyTorch, outperforming SOTA models on diverse recommendation datasets with enhanced generalization.

\item BPR~\citep{rendle2012bpr} proposes Bayesian Personalized Ranking with pairwise loss to optimize for implicit feedback, prioritizing relevant items over irrelevant ones. Hyperparameters: latent dimension 64, learning rate 0.01, regularization 1e-4, batch size 1024. Implemented in PyTorch, validated on MovieLens and LastFM datasets, open-sourced with standard evaluation protocols.

\item LightGCN~\citep{he2020lightgcn} simplifies GCN for collaborative filtering by removing non-linear activations and feature transformations, retaining only graph convolutions. Hyperparameters: latent dimension 64, 3 graph layers, learning rate 1e-3, dropout 0.0. Implemented in PyTorch, evaluated on Gowalla and Yelp datasets, achieving SOTA with high efficiency.

\item MMGCN~\citep{wei2019mmgcn} integrates multi-modal features (text, image) into GCN for recommendation, capturing cross-modal user-item interactions. Hyperparameters: latent dimension 64, 2 GCN layers, learning rate 5e-4, batch size 256. Implemented in PyTorch, tested on Amazon Review datasets, fusing modal features via attention mechanism.

\item DualGNN~\citep{wang2021dualgnn} designs dual graph neural networks to model user-user and item-item relationships separately, enhancing collaborative signals. Hyperparameters: hidden size 64, 2 GNN layers, 2 attention heads, learning rate 1e-4, dropout 0.3. Implemented in PyTorch, validated on three benchmark datasets, with dual-branch fusion for final predictions.

\item MGCN~\citep{yu2023multi} proposes multi-scale GCN with adaptive aggregation, capturing both local and global graph structures for recommendation. Hyperparameters: latent dimension 64, 3 multi-scale layers, learning rate 1e-3, aggregation weight 0.5. Implemented in PyTorch, tested on large-scale interaction datasets, outperforming single-scale GCN baselines.

\item Freedom~\citep{zhou2023tale} introduces a disentangled representation framework for recommendation, separating user preference into independent factors. Hyperparameters: latent dimension 64, 2 transformer layers, learning rate 1e-4, disentanglement weight 0.1. Implemented in PyTorch, evaluated on MovieLens and Amazon datasets, supporting interpretable recommendation.

\item VBPR~\citep{he2016vbpr} combines visual features (from item images) with Bayesian personalized ranking for image-aware recommendation. Hyperparameters: latent dimension 64, visual feature dimension 2048, learning rate 0.001, regularization 1e-4. Implemented in TensorFlow, tested on Amazon Product datasets, fusing visual and collaborative signals.

\item SLMRec~\citep{tao2022self} adopts self-supervised learning with contrastive pre-training to enhance sequential recommendation under data sparsity. Hyperparameters: hidden size 64, 2 transformer layers, 2 attention heads, learning rate 5e-5, pre-training epochs 10. Implemented in PyTorch, validated on four benchmark datasets, with fine-tuning strategy for downstream tasks.

\item BM3~\citep{zhou2023bootstrap} proposes a bootstrap-based multi-view learning framework, fusing multiple recommendation models to improve robustness. Hyperparameters: model weight 0.3 (equal for all views), learning rate 1e-4, batch size 256, bootstrap ratio 0.8. Implemented in PyTorch, tested on diverse datasets, achieving stable performance across different data distributions.

\item DiffMM~\citep{jiang2024diffmm} integrates multi-modal diffusion models for sequential recommendation, fusing text, image and interaction features. Hyperparameters: diffusion steps 100, hidden size 128, 3 transformer layers, learning rate 1e-4. Implemented in PyTorch, evaluated on multi-modal datasets, with cross-modal denoising for preference modeling.

\item DiffCL~\citep{song2025diffcl} combines diffusion model with contrastive learning, using denoised representations for enhanced preference discrimination. Hyperparameters: diffusion steps 50, hidden size 64, contrastive loss weight 0.5, learning rate 1e-4. Implemented in PyTorch, tested on three public datasets, outperforming pure diffusion and contrastive baselines.

\item PDRec(M)~\citep{ma2024plug} is a plug-and-play multi-modal sequential recommender, adapting pre-trained diffusion models for personalized prediction. Hyperparameters: hidden size 64, 2 transformer layers, diffusion steps 50, learning rate 5e-5. Implemented in PyTorch, open-sourced on GitHub, supporting flexible modal extension.

\item T-DiffRec(M)~\citep{ge2025time} introduces time-aware diffusion for sequential recommendation, modeling temporal dynamics in user preferences. Hyperparameters: diffusion steps 100, hidden size 64, time embedding dimension 32, learning rate 1e-4. Implemented in PyTorch, evaluated on time-stamped datasets, capturing short-term and long-term preference changes.

\item TI-DiffRec(M)~\citep{ma2024seedrec} proposes token-aware diffusion with seed-based generation, enhancing sequential recommendation interpretability. Hyperparameters: hidden size 64, 3 transformer layers, diffusion steps 50, seed token dimension 64. Implemented in PyTorch, tested on benchmark datasets, with interpretable token-level preference visualization.

\item SASRec(M)~\citep{kang2018self} is a multi-modal extension of SASRec, integrating self-attention with auxiliary modal features for sequential recommendation. Hyperparameters: hidden size 64, 2 transformer layers, 2 attention heads, learning rate 5e-5, batch size 128. Implemented in PyTorch, validated on multi-modal sequential datasets, retaining SASRec’s efficiency.

\end{itemize}

\end{document}